\PassOptionsToPackage{table,dvipsnames}{xcolor}
\documentclass[accepted]{uai2026} %
\pdfoutput=1

\usepackage[american]{babel}
\newif\iflong
\longfalse

\usepackage{natbib} %
\usepackage{mathtools} %
\usepackage{booktabs} %
\usepackage{tikz} %

\usepackage[mathscr]{euscript}
\usepackage{microtype}
\usepackage{graphicx}
\usepackage{amsmath}
\usepackage{amssymb}
\usepackage{adjustbox}
\usepackage{amsthm,thmtools}
\usepackage[notrig]{physics}
\usepackage{mycommands}
\usepackage{pascal-def}
\usepackage{thm-restate}
\usepackage{hyperref}
\usepackage[capitalise,noabbrev]{cleveref}
\usepackage[lined,boxed, ruled, vlined]{algorithm2e}
\usepackage{subcaption}
\usepackage{wrapfig}
\usepackage{multirow}
\usepackage{multicol}
\usepackage{url}
\allowdisplaybreaks[4]
\newtheorem{theorem}{Theorem}
\newtheorem{lemma}[theorem]{Lemma}
\newtheorem{corollary}[theorem]{Corollary}

\newtheorem*{proposition*}{Proposition}

\newtheorem{definition}[theorem]{Definition}

\newcommand{\ourparagraph}[1]{\textbf{#1\ }}

\newcommand\mytitle{Bound to Disagree: Generalization Bounds via Certifiable Surrogates}
\title{\mytitle}

\author[1]{\href{mailto:<mabaz21@ulaval.ca>?Subject=Your UAI 2026 paper}{Mathieu Bazinet}{}}
\author[1,2]{Valentina Zantedeschi}
\author[1]{Pascal Germain}
\affil[1]{%
Département d'informatique et génie logiciel\\
Universit\'e Laval\\
Québec, Qc, Canada
}
\affil[2]{%
    ServiceNow Research\\
    Montreal, Qc, Canada
}
  
\begin{document}
\maketitle

\begin{abstract}
Generalization bounds for deep learning models are typically vacuous,  not computable or restricted to specific model classes. In this paper, we tackle these issues by providing new disagreement-based certificates for the gap between the true risk of any two predictors. 
We then bound the true risk of the predictor of interest via a surrogate model that enjoys tight generalization guarantees, and by evaluating our disagreement bound on an unlabeled dataset.
We empirically demonstrate the tightness of the obtained certificates and showcase the versatility of the approach by 
training surrogate models leveraging three different frameworks: sample compression, model compression and PAC-Bayes theory. 
Importantly, such guarantees are achieved without modifying the target model, nor adapting the training procedure to the generalization framework.
\end{abstract}
    
\section{Introduction}

Deep neural networks have been the stars of the machine learning field for the last decade. Their empirical performance challenges the once-common wisdom that over-parameterized models should overfit the training data
\citep{zhang2016understanding}. 
This gap between practice and theory calls for refined theoretical frameworks for studying generalization.
The community has mainly turned to statistical learning theory to understand this phenomenon, producing generalization bounds based on the VC dimension \citep{vcdim}, the Rademacher and Gaussian complexities \citep[e.g.,][]{bartlett2002rademacher, pinto2025on}, information theory \citep[e.g.,][]{hellstrom2025generalization}, PAC-Bayes theory \citep{mcallester1998some}, sample compression theory \citep{littlestone1986relating} and model compression theory \citep[e.g.,][]{zhou2018nonvacuous}. 
Despite this breadth of approaches, progress has remained limited: most bounds are vacuous when applied to medium-to-large neural networks or apply only to a modified version of the model rather than the original predictor.

\begin{figure}[!t]
    \centering
    \includegraphics[width=\columnwidth]{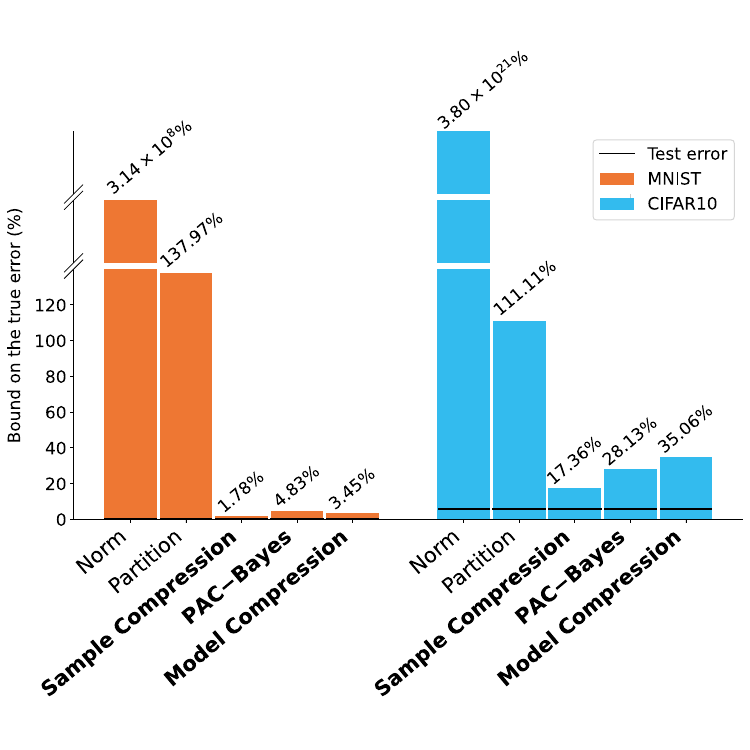}
    \caption{\textbf{Generalization bounds on the true risk of the model.} Comparison between bounds from the literature (Norm-based and Partition-based) and our new disagreement-based bounds, using surrogates from sample compression, PAC-Bayes and model compression theory.}
    \label{fig:summary}
\end{figure}

A recent line of work sidesteps these issues by leveraging a simpler surrogate model with similar behavior to the network of interest. 
The strategy is then to bound the true risk of the original model through its link with the surrogate.
In particular, \citet{hsu2021generalization} and \citet{Suzuki2020Compression} leverage a smaller compressed neural network as a surrogate model, and derive generalization bounds based either on distillation or on the Minkowski difference between the two models.
However, both results depend on universal constants that cannot be computed. 
In contrast, \citet{dziugaite2025size} provide PAC-Bayes generalization bounds by pruning neural networks and finding a ``teacher'' model of smaller size with small risk within them. 
While the resulting bounds are computable, this approach is restricted to neural networks with gated activations and residual connections, limiting its applicability. 
Finally, \citet{leblanc2025framework} recently presented a new approach for stochastic surrogates via a conditional decomposition of the loss. Unlike our approach, their method doesn't require a supplementary subset of the data, but cannot be used with stochastic neural networks for classification problems with more than two classes.

\newcommand{\gcheck}{\textcolor{OliveGreen}{$\pmb\checkmark$}}
\newcommand{\redx}{\textcolor{red}{$\pmb\times$}}
\newcommand{\noneed}{\gcheck} %
\newcommand{\need}{\redx} %

\begin{table}[!t]
    \centering
        \caption{Certification frameworks comparison. We seek deep neural network bounds that are computable, non-vacuous, and do not require assumptions about the target model. The last column shows whether an additional dataset beyond the train set is required (and should it be labeled or unlabeled). 
        } 
    \begin{adjustbox}{width=\columnwidth}
    \setlength{\tabcolsep}{3pt}
    \begin{tabular}{l@{\hspace{0pt}}ccccc}\toprule
    
     & Computable & Non-vacuous &  No assumptions & No additional data \\ \midrule
        PAC-Bayes & \gcheck & \gcheck&  \redx  & \noneed\\
        Sample Compression & \gcheck& \gcheck & \redx  & \noneed\\
        Model Compression & \gcheck & \gcheck &  \redx & \noneed\\
        VC dimension & \redx & \redx &  \gcheck & \noneed \\
        Information-theoretic & \gcheck & \gcheck &  \redx& \need{} (need labeled data)\\
        Norm-based &\gcheck & \redx &   \gcheck & \noneed\\
        Partition-based & \gcheck & \redx & \gcheck & \noneed \\
        \textbf{Our bound} (Zero-one loss) &\gcheck & \gcheck  & \gcheck & \need{}(need unlabeled data)  \\
        \textbf{Our bound} (Lipschitz loss) &\gcheck & \gcheck  & \gcheck & \need{} (need unlabeled data)  \\
        \textbf{Our bound} (Non-Lipschitz loss)  &\gcheck & \gcheck  & \gcheck & \need{} (need labeled data)  \\\bottomrule
    \end{tabular}
    \end{adjustbox}
    \label{tab:certification_frameworks}
\end{table}

In this paper, we present a framework that fulfills all desiderata simultaneously: 
we provide fully computable, non-vacuous bounds that hold with high probability and apply to any predictor, with no restrictions on architecture, activation functions, training objectives, or optimizer (see \cref{tab:certification_frameworks} for an overview of desiderata fulfilled by each framework). 
Like prior work, we leverage a surrogate model that is simple enough to enjoy tight generalization guarantees on its own, while remaining close enough to the target network that their predictions align on most inputs. The performance gap between the two models is measured through a disagreement bound,
which we evaluate on a small unlabeled dataset not used during training, the only additional requirement of our approach. In comparison to labeled data, acquiring unlabeled data is much cheaper and faster to obtain \citep{liao2021towards}, making this requirement much less stringent than labeled data. 

Our key theoretical contributions are reported in Section~\ref{sec:main}, where we present a sketch of our disagreement bound, before specializing this result to the zero-one loss and to Lipschitz-continuous losses. We further provide generalization certificates for the case where the disagreement term is optimized on the available unlabeled data.

The resulting certificates are much stronger than the other bounds in the literature that are computable and that hold for the target model without modification, as illustrated in \cref{fig:summary}.
Moreover, unlike previous approaches, our guarantees can be derived for virtually any proxy model.
We showcase this versatility in Section~\ref{sec:experiments}, where we apply three different approaches (sample compression, model compression, and PAC-Bayes theory) to train and certify the proxy model, and study various families of neural networks: a CNN on MNIST \citep{lecun1998_mnist}, a ResNet18 \citep{he2016deep} on CIFAR10 \citep{cifar10}, and a DistilBERT \citep{sanh2019distilbert} and a GPT2 \citep{radford2019language} on Amazon polarity \citep{zhang2015character}.   

    \section{Background and Notation}\label{sec:background}

Let $(\bx_1, y_1), \ldots, (\bx_n, y_n)$ be a sequence of $n$ independently and identically distributed (\emph{i.i.d.}) data points sampled from an unknown distribution $\calD$ over $\calX \times \calY$. Let $S = \{(\bx_i, y_i)\}_{i=1}^n$ be the dataset that contains these data points. In this paper, we consider a feature space $\calX \subseteq \R^d$ and a label space $\calY = \{1, 2, \ldots, C\}$ for $C$-class classification tasks.

The hypothesis class $\calH$ contains predictors $h : \calX \to \R^{C}$ with $h(\bx) = (h(\bx)_1, \ldots, h(\bx)_C)$. A learning algorithm $A : \bigcup_{k=1}^{\infty}(\calX \times \calY)^k \to \calH$ outputs a predictor $A(S) \in \calH$ when applied to $S$. Let $\ell : \R^C \times \calY \to [B_{\ell}, T_{\ell}]$ denote a loss function with a range $\lambda_{\ell} = T_{\ell} - B_{\ell}$. 
Given a loss function~$\ell$, the true loss of a predictor $h$ is
\begin{equation*}
    \calL_{\calD}(h) = \E_{(\bx,y) \sim \calD} \ell(h(\bx), y).
\end{equation*}
The true loss cannot be computed, as the distribution $\calD$ is unknown. However, given a dataset $S {\,\sim} \calD^n$, we can compute the empirical loss of a predictor \mbox{$\hatL_{S}(h) = \frac{1}{n} \sum_{i=1}^n \ell(h(\bx_i), y_i).$} When the choice of $h \in \calH$ is dependent on the dataset $S$, the empirical risk becomes a biased estimator of the true risk. Thus, we rely on generalization bounds to upper-bound the true risk.

\begin{definition}
A generalization bound is defined as a high-probability upper-bound on the true risk of a hypothesis $h \in \calH$. With a confidence parameter $\delta$ and a complexity function $\epsilon \!\left(n, \delta\right)$, for some hypothesis $h \in \calH$, we have
    \begin{equation*}
    \Prob_{S \sim \calD^n}\qty(\calL_{\calD}(h) \leq \hatL_{S}(h) + \epsilon \!\left(n, \delta\right))\geq 1-\delta.
\end{equation*}
\end{definition}

In this paper, we are mainly interested in two loss functions: the zero-one loss $\ell^{0\textrm{-}1}$ and the cross-entropy loss $\ell^{\textrm{x-e}}$. Given a predictor $h$ and a pair $(\bx,y)$, the zero-one loss is defined as $\ell^{0\textrm{-}1}(h(\bx), y) = \indicator [\argmax_{j \in \{1, \ldots, C\}} h(\bx)_j \neq y]$, where the indicator function $\indicator[a]$ takes the value of $1$ if the predicate $a$ is true and $0$ otherwise.
This loss has its own notation for the true risk and empirical risk, respectively denoted $R_{\calD}(h)$ and $\widehat{R}_{S}(h)$.

The cross-entropy loss is widely used to train neural networks and has recently become an object of interest in statistical learning theory \citep[e.g.,][]{perez2021tighter, lotfi2023non}.
The cross-entropy loss is defined as $\ell^{\textrm{x-e}}(h(\bx), y) = -\ln \left(\sigma(h(\bx), y)\right)$ with the softmax function  
\begin{equation*}
    \sigma(h(\bx), y) = \frac{\exp\left(h(\bx)_y\right)}{\sum_{c=1}^C \exp\left(h(\bx)_c\right)}.
\end{equation*}
We denote $\boldsigma(f(\bx)) = \left(\sigma(f(\bx), 1), \ldots, \sigma(f(\bx), C)\right)$ the softmax vector. By forcing the softmax to always be greater than some positive constant, it is possible to bound the cross-entropy loss, which otherwise tends to infinity when the softmax tends to zero. To do so, we consider the clamped softmax of \citet{perez2021tighter} (see \cref{def:clamped_softmax}) and the smoothed softmax of \citet{lotfi2023non} (see \cref{def:smooth_softmax}). 

We now review the frameworks we leverage to derive the results in \cref{sec:main} and experiment with in \cref{sec:experiments}.

\subsection{Sample Compression Theory}\label{ss:sample_compression}

Introduced by \citet{littlestone1986relating}, sample compression theory provides generalization guarantees for data-dependent predictors that can be fully described by a small subset of the training data. 
Intuitively, if a model's parameters depend on only a few training points, the model is unlikely to have overfit, and sample compression theory formalizes this intuition by providing an upper-bound of the true risk of the model. 
Some examples of algorithms compatible with the sample compression framework include the support vector machine \citep{boser_svm}, the perceptron \citep{rosenblatt1958perceptron, moran2020perceptron}, the decision tree \citep{shah_sample_2007}, the set covering machine \citep{marchand2002set, marchand2003set, marchand2005learning, shah_margin-sparsity_2005} and Pick-To-Learn \citep{paccagnan_pick_learn_2023,marks2025pick, paccagnan2025pick}. The latter was shown to be very effective at deriving tight deep learning bounds \citep{bazinet2024, comeau2025sample}.

We denote the compression set $S_{\bfi} = \{(\bx_{i}, y_i)\}_{i \in \bfi}$, which is defined using a strictly increasing sequence of $\m$ indices $\bfi = (i_1, \ldots, i_{\m})$. All sequences $\bfi$ belong to $\scriptP(n)$, the set of all the $2^n$ strictly increasing sequences composed of the numbers $1$ through $n$. We denote the complement set $S_{\bfi^c} = S \setminus S_{\bfi}$ with $|\mathbf{i}^c| = n-\m$. The subset of sample-compressed predictors is denoted $\calH_{S} \subseteq \calH$.
A predictor \mbox{$h = A(S)$} is called sample-compressed if there exists a function \mbox{$\scriptR : \bigcup_{m \leq n} (\calX \times \calY)^m \to \calH_S$} and a sequence \mbox{$\bfi \in \scriptP(n)$} such that $\scriptR(S_{\bfi})$ would return the same predictor as the original algorithm ($A(S) = \scriptR(S_{\bfi})$). For any such predictor, the sample-compression bound of \citet{bazinet2024} can be used to upper-bound the true risk of the predictor.
\begin{theorem}[\cite{bazinet2024}]\label{thm:squared_bazinet}
For a distribution $\calD$ over $\calX \times \calY$,
a loss $\ell: \R^C \times \calY \to [B_{\ell}, T_{\ell}]$ and $\delta \in (0,1]$, with probability at least $1-\delta$ over the draw of $S \sim \calD^n$, simultaneously for all compression sets $\bfi \in \scriptP(n)$, we have 
\begin{align*}
\mathcal{L}_{\calD}(\scriptR(S_{\bfi})) \leq \hatL_{S_{\bfi^c}}(\scriptR(S_{\bfi})) + \lambda_{\ell}\sqrt{\frac{1}{2|\!\bfi^c\!|}\log\left(\frac{2\sqrt{|\mathbf{i}^c|}}{P_{n}(\m)\delta}\right)},\\[-8mm]
\end{align*}
with $P_n(\m) = \tfrac{6}{\pi^2}(\m+1)^{-2}\smqty(n \\ \m)^{-1}$.
\end{theorem}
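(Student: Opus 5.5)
The plan is to prove the statement by a union bound over all compression sequences $\bfi \in \scriptP(n)$. For each fixed $\bfi$ we get a concentration inequality on the complement sample $S_{\bfi^c}$, allocate confidence $P_n(\m)\delta$ to it, and check that these weights sum to at most one.

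\textbf{Fixed $\bfi$.} Fix $\bfi$ with $|\bfi| = \m$. Because the data are i.i.d., we may condition on $S_{\bfi}$. The predictor $h = \scriptR(S_{\bfi})$ is then deterministic, and the $n-\m$ points of $S_{\bfi^c}$ remain i.i.d.\ from $\calD$ and independent of $h$. Rescale the loss to $[0,1]$ via $\tilde\ell = (\ell - B_{\ell})/\lambda_{\ell}$, and write $q = \hatL_{S_{\bfi^c}}$ and $p = \calL_{\calD}$ for the rescaled empirical and true losses of $h$. Maurer's inequality gives, for bounded losses with $m = |\bfi^c| \geq 8$,
\[
\E \exp\bigl(m \,\mathrm{kl}(q\,\|\,p)\bigr) \leq 2\sqrt{m}.
\]
For $m<8$ the right-hand side of the stated bound already exceeds $\lambda_{\ell}$, so the bound is trivial there.

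Markov's inequality then yields, with probability at least $1-\delta'$,
\[
\mathrm{kl}(q\,\|\,p) \leq \frac{1}{m}\log\frac{2\sqrt m}{\delta'}.
\]
Pinsker's inequality gives $\mathrm{kl}(q\,\|\,p) \geq 2(q-p)^2$, so $p - q \leq \sqrt{\tfrac{1}{2m}\log\tfrac{2\sqrt m}{\delta'}}$. Multiplying by $\lambda_{\ell}$ undoes the rescaling and gives the claimed form. Since this holds conditionally on $S_{\bfi}$ for every value of $S_{\bfi}$, it also holds unconditionally.

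\textbf{Union bound.} Take $\delta' = P_n(\m)\delta$. The total failure probability is then at most
\[
\sum_{\m=0}^{n}\binom{n}{\m}\cdot\frac{6}{\pi^2}(\m+1)^{-2}\binom{n}{\m}^{-1}\delta = \frac{6}{\pi^2}\sum_{\m=0}^{n}(\m+1)^{-2}\,\delta \leq \delta,
\]
using $\sum_{k\ge1}k^{-2} = \pi^2/6$. This yields simultaneity over all $\bfi \in \scriptP(n)$.

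\textbf{Main obstacle.} The delicate point is justifying the $2\sqrt{m}$ moment bound for a general $[0,1]$-valued loss rather than a Bernoulli one. I would invoke Maurer's convexity argument: the expectation of $\exp(m\,\mathrm{kl})$ is maximized when the losses are Bernoulli with the same mean, which reduces the problem to the binomial case. There one bounds $\sum_k \binom{m}{k}p^k(1-p)^{m-k}e^{m\,\mathrm{kl}(k/m\|p)}$ by $2\sqrt m$ via Stirling estimates. I would cite this step rather than reprove it.

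A secondary subtlety is that the reconstruction function $\scriptR$ must be fixed before the data are drawn. This ensures $h$ depends only on $S_{\bfi}$ and not on which $\bfi$ the algorithm ends up choosing, and the union bound is what handles that data-dependent choice of $\bfi$.
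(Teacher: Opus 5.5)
Your overall route is the same as the one behind this result: fix $\mathbf{i}$, condition on $S_{\mathbf{i}}$, apply Maurer's moment bound and Markov at level $P_n(|\mathbf{i}|)\delta$, take the prior-weighted union bound, then relax with Pinsker. The statement is exactly Pinsker applied to the $\mathrm{kl}$ bound of \cref{thm:bazinet}, whose complexity term $\log\binom{n}{|\mathbf{i}|}+\log\frac{2\sqrt{|\mathbf{i}^c|}}{\frac{6}{\pi^2}(|\mathbf{i}|+1)^{-2}\delta}$ equals $\log\frac{2\sqrt{|\mathbf{i}^c|}}{P_n(|\mathbf{i}|)\delta}$. Your conditioning argument, rescaling and union-bound computation are correct.

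There is one false step: your treatment of $m=|\mathbf{i}^c|<8$. The bound is not trivial there. Take $n=m=7$ with $\mathbf{i}$ empty, so $P_7(0)=6/\pi^2$, and $\delta=0.01$. The square-root term is then
$\sqrt{\tfrac{1}{14}\log\tfrac{2\sqrt{7}\,\pi^2}{6\cdot 0.01}}\approx\sqrt{6.77/14}\approx 0.70$.
With zero empirical loss the right-hand side is therefore $B_{\ell}+0.70\,\lambda_{\ell}<T_{\ell}$. This is a genuine claim that your proof leaves uncovered, and the same happens for $|\mathbf{i}|\le 2$ at $m=7$.

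The repair is easy. After Maurer's convexity reduction, $\mathbb{E}\exp\bigl(m\,\mathrm{kl}(q\,\|\,p)\bigr)\le\xi(m)=\sum_{k=0}^{m}\binom{m}{k}(k/m)^k(1-k/m)^{m-k}$, which does not depend on $p$. The restriction $m\ge 8$ is only needed for the Stirling estimate. Direct computation gives $\xi(1)=2$, $\xi(2)=2.5$, $\xi(3)\approx 2.89$, $\dots$, $\xi(7)\approx 4.02$, all at most $2\sqrt{m}$.

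Simpler still, for this square-root statement you do not need Maurer at all. Hoeffding's inequality gives $p-q\le\sqrt{\frac{1}{2m}\log\frac{1}{\delta'}}$ for every $m\ge 1$. This already implies your per-$\mathbf{i}$ inequality because $2\sqrt{m}\ge 1$; the $2\sqrt{m}$ factor in the statement is only an artifact of deriving it from the $\mathrm{kl}$ version. With either fix, and reading the bound as vacuous when $|\mathbf{i}^c|=0$, your proof is complete.
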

This result is versatile and can be applied to any sample-compressed predictor. In practice, one needs to prove that an algorithm produces sample-compressed models in order to compute the bound. A notable example for deep neural networks is the aforementioned Pick-To-Learn algorithm.
Interestingly, the coreset methods (see \citet{moser2025coreset} for an introduction) also fit this framework: a coreset can be viewed as the compression set in \cref{thm:squared_bazinet} to obtain generalization bounds. To the best of our knowledge, the connection between coreset methods and sample compression has never been investigated.

\subsection{Model Compression Theory}\label{ss:model_compression}

The code length of a model is exploited by  \citet{zhou2018nonvacuous} to measure its complexity: 
if it is possible to reduce the code length of a model, by either decreasing the number of trained parameters or quantizing its weights without performance degradation, then the model should generalize well. In a similar fashion, \citet{arora2018stronger} provided computable bounds for compressed models by studying the robustness of the layers to perturbations of an uncompressed model and relating it to the description length of the model. Following \citet{zhou2018nonvacuous}, \cite{lotfi2022pac} presented a tight bound for model compression based on the universal prior \citep{solomonoff1964formal}, which introduces the Kolmogorov complexity \citep{kolmogorov1965three} of the model into the bound. 
In practice, they perform subspace compression \citep{li2018measuring} and adaptative quantization \citep{han2015deep} to minimize the complexity of the model.
Finally, \citet{lotfi2023non} proposed a hybrid approach between subspace compression and LoRA \citep{hu2022lora} named SubLoRA and extended the generalization bound to real-valued losses, a result that we now present.
Let $l_{\scriptC}(\hat{h})$ denote the code length in bits of a model $\hat{h}$ according to a code $\scriptC$,
and $\calH_{\scriptC}\subseteq \calH$ be the subset of compressed predictors that can be encoded by~$\scriptC$.
\begin{theorem}[\cite{lotfi2023non}]\label{thm:lotfi}
For a distribution~$\calD$ over $\calX \times \calY$, a loss $\ell: \R^C \times \calY \to [B_{\ell}, T_{\ell}]$ and $\delta \in (0,1]$, with probability at least $1-\delta$ over the draw of $S \sim \calD^n$, simultaneously for all $\hat{h} \in \calH_{\scriptC}$, we have 
 \begin{align*}
\mathcal{L}_{\calD}(\hat{h}) \leq  \hatL_{S}(\hat{h}) + \lambda_{\ell}\sqrt{\frac{1}{2n}\qty[l_{\scriptC}(\hat{h})^2 \log 2^{l_{\scriptC}(\hat{h})} + \log\frac{1}{\delta} ]}\,.
\end{align*}
\end{theorem}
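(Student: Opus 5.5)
The plan is the classical Occam/union-bound argument: a fixed-hypothesis Hoeffding inequality, made uniform over the countable class $\calH_{\scriptC}$ by giving each compressed model a share of the confidence budget that decreases with its code length. Since $\calH_{\scriptC}$ is determined by the code $\scriptC$ alone and not by $S$, the prior over it may be chosen before seeing the data.

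First, fix a single $\hat h\in\calH_{\scriptC}$. The losses $\ell(\hat h(\bx_i),y_i)$ are i.i.d. and take values in $[B_\ell,T_\ell]$. Hoeffding's inequality therefore gives, for any $\delta_{\hat h}\in(0,1]$, with probability at least $1-\delta_{\hat h}$,
\[
\calL_{\calD}(\hat h)\le \hatL_S(\hat h)+\lambda_\ell\sqrt{\tfrac{1}{2n}\log\tfrac{1}{\delta_{\hat h}}}.
\]

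Second, allocate the budget $\delta_{\hat h}=\delta\,\pi(\hat h)$ with weights satisfying $\sum_{\hat h\in\calH_{\scriptC}}\pi(\hat h)\le 1$. A union bound then makes the inequality hold simultaneously for all $\hat h$ with probability at least $1-\delta$. The natural choice comes from counting codewords: there are at most $2^{k}$ models with $l_{\scriptC}(\hat h)=k$ bits. I take
\[
\pi(\hat h)=\frac{6}{\pi^2}\,\frac{2^{-l_{\scriptC}(\hat h)}}{l_{\scriptC}(\hat h)^2}.
\]
With this choice $\sum_{\hat h}\pi(\hat h)\le\sum_{k\ge1}\frac{6}{\pi^2}k^{-2}\le1$, and no prefix-freeness of $\scriptC$ is needed. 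If $\scriptC$ is prefix-free, Kraft's inequality lets one use $\pi(\hat h)=2^{-l_{\scriptC}(\hat h)}$ directly, which gives an even smaller term.

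Third, compare the resulting complexity with the one in the statement. We get
\[
\log\tfrac{1}{\delta_{\hat h}}=l_{\scriptC}(\hat h)\log 2+2\log l_{\scriptC}(\hat h)+\log\tfrac{\pi^2}{6}+\log\tfrac1\delta .
\]
For $l_{\scriptC}(\hat h)=l\ge2$, this is dominated by $l^2\log 2^{l}+\log\frac1\delta=l^3\log 2+\log\frac1\delta$:
\begin{itemize}
\item $l\log 2\le l^3\log 2/2$;
\item $2\log l+\log(\pi^2/6)\le l^3\log 2/2$.
\end{itemize}
The case $l\le1$ needs a separate check. In the prefix-free case, the $2^{-l}$ weights give the term $l\log2$, which is $\le l^3\log 2$ for every $l\ge1$, so that route covers it. Otherwise, the constant $\pi^2/6$ can be absorbed by using the weights $2^{-l}\cdot 2^{-l}$, which are summable as $\sum_k 2^k\,2^{-2k}\le1$ and give the term $2l\log 2\le l^3\log2$ for $l\ge 2$. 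Since the bound is monotone in $\log\frac{1}{\delta_{\hat h}}$, the displayed inequality follows.

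The main obstacle is this last bookkeeping step: choosing a prior that is summable without assuming prefix-freeness while keeping the induced $\log(1/\pi(\hat h))$ below the stated term $l_{\scriptC}(\hat h)^2\log 2^{l_{\scriptC}(\hat h)}$ for all code lengths, including the smallest ones. I would first try the $2^{-2l}$ prior and check the small-$l$ cases by hand. The probabilistic part (Hoeffding plus union bound) is routine.
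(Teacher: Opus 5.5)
Your argument (Hoeffding for a fixed model, then a union bound with a data-independent code-length prior) is the same Occam-type route the paper takes for its adapted versions of this bound in \cref{thm:lotfi_bin,thm:lotfi_kl}. There the prior is $\frac{1}{Z}2^{-l}2^{-2\log_2 l}$, with $l = l_{\scriptC}(\hat h)$, and $Z$ is dropped because $Z\le 1$ for a prefix-free code. The term $l^2\log 2^{l}$ in the statement is presumably a mis-typeset $\log(l^2 2^{l}) = l\log 2 + 2\log l$. That is what this prior yields and what appears in \cref{thm:lotfi_kl}. Since $l^3\log 2 \ge l\log 2+2\log l$ for all $l\ge 1$, proving the literal version only proves a weaker claim. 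Your Kraft branch, $\pi(\hat h)=2^{-l}$, is complete and gives $l\log 2$, which is below either reading for every $l\ge1$. The case $l=0$ is trivial, since a prefix-free code with an empty codeword contains only one model.

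The gap is your claim that prefix-freeness is not needed. At $l=1$ neither of your priors works. The prior $\frac{6}{\pi^2}2^{-l}l^{-2}$ gives $\log\frac{\pi^2}{3}\approx 1.19>\log 2$, and $2^{-2l}$ gives $2\log 2>\log 2$, so the planned check of small $l$ by hand will fail. No other prior can fix this within a union bound. Matching the literal term at $l=1$ requires $\pi(\hat h)\ge 1/2$ for every one-bit codeword. A non-prefix-free code that uses both $0$ and $1$ as codewords, plus any further codeword, therefore exceeds the budget $\sum_{\hat h}\pi(\hat h)\le 1$. An empty codeword alongside others would likewise demand $\pi=1$. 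Your $\frac{6}{\pi^2}$ prior also leaves an extra $\log\frac{\pi^2}{6}$, so it does not give the intended $\log(l^2 2^l)$ form either. Drop that branch and assume that $\scriptC$ is prefix-free, as the paper does in \cref{thm:lotfi_bin,thm:lotfi_kl}; Kraft's inequality then finishes the proof.
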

\subsection{PAC-Bayes Theory}

While the previous sections presented inequalities bounding the true risk of a single model, the PAC-Bayes framework principally studies the true risk of stochastic predictors expressed as a distribution over multiple models. Built on the work of \citet{mcallester1998some} and \citet{shawe1997pac}, PAC-Bayes theory was popularized for stochastic neural networks by 
\citet{dziugaite2017computing} and \citet{perez2021tighter}. 
Recent developments include applications to meta-learning \citep[e.g.,][]{guan2022fast, zakerinia2024more, leblanc2025generalization}, deep learning generative networks \citep[e.g.,][]{cherief2022pac, mbacke2023pac, mbacke2023statistical, mbacke2024a} and large language models \citep{su2024mission}. 

In this setting, one is interested in learning a distribution~$Q$ over the set of predictors $\calH$. Starting with a data-independent prior distribution $P$, an algorithm $A'$ takes as input the dataset~$S$ and the prior distribution $P$ and then outputs the posterior distribution $Q= A'(S, \ P)$. 
PAC-Bayes bounds control generalization by penalizing posteriors that deviate from the prior, as typically measured by the Kullback-Leibler ($\KL$) divergence: $$\KL(Q||P) = \E_{h \sim Q} \ln\frac{Q(h)}{P(h)}.$$
The following PAC-Bayesian theorem was first presented by \citet{mcallester2003pac} and tightened by \citet{maurer2004note}.

\begin{theorem}[\citet{mcallester2003pac}]\label{thm:mcallester}
For a distribution~$\calD$ over $\calX \times \calY$, a data-independent prior distribution $P$ over $\calH$, a loss $\ell: \R^C \times \calY \to [B_{\ell}, T_{\ell}]$ and $\delta \in (0,1]$, with probability at least $1-\delta$ over the draw of $S \sim \calD^n$, simultaneously for all $Q$ over $\calH$, we have
\begin{equation*}
\E_{h \sim Q}\! \calL_{\calD}(h) \leq \E_{h \sim Q}\! \hatL_{S}(h) + \lambda_{\ell}\sqrt{\frac{\KL(Q||P) + \log \frac{2\sqrt{n}}{\delta}}{2n}}.
\end{equation*}
\end{theorem}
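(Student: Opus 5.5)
We prove the statement of \cref{thm:mcallester} through the standard change-of-measure route. First we rescale the loss to $[0,1]$ by setting $\tilde\ell = (\ell - B_\ell)/\lambda_\ell$. Since both sides of the inequality are affine in the loss, it suffices to show
\begin{equation*}
\E_{h\sim Q}\tilde\calL_\calD(h) \le \E_{h\sim Q}\hat{\tilde\calL}_S(h) + \sqrt{\tfrac{\KL(Q\|P)+\log\frac{2\sqrt n}{\delta}}{2n}}.
\end{equation*}
Multiplying by $\lambda_\ell$ and adding $B_\ell$ then recovers the claim. We prove the stronger Maurer form
\begin{equation*}
\kl\!\left(\E_Q\hat{\tilde\calL}_S \,\middle\|\, \E_Q\tilde\calL_\calD\right) \le \frac{\KL(Q\|P)+\log\frac{2\sqrt n}{\delta}}{n},
\end{equation*}
where $\kl(q\|p)$ is the binary KL divergence. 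Pinsker's inequality $\kl(q\|p)\ge 2(p-q)^2$ then gives the square-root form.

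The first step is the Donsker--Varadhan change of measure. For any measurable $\phi$ and any $Q\ll P$,
\begin{equation*}
\E_Q\phi \le \KL(Q\|P) + \log \E_P e^{\phi}.
\end{equation*}
We apply it with $\phi(h) = n\,\kl(\hat{\tilde\calL}_S(h)\|\tilde\calL_\calD(h))$. Since $\kl$ is jointly convex, Jensen's inequality gives
\begin{equation*}
n\,\kl\!\left(\E_Q\hat{\tilde\calL}_S \,\middle\|\, \E_Q\tilde\calL_\calD\right) \le \E_Q\phi.
\end{equation*}
If $Q\not\ll P$, the KL term is infinite and the bound holds trivially. Because $P$ is data-independent, the random variable $\E_P e^{\phi}$ does not depend on the choice of $Q$. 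Hence a single Markov bound controls it for all $Q$ simultaneously.

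The second step is to bound the expectation of that quantity. By Fubini and the independence of $P$ from $S$,
\begin{equation*}
\E_S\E_P e^{\phi} = \E_P\,\E_S\, e^{n\,\kl(\hat{\tilde\calL}_S(h)\|\tilde\calL_\calD(h))}.
\end{equation*}
The inner expectation concerns a fixed $h$ with an i.i.d.\ $[0,1]$-valued loss. It is at most $2\sqrt n$ by Maurer's lemma, and this is the main obstacle.

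To prove Maurer's lemma, we first reduce to Bernoulli losses. The map $(z_1,\dots,z_n)\mapsto e^{n\,\kl(\bar z\|p)}$ is convex in each coordinate. So replacing each $[0,1]$ variable by a Bernoulli variable with the same mean can only increase the expectation. In the Bernoulli case we compute directly:
\begin{equation*}
\sum_{k=0}^n \binom nk p^k(1-p)^{n-k} e^{n\,\kl(k/n\|p)} = \sum_{k=0}^n \binom nk \left(\tfrac kn\right)^k\left(1-\tfrac kn\right)^{n-k}.
\end{equation*}
Stirling-type estimates show that this sum is at most $2\sqrt n$ for $n\ge 8$, and the small cases of $n$ can be checked by hand. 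We would cite \citet{maurer2004note} for this computation rather than redo it.

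Finally, Markov's inequality gives that, with probability at least $1-\delta$,
\begin{equation*}
\E_P e^\phi \le \frac{2\sqrt n}{\delta}.
\end{equation*}
Combining this with the first step and dividing by $n$ yields the $\kl$ bound uniformly over $Q$. Pinsker's inequality then gives the theorem.
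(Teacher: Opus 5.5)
Your proof is correct. The paper gives no proof of its own here; it imports the result from \citet{mcallester2003pac} with the tightening of \citet{maurer2004note}, and your argument (Donsker--Varadhan change of measure applied to $n\,\mathrm{kl}(\cdot\|\cdot)$ plus joint convexity, Maurer's $2\sqrt{n}$ exponential-moment bound via the Bernoulli reduction, a single Markov step that holds uniformly over $Q$, then Pinsker), together with your note that the cases $n<8$ must be checked separately (they hold, with equality $2=2\sqrt{1}$ at $n=1$), is exactly the standard proof behind that citation.
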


\subsection{Other Theoretical Frameworks}

There exist several other approaches in statistical learning theory beyond those discussed above. Notably, norm-based generalization bounds \citep{bartlett2017spectrally} are based on the norm of the network's weight matrices as a measure of complexity.
However, these bounds are generally vacuous \citep{galanti2023normbased}, sometimes negatively correlated with generalization \citep{jiang2019fantastic}, or cannot be computed, as they either require the input space to be bounded \citep{golowich2018size} or depend on intractable universal constants \citep{bartlett2002rademacher,lin2019generalization, long2019generalization,wei2019improved, ledent2021norm,pinto2025on}.
Similarly, VC dimension bounds are intractable or vacuous for deep neural networks \citep{vcdim, bartlett1998almost,bartlett2019nearly}.

In recent years, bounds based on information theory gained interest, as the conditional mutual information framework \citep{steinke2020reasoning, hellstrom2020generalization} tends to yield non-vacuous bounds for stochastic gradient descent \citep{hellstrom2022a}.
However, these bounds either hold in expectation or use a super-sample of size $2n$, where the model is trained on a first half of the super-sample and the guarantee upper-bounds the risk on the second half, instead of the true risk.

Finally, \citet{galanti2023comparative} use the effective depth of the neural network as complexity measure. However, their bound is in expectation and depends on multiple assumptions that are not always satisfied in practice. \citet{than2025non} provide vacuous generalization guarantees for trained models that depend on a partition of the input space.

\section{Disagreement-Based Bounds}
\label{sec:main}

As discussed in the previous section, most bounds for deep learning are either vacuous, not computable or hold only for a restricted class of hypotheses. In this section, we present new theoretical results applicable to any machine learning model.
We first formalize the general ideas underpinning our approach by stating a  
{\sc Bound sketch},
which presents the form of our proposed disagreement-based bounds over the gap between the losses of two models,
and show how this can be leveraged for three use cases.
We then instantiate our general scheme to three different types of losses: the zero-one loss (\cref{thm:bin_disag}), Lipschitz-continuous losses (\cref{thm:softmaxloss}) and non-Lipschitz-continuous losses (\cref{thm:fullloss}), a weaker result which requires a labeled dataset for computing the disagreement.

\label{ss:new_disag_bound}

The following {\sc Bound sketch} provides the general scheme of the forthcoming high-probability bounds on the gap between the true losses of two predictors.
We denote $U = \{(\bx_i, \cdot)\}_{i=1}^m$ an unlabeled dataset sampled \emph{i.i.d.} from the distribution $\calD$.

\ourparagraph{\textsc{Bound sketch}.}%
 {\it   Given two predictors $f, h \in \calH$, a loss function $\ell$, and a proper (to be defined) disagreement measure $D_U(f,h; \ell, \delta)$  between $f$ and $h$ over a dataset $U$. The proposed disagreement-based bounds are such that, with probability at least $1{\,-\,}\delta$ over the sampling of $U {\,\sim\,} \calD^m$, we have}
    \begin{equation} \label{eq:conjecture}
        \calL_{\calD}(f) \leq \calL_{\calD}(h) + D_U(f,h; \ell, \delta).
    \end{equation}
From now on, we take $f$ to be the target model, i.e., a model that does not enjoy tight generalization bounds, either because its complexity is too large or because it doesn't fit the required assumptions, and $h$ to be the surrogate model for which tight generalization bounds exist. 
Let us describe three use cases of the proposed disagreement-based bounds. 

\ourparagraph{Use case \#1 : Certifying the target model.}
Let $\overline{\calH} \subseteq \calH$ be a class of certifiable surrogate models, such as the class of sample-compressed predictors $\calH_S$ or the class of compressed models $\calH_{\scriptC}$. Let $Q$ be a distribution over $\overline{\calH}$. 
Suppose that there exists a function $\epsilon \!\left(n, \delta\right)$ bounding the generalization gap of the  proxy model $h$, that is, the following generalization bound on its true risk holds with probability at least $1-\delta$ over the sampling of $S \sim \calD^n$:
\begin{equation} \label{eq:example_bound}
    \forall h \in \overline{\calH} : \calL_{\calD}(h) \leq \hatL_{S}(h) + \epsilon \!\left(n, Q(h)\delta \right).
\end{equation}
Examples of the function $\epsilon(n, Q(h)\delta)$ would be the square-root terms in Theorems~\ref{thm:squared_bazinet}, \ref{thm:lotfi} and \ref{thm:mcallester}, respectively for surrogate models from sample compression, model compression and PAC-Bayes theory.

Then, given a chosen model $h^{\star} \in \overline{\calH}$, combining \cref{eq:conjecture,eq:example_bound} gives that, with probability at least $1-2\delta$ over the sampling of $S \sim \calD^n$ and $U \sim \calD^m$:
\begin{equation}\label{eq:conjecture_2}
    \calL_{\calD}(f) \leq \hatL_{S}(h^{\star}) + \epsilon \! \left(n,  Q(h^{\star})\delta)\right) + D_U \!\left(f,h^{\star}; \ell, \delta \right).
\end{equation}
Note that this type of bounds does not hold simultaneously for all $h \in \overline{\calH}$, but the model $h^{\star}$ can be dependent on the dataset~$S$, notably via an optimization algorithm. 
Furthermore, the model $f$ can also depend on $S$, as its complexity does not appear in the bound of \cref{eq:conjecture_2}. 
When a surrogate model $h^{\star}$ with tight generalization guarantees is obtainable, then this strategy renders a certificate for $f$ whose tightness depends on the disagreement measured by the chosen function $D_U(f,h; \ell, \delta)$. 

\ourparagraph{Use case \#2 : Minimizing the disagreement.}
The second use case of this disagreement bound is to certify the true risk of the target model whilst minimizing the disagreement between the two models. With probability at least $1-2\delta$ over the sampling of $S \sim \calD^n$ and $U \sim \calD^m$, simultaneously for all $h \in \overline{\calH}$, we have
\begin{equation}\label{eq:conjecture_3}
    \calL_{\calD}(f) \leq \hatL_{S}(h) + \epsilon \!\left(n, Q(h)\delta\right) + D_U\!\left(f,h; \ell, Q(h)\delta\right).
\end{equation}
We obtain this result by applying \cref{eq:conjecture} with a union bound over all $h \in \overline{\calH}$, followed by a union bound with \cref{eq:conjecture_2}. Although this added step loosens the bound in principle, it allows the surrogate model to be selected and trained on both the datasets $S$ and $U$. One way to train the surrogate model on the unlabeled set is via model distillation \citep{schmidhuber, hinton2015distilling}, using pseudo-labels produced by the target model.

\ourparagraph{Use case \#3 : Bounding the true risk gap.}
The final use case of this bound is to guarantee that replacing a model $h$ with a new model $f$ with better qualities does not lead to a significant performance drop. 
Such qualities could be that the model is fairness-aware \citep[e.g.,][]{dwork2012fairness, hardt2016equality}, differentially private \citep[e.g.,][]{dwork2006differential, dwork2014algorithmic} or achieves faster inference, either  via model distillation \citep[e.g.,][]{buciluǎ2006model, hinton2015distilling} or model quantization \citep[e.g.,][]{han2015deep, nagel2020up}. Note that bounding the risk gap can lead to useful insights even when both models are too complex to enjoy non-vacuous generalization bounds.

\subsection{Disagreement With the Zero-One Loss}

We present our first result for the zero-one loss. To do so, we build on the work of \citet{yang2024mind}, who presented a theoretical result to compare the reconstruction error of a coreset.
We show that this result can be upper-bounded with high probability via the test set bound of \citet{langford2005tutorial} (see \cref{thm:langford2005}), which is essentially perfectly tight for binomial distributions. With these two results, we can state a disagreement measure for the zero-one loss, which is defined as
\begin{equation*}
    \dzero_U(f, h) {\,=\,} \frac{1}{m}\sum_{i=1}^{m} \indicator\qty[ \argmax_{j} f(\bx_i)_j\neq \argmax_k h(\bx_i)_k]\! .
\end{equation*}
The disagreement measure $\dzero_U$ compares the labels predicted by each model for a given data point. If the disagreement bound between $f$ and $h$ tends to zero, then the models achieve the same true risk. We now present our first result: a disagreement-based bound for the zero-one loss.
The proof of all results are given in Appendix~\ref{app:proofs}.

\begin{restatable}{theorem}{binthm}\label{thm:bin_disag}
For two predictors $h \in \overline{\calH}$ and $f \in \calH$, with probability at least $1-\delta$ over the sampling of $U \sim \calD^m$, we have
\begin{equation*}
     R_{\calD}(f) \leq R_{\calD}(h) + \overline{\text{\emph{Bin}}}\qty(m \dzero_U(f, h), m, \delta),
\end{equation*}
with $\overline{\emph{Bin}}\qty(k,m, \delta) = \argsup_{p \in [0,1]}\{\emph{Bin}(k, m, p) \geq \delta \}$ and $\emph{Bin}(k, m, p) = \sum_{i=0}^k \smqty(m \\ i) p^i (1-p)^{m-i}$.
\end{restatable}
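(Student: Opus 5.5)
The proof splits into a deterministic inequality relating the two true risks to the true disagreement, followed by a binomial tail bound on that disagreement. Write $\hat y_f(\bx) = \argmax_j f(\bx)_j$ and $\hat y_h(\bx) = \argmax_k h(\bx)_k$, and define the true disagreement
\begin{equation*}
d_{\calD}(f,h) = \Prob_{(\bx,y)\sim\calD}\qty(\hat y_f(\bx) \neq \hat y_h(\bx)).
\end{equation*}

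\textbf{Step 1 (deterministic inequality).} We show $R_{\calD}(f) \le R_{\calD}(h) + d_{\calD}(f,h)$, in the spirit of \citet{yang2024mind}. For every pair $(\bx,y)$ we have the pointwise inequality
\begin{equation*}
\indicator[\hat y_f(\bx)\neq y] \le \indicator[\hat y_h(\bx)\neq y] + \indicator[\hat y_f(\bx)\neq \hat y_h(\bx)].
\end{equation*}
If the left side equals $1$, then either $\hat y_h(\bx) \neq y$, or $\hat y_h(\bx) = y \neq \hat y_f(\bx)$, so the two predictions disagree. Taking expectations over $\calD$ gives the claim. No randomness is involved, and the inequality holds for any fixed $f$ and $h$.

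\textbf{Step 2 (concentration of the disagreement).} Take $f$ and $h$ fixed, i.e., independent of $U$; in the intended use they may depend on $S$ but not on $U$. Then the variables $Z_i = \indicator[\hat y_f(\bx_i)\neq \hat y_h(\bx_i)]$, for $i=1,\ldots,m$, are i.i.d.\ Bernoulli with parameter $p^\star = d_{\calD}(f,h)$. Consequently $m\,\dzero_U(f,h) = \sum_i Z_i$ follows $\text{Bin}(m, p^\star)$.

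The disagreement uses only the inputs $\bx_i$, so labels are never needed and $U$ may be unlabeled. This is the one structural point that distinguishes the argument from a plain test-set bound.

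We then invoke the test-set bound of \citet{langford2005tutorial} (\cref{thm:langford2005}), applied to this binomial variable. It gives, with probability at least $1-\delta$ over $U\sim\calD^m$,
\begin{equation*}
p^\star \le \overline{\text{Bin}}(m\,\dzero_U(f,h), m, \delta).
\end{equation*}
If one wants this argued directly: $p \mapsto \text{Bin}(k,m,p)$ is nonincreasing. The bad event $\{p^\star > \overline{\text{Bin}}(K,m,\delta)\}$, with $K = m\,\dzero_U(f,h)$, therefore implies $\text{Bin}(K,m,p^\star) < \delta$. Its probability is at most $\delta$, because the CDF evaluated at its own random variable is stochastically dominated by a uniform variable.

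\textbf{Conclusion.} Combining Step 1 and Step 2 on the same event yields the stated bound. The step requiring the most care is Step 2, specifically matching the hypotheses of \cref{thm:langford2005}. That theorem is phrased for the zero-one loss of a single classifier on labeled data, so it must be applied to the disagreement indicator. The cleanest way is to treat $g(\bx) = \indicator[\hat y_f(\bx)\neq\hat y_h(\bx)]$ as a binary "loss" with target $0$, or simply to reuse the binomial-inversion argument directly.

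We also need independence of $(f,h)$ from $U$. This is why the theorem is stated for fixed predictors, with no uniformity over $h$; uniformity is obtained later by a union bound in use case \#2.
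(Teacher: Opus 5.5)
Your proposal is correct and follows the same route as the paper. The paper likewise bounds the risk gap by the true disagreement: it proves the two-sided version $|R_{\calD}(f)-R_{\calD}(h)|\le \Prob\bigl(\hat y_f(\bx)\neq \hat y_h(\bx)\bigr)$ of \citet{yang2024mind} by essentially your case analysis, then applies \cref{thm:langford2005}, which is stated for generic i.i.d.\ $\{0,1\}$ variables and so applies directly to the disagreement indicators on $U$. One wording fix in your optional direct argument: for a discrete $K$ with CDF $F$, the variable $F(K)$ stochastically \emph{dominates} a uniform variable (i.e.\ $\Prob(F(K)\le t)\le t$), not the reverse, and that is the direction your bound actually needs.
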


This result provides generalization guarantees for the zero-one loss of any target model $f$ by training a surrogate predictor $h$ with tight generalization bounds and a small disagreement with $f$; In the experiments of \cref{sec:experiments}, we do so by bounding the true risk $R_{\calD}(h)$ of the surrogate model using sample compression and model compression guarantees.

By definition, the binomial tail inversion $\overline{\text{\textrm{Bin}}}(k, m, \delta)$ \citep{langford2005tutorial} gives the tightest upper-bound on the error rate of a binomial random variable. To do so, the binomial tail inversion returns the greatest error rate $p$ such that $\text{\textrm{Bin}}(k, m, p)$ the probability of making $k$ or fewer errors out of $m$ samples is greater than $\delta$. As it computes the exact worst-case error rate, there cannot be a tighter bound.

\subsection{Disagreement With Lipschitz Losses}

We now generalize the previous result for Lipschitz-continuous bounded losses. 
An intuitive measure of disagreement between two models is to compare the decision boundary of the models by comparing their output distributions : 
\begin{equation*}
    d_U^{K_{\ell}}(f,h) = \frac{1}{m}\sum_{i=1}^m\|\boldsigma(f(\bx_i)) - \boldsigma(h(\bx_i))\|_1\,.
\end{equation*}
When this measure of disagreement tends to zero, the models predict the same class for each data point sampled from~$\calD$, which means that the models generalize similarly. The following \cref{thm:softmaxloss} uses the Chernoff bound for random variables in the interval unit of \citet{foong2022note} (see \cref{thm:chernoff_bound}) in place of the zero-one loss-based test set bound exploited by \cref{thm:bin_disag}. Furthermore, 
\cref{thm:softmaxloss} provides a bound on a target predictor $f$ given a distribution~$Q$ over a class of surrogate models.
This allows us to apply PAC-Bayes bounds (such as \cref{thm:mcallester}) to the true risk of the surrogate model. Still, the following disagreement-based bound for Lipschitz-continuous $[B_{\ell},T_{\ell}]$-losses can be applied to a single surrogate model by setting $Q$ to a Dirac distribution over a single hypothesis.
\begin{restatable}{theorem}{softmaxloss}\label{thm:softmaxloss}
For a distribution $Q$ over $\calH$, a predictor ${f \in \calH}$,  a Lipschitz loss $\ell : \R^C \times \calY \to [B_{\ell},T_{\ell}]$ with a Lipschitz constant $K_{\ell}$, with probability at least $1-\delta$ over the sampling of $U \sim \calD^m$, we have
\begin{equation*}
    \calL_{\calD}(f) \leq \E_{h \sim Q} \calL_{\calD}(h) + 2K_{\ell}\kl^{-1}\qty(\E_{h \sim Q} \frac{d_U^{K_{\ell}}(f,h)}{2}, \frac{\log\frac{1}{\delta}}{m})
\end{equation*}
\iflong
with $\kl^{-1}(q,\epsilon) = \argsup_{p \in [0,1]} \left\{\kl(q,p) \leq \epsilon\right\}$ and $\kl(q,p) = q \log\tfrac{q}{p} + (1-q)\log\tfrac{1-q}{1-p}$. \else
with $\kl^{-1}(q,\epsilon) = \argsup_{p \in [0,1]} \left\{\kl(q,p) \leq \epsilon\right\}$ and $\kl(q,p) = q \log\tfrac{q}{p} + (1-q)\log\tfrac{1-q}{1-p}$.\fi
\end{restatable}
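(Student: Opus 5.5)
The plan is to reduce the gap $\calL_{\calD}(f) - \E_{h\sim Q}\calL_{\calD}(h)$ to the true expected disagreement, and then bound that by its empirical counterpart on $U$ with the Chernoff bound of \citet{foong2022note} (\cref{thm:chernoff_bound}).

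The first step is deterministic. I interpret the Lipschitz assumption as the loss depending on the output through the softmax vector, with $|\ell(f(\bx),y) - \ell(h(\bx),y)| \le K_{\ell}\|\boldsigma(f(\bx)) - \boldsigma(h(\bx))\|_1$; this is the natural reading given how $d_U^{K_{\ell}}$ is defined. Taking expectations over $(\bx,y)\sim\calD$ and $h\sim Q$ gives
\begin{equation*}
\calL_{\calD}(f) - \E_{h\sim Q}\calL_{\calD}(h) \le K_{\ell}\,\E_{\bx\sim\calD}\,\E_{h\sim Q}\|\boldsigma(f(\bx)) - \boldsigma(h(\bx))\|_1 .
\end{equation*}
The right-hand side no longer involves labels, which is why the unlabeled set $U$ suffices.

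The second step is concentration. Define $Z(\bx) = \tfrac12\E_{h\sim Q}\|\boldsigma(f(\bx)) - \boldsigma(h(\bx))\|_1$. Since both softmax vectors are probability vectors, their $\ell_1$ distance is at most $2$, so $Z \in [0,1]$. Because $Q$ and $f$ are fixed independently of $U$, the values $Z(\bx_1),\ldots,Z(\bx_m)$ are i.i.d. in $[0,1]$. Their empirical mean is $\E_{h\sim Q} d_U^{K_{\ell}}(f,h)/2$, after exchanging the finite sum with the expectation over $Q$. Applying \cref{thm:chernoff_bound} gives, with probability at least $1-\delta$, $\kl(\hat Z, \E Z) \le \log(1/\delta)/m$. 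Inverting yields $\E Z \le \kl^{-1}\bigl(\hat Z, \log(1/\delta)/m\bigr)$. Multiplying by $2K_{\ell}$ and combining with the first step gives the statement.

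The main obstacle I anticipate is checking that \cref{thm:chernoff_bound} really provides the $\kl$ form with $\log(1/\delta)$, and not $\log(2\sqrt m/\delta)$, for variables in $[0,1]$ that are not Bernoulli. I would argue via convexity of the moment generating function: a $[0,1]$ variable has mgf dominated by that of a Bernoulli with the same mean. This reduces the problem to the Chernoff bound for Bernoulli variables, whose tail is $\exp(-m\,\kl)$ for the one-sided deviation. Secondary points are measurability of $h\mapsto\|\cdot\|_1$ under $Q$, and the precise norm in which $\ell$ is Lipschitz. If $\ell$ is instead Lipschitz in the logits, I would assume it is stated in terms of the softmax, as the clamped and smoothed softmax losses are.
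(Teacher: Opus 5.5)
Your proposal is correct and follows the paper's proof essentially step for step: the triangle inequality plus $K_{\ell}$-Lipschitzness with respect to the $\ell_1$ norm of the softmax outputs, the bound $\|\boldsigma(f(\bx))-\boldsigma(h(\bx))\|_1\le 2$ to rescale into $[0,1]$, the one-sided Chernoff bound of \cref{thm:chernoff_bound} applied to $Z(\bx)$, and finally multiplication by $2K_{\ell}$. One wording fix: the one-sided bound yields the event $\E Z \le \kl^{-1}(\hat Z, \log(1/\delta)/m)$, i.e.\ either $\hat Z \ge \E Z$ or $\kl(\hat Z,\E Z)\le \log(1/\delta)/m$; it does not give $\kl(\hat Z,\E Z)\le \log(1/\delta)/m$ unconditionally, which is a two-sided statement costing $\log(2/\delta)$, but your inverted final inequality is exactly right.
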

The above result can be applied to common losses such as the logistic loss, the hinge loss and the Huber loss, which are known to be Lipschitz functions \citep{chinot2018statistical}. The cross-entropy loss is Lipschitz with respect to the softmax output of the model when constraining the entries of the softmax output to be greater than a given positive constant, which is achieved with the clamped or the smoothed softmax. However, the obtained Lipschitz constant $K_{\ell}$ is typically large and renders vacuous bounds. In the following subsection, we present a weaker result that bounds the loss of a model using disagreement on a labeled dataset.

\begin{table*}[!h]
    \begin{minipage}{\columnwidth}
        \centering
    \caption{Generalization bounds for the zero-one loss of the target models according to the approach (and theorem) used to certify the surrogate. The bound values are reported in $\%$. 
    }
    \begin{adjustbox}{width=\columnwidth}
    \begin{tabular}{llcc}
    \toprule
        & Bound (Theorem) & MNIST & CIFAR10\\ \midrule 
        \multirow{ 2}{*}{Baselines}
        & Partition-based (\ref{thm:partition}) & 86.55$\pm$0.53 & 90.58$\pm$0.47\\
        & Norm-based (\ref{thm:norm}) & \small{(3.14$\pm$0.37)$\times 10^{8}$}
        & \small{(3.80$\pm$0.30)$\times 10^{21}$}\\\midrule
        \multirow{ 5}{*}{Our approach}
        &Random Coreset (\ref{thm:test_set_data}) & \textbf{1.79$\pm$0.19} & \textbf{17.36$\pm$0.20}\\
        &Best Coreset (\ref{thm:binom_tail}) & 21.50$\pm$0.83 & 81.72$\pm$0.69\\
        &Pick-To-Learn (\ref{thm:p2l}) & 3.90$\pm$0.09 & 57.13$\pm$2.71\\
        &Model compression (\ref{thm:lotfi_bin}) & 3.45$\pm$0.11 & 35.06$\pm$0.40\\
        & PAC-Bayes (\ref{thm:pbb}) & 4.83$\pm$0.32& 28.13$\pm$1.55\\ 
         \bottomrule
    \end{tabular}
    \label{tab:summary_risk}
    \end{adjustbox}
    \end{minipage}
\hfill
    \begin{minipage}{\columnwidth}
        \centering
    \caption{Generalization bounds for the smoothed cross-entropy loss of the target models according to the approach (and theorem) used to certify the surrogate.
    }
    \begin{adjustbox}{width=\columnwidth}
    \begin{tabular}{llcc}
    \toprule
         &Bound (Theorem) & MNIST & CIFAR10\\ \midrule 
        \multirow{ 2}{*}{Baselines}
        &Partition-based (\ref{thm:partition}) & 7.9546$\pm$0.0475 & 8.3437$\pm$0.0431\\
        & Norm-based & N/A & N/A \\\midrule
        \multirow{ 5}{*}{Our approach}
        &Random Coreset (\ref{thm:test_set_chernoff_bound}) & \textbf{0.0744$\pm$0.0026} & \textbf{0.6858$\pm$0.0089}\\
        &Best Coreset (\ref{thm:bazinet}) & 0.9035$\pm$0.0531 & 5.7076$\pm$0.0970\\
        &Pick-To-Learn (\ref{thm:bazinet}) & 1.2673$\pm$0.0248  & 7.4941$\pm$0.2277 \\
        &Model compression (\ref{thm:lotfi_kl}) & 0.1756$\pm$0.0041 &  1.7851$\pm$0.0223\\
        &PAC-Bayes (\ref{thm:pbb}) & 0.2592$\pm$0.0087& 1.4204$\pm$0.1377\\
         \bottomrule
    \end{tabular}
    \label{tab:summary_loss}
    \end{adjustbox}
    \end{minipage}
\end{table*}

\subsection{Disagreement With Non-Lipschitz Losses}

Let $L = \{(\bx_i, y_i)\}_{i=1}^m \sim \calD^m$ be a small held-out labeled set of data points removed before training $f$. Then, we provide the following disagreement bound for any bounded non-Lipschitz loss. 
\begin{restatable}{theorem}{fullloss}\label{thm:fullloss}
    For a distribution $Q$ over $\calH$, a predictor $f \in \calH$, a loss $\ell : \R^C \times \calY \to [B_{\ell},T_{\ell}]$ with range $\lambda_{\ell} = T_{\ell} - B_{\ell}$, with probability at least $1-\delta$ over the sampling of $L \sim \calD^m$:
    \begin{align*}
        \calL_{\calD}(f) &\leq \E_{h \sim Q} \calL_{\calD}(h) + \lambda_{\ell}\kl^{-1}\qty( \E_{h \sim Q} \frac{d_L(f,h)}{\lambda_{\ell}}, \frac{\log\frac{1}{\delta}}{m}),
    \end{align*}
    with 
    $$d_L(f,h) = \frac{1}{m}\sum_{i=1}^m \left| \ell(f(\bx_i), y_i) -\ell(h(\bx_i),y_i)\right|.$$
\end{restatable}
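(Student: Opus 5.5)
The plan is to reduce the statement to a single application of the Chernoff bound for $[0,1]$-valued random variables of \citet{foong2022note} (\cref{thm:chernoff_bound}), mirroring the argument sketched for \cref{thm:softmaxloss}. First, by linearity of expectation and the triangle inequality, I write, for any fixed $h$,
\begin{equation*}
\calL_{\calD}(f) - \calL_{\calD}(h) = \E_{(\bx,y)\sim\calD}\qty[\ell(f(\bx),y) - \ell(h(\bx),y)] \leq \E_{(\bx,y)\sim\calD} \left|\ell(f(\bx),y) - \ell(h(\bx),y)\right|.
\end{equation*}
Averaging over $h \sim Q$ and using Fubini (everything is bounded) gives $\calL_{\calD}(f) \leq \E_{h\sim Q}\calL_{\calD}(h) + \lambda_\ell\, \E_{(\bx,y)\sim\calD} Z(\bx,y)$. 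Here the random variable is
\begin{equation*}
Z(\bx,y) = \E_{h\sim Q}\frac{\left|\ell(f(\bx),y) - \ell(h(\bx),y)\right|}{\lambda_\ell}.
\end{equation*}
Since $\ell$ takes values in $[B_\ell,T_\ell]$, each absolute difference lies in $[0,\lambda_\ell]$, so $Z \in [0,1]$.

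Second, I observe that $f$ and $Q$ must not depend on $L$. This holds because $L$ is held out before training $f$, and $Q$ is fixed with respect to $L$. Hence $Z(\bx_1,y_1),\ldots,Z(\bx_m,y_m)$ are i.i.d.\ $[0,1]$-valued with mean $\mu = \E_{\calD} Z$. Their empirical mean is exactly $\E_{h\sim Q} d_L(f,h)/\lambda_\ell$, again by exchanging the finite sum and the expectation over $Q$.

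Third, I invoke \cref{thm:chernoff_bound}. With probability at least $1-\delta$ over $L\sim\calD^m$, $\kl(\hat\mu,\mu) \leq \log(1/\delta)/m$ whenever $\hat\mu\le\mu$, and therefore $\mu \leq \kl^{-1}\qty(\hat\mu, \log(1/\delta)/m)$ by the definition of $\kl^{-1}$ as a supremum. If $\hat\mu>\mu$ the inequality is trivial, since $\kl^{-1}(q,\epsilon)\ge q$. Multiplying by $\lambda_\ell$ and plugging into the first step yields the claim.

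There is no real obstacle here. The only point needing care is checking that the exact form of \cref{thm:chernoff_bound} gives the one-sided inverted-$\kl$ statement for bounded (not only Bernoulli) variables with the $\log(1/\delta)$ constant, which is the content of \citet{foong2022note}. I also need to confirm the measurability and integrability conditions that justify swapping $\E_{h\sim Q}$ with both the population and the empirical averages.
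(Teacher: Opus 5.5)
Your proposal is correct and follows the paper's proof essentially step for step. Both use the triangle inequality to bound the gap by $\E_{\calD}\E_{h\sim Q}|\ell(f(\bx),y)-\ell(h(\bx),y)|$, normalize by $\lambda_\ell$ into $[0,1]$, and apply the Chernoff bound of \cref{thm:chernoff_bound} once to the i.i.d.\ variables $Z(\bx_i,y_i)$, whose empirical mean is $\E_{h\sim Q} d_L(f,h)/\lambda_\ell$. Your explicit requirement that $f$ and $Q$ be independent of $L$ is left implicit in the paper and is worth stating, while your case split on $\hat\mu\le\mu$ only re-derives the inverted form that \cref{thm:chernoff_bound} already gives directly.
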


In contrast with \cref{thm:bin_disag,thm:softmaxloss}, the disagreement $d_L$ is not a disagreement between the output of the models.
However, we show in Corollary~\ref{corr:lip} that, for Lipschitz losses with large~$K_{\ell}$ constants, $d_L$ is a proxy for disagreement between the models, as it is upper-bounded by the disagreement between the probability distributions outputted by the models. We instantiate this result for the clamped and smoothed cross-entropy respectively in \cref{ss:lipschitz}.

For both Theorems~\ref{thm:softmaxloss} and \ref{thm:fullloss}, the expectation of the true loss of the surrogate might not be tractable, which is the case for stochastic neural networks 
\citep{perez2021tighter}. Following the Monte Carlo sampling approach of \citet{perez2021tighter}, we use a Chernoff bound \citep{foong2022note} to upper-bound the expectation of both the risks of the surrogate and the disagreement between the surrogate and target models, as described in \cref{app:ss:pb_disag}.

When used with the zero-one loss, $d_L$ is upper-bounded by~$\dzero_U$, leading to a bound on the true risk of the target using PAC-Bayes theory and an unlabeled dataset $U$, which wasn't possible with the test set bound of \citet{langford2005tutorial}.

\section{Experiments}\label{sec:experiments}

\begin{figure*}[!t]
    \centering
    \begin{subfigure}[b]{0.49\textwidth}
        \includegraphics[width=1\linewidth]{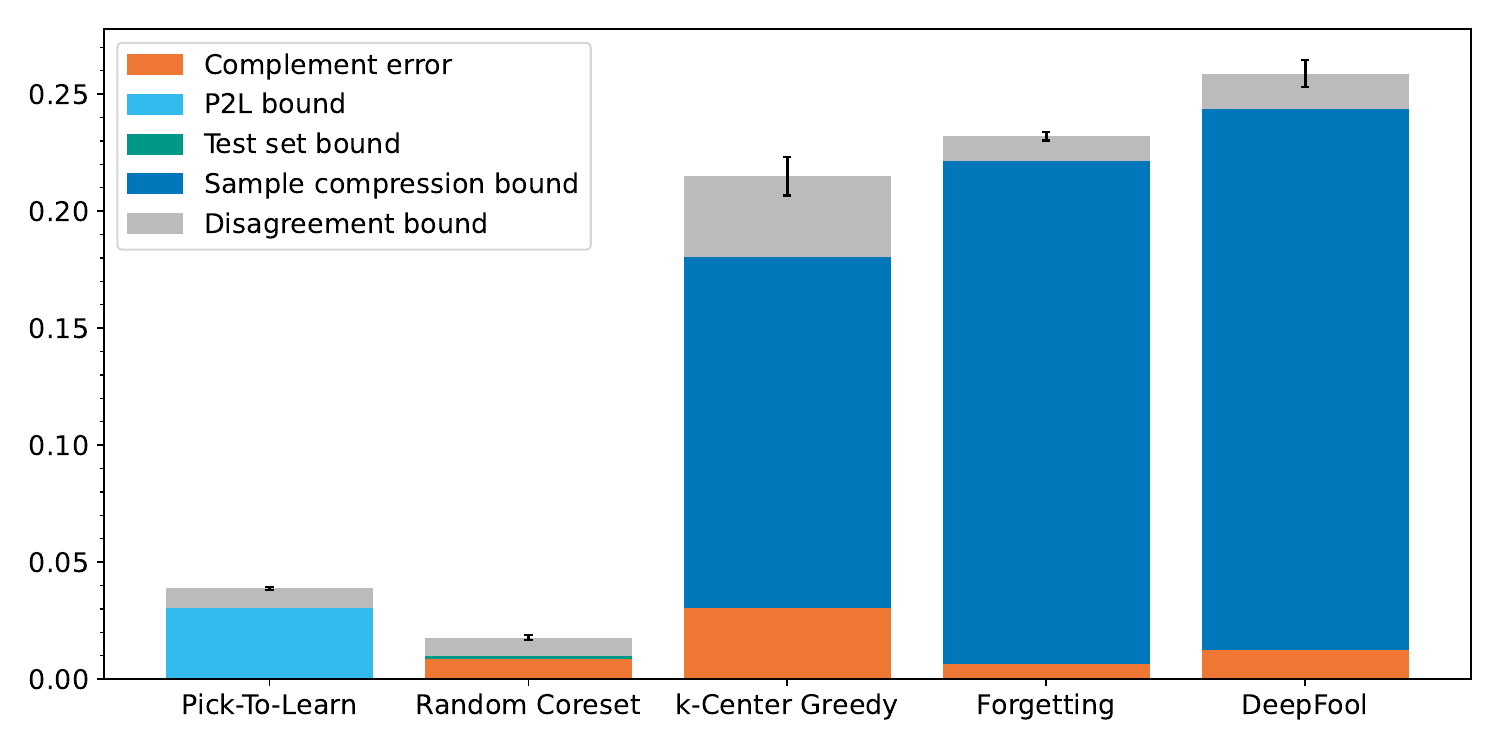}
        \caption{Zero-one loss}
    \end{subfigure}
    \begin{subfigure}[b]{0.49\textwidth}
        \includegraphics[width=1\linewidth]{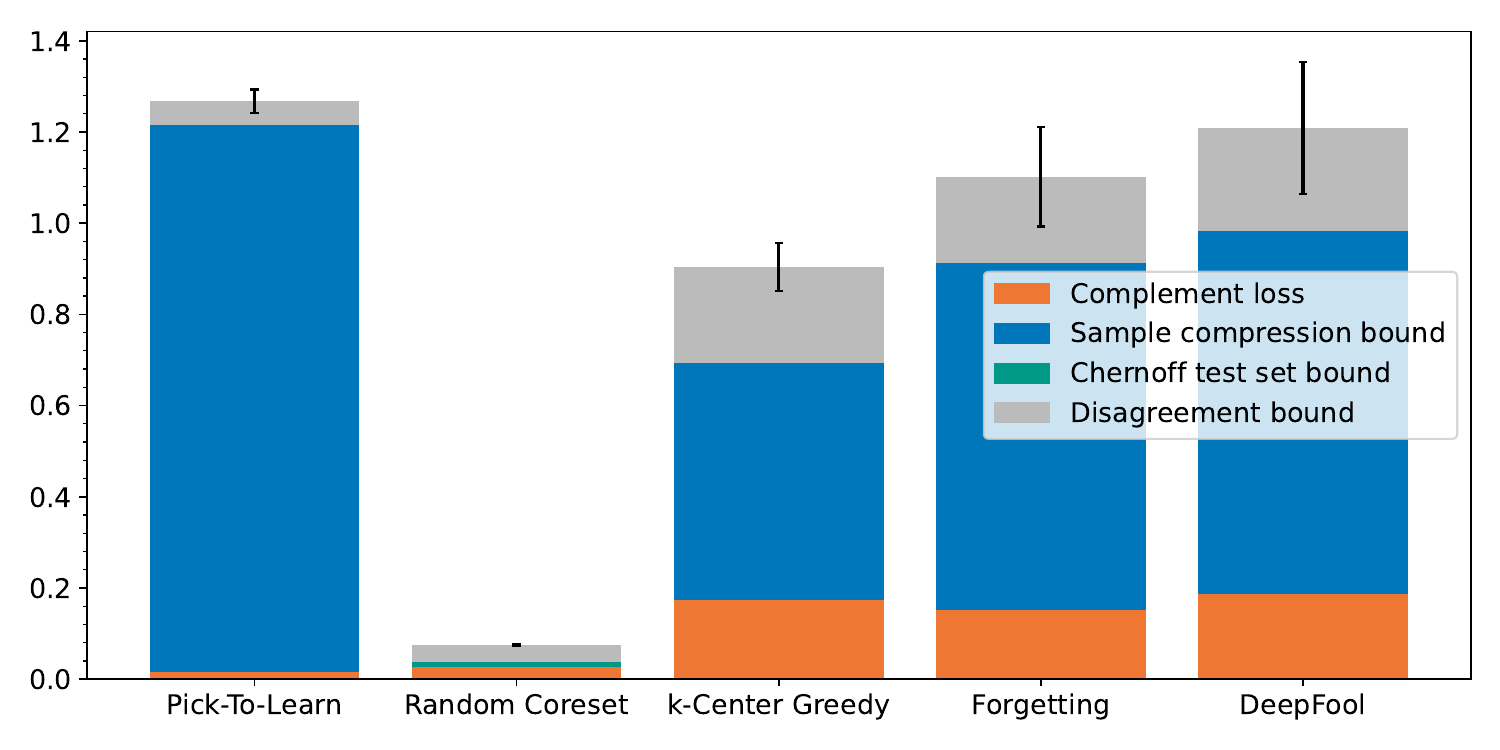}
        \caption{Smoothed cross-entropy loss}
    \end{subfigure}
    \caption{Illustration of the behavior of our disagreement bounds on MNIST using sample-compression methods.}
    \label{fig:bar_graph_mnist}
\end{figure*}

In this section, we demonstrate the versatility and efficacy of our framework by training and certifying a variety of deep neural networks for which tight generalization bounds were previously unavailable.\footnote{Our code is available at \url{https://github.com/GRAAL-Research/Bound-to-Disagree}.} 
In \cref{sec:first_sec_of_exp}, we focus on use case~\#1 and thus leverage \cref{thm:bin_disag} to bound the zero-one loss and \cref{thm:fullloss} to bound the cross-entropy loss of the target network, obtaining a surrogate via sample compression, model compression or PAC-Bayes training. 
In \cref{ss:exp_distillation}, we illustrate use case \#2 by applying \cref{thm:softmaxloss} to the true Huber loss of the target model, while optimizing a compressed surrogate via model distillation. 
Finally, in \cref{ss:exp_gap}, we consider use case \#3 and evaluate the certificate on the gap between the true risks of the target model and its quantized version.

For all experiments, we report the mean and standard deviation over five training-evaluation runs. 
We randomly sample 10\% of the training set to form a validation set.
Of the remaining training set, we further select a disagreement set, corresponding to 20\% of data for MNIST and CIFAR10, and 85\% for Amazon polarity \footnote{15\% corresponds to roughly 500000 data points, more than enough data to train the models on Amazon polarity.}. 
We optimize the smoothed softmax with $\alpha=10^{-3}$, following the result of an ablation study on coreset methods in \cref{sss:ablation}. 
Thus, in all tables and figures, the cross-entropy loss is bounded by $\ln(10^3 C)$, which is approximately $9.21$ with $C=10$. All the bounds presented hold with probability $1-\delta =0.99$. Other hyperparameters are detailed in \cref{app:experiments}.

\subsection{Certifying the Target Model}\label{sec:first_sec_of_exp}

For these experiments, we first train a CNN on MNIST \citep{lecun1998_mnist} and a ResNet18 \citep{he2016deep} on \mbox{CIFAR10} \citep{cifar10} as target models. 
Because these models are deterministic, trained on the whole training set and not quantized, the only generalization bounds of the literature that we can compare with are: 
the \textbf{partition-based bound} of \citet{than2025non} (see \cref{thm:partition}), and \textbf{norm-based bounds}, among which we present the results of \citet{galanti2023normbased} (see \cref{thm:norm}), which is only defined for the zero-one loss. 
Note that for the partition-based bound, contrary to the original paper, we compute the partition on the disagreement set, to obtain non-vacuous results. See \cref{app:partition} for an ablation study on the way of computing the partition. 
Tables~\ref{tab:summary_risk} and~\ref{tab:summary_loss} report a summary of the value of our new disagreement bounds for all the experiments, where we specify  the generalization framework from the literature used to certify the surrogate model. The disagreement bounds displayed were obtained without considering the disagreement by choosing the surrogates with the tightest generalization bounds. 
We observe that both norm-based bounds and partition-based bounds are unsuited to certify the target model both on MNIST and on CIFAR10: 
norm-based bounds are vacuous as cross-entropy maximizes the network's weight norms \citep{galanti2023normbased},
and partition-based bounds are almost trivial, as they estimate a generalization performance close to a random predictor (90\%). Moreover, the latter cannot differentiate between two problems of varying complexities, such as MNIST and CIFAR10.

\begin{table*}[!t]
    \begin{minipage}{\columnwidth}
    \centering
    \caption{Generalization bounds on the zero-one loss achieved on MNIST and CIFAR10 using model compression (MC) bounds. We present the metrics for the surrogate model used to certify the target model. All metrics are in percents (\%), except the size.} 
    \begin{adjustbox}{width=\columnwidth}
    \begin{tabular}{lccccccc}
    \toprule
        \multirow{2}{3em}{Dataset} & \multicolumn{3}{c}{Surrogate model} & \multicolumn{2}{c}{Target model} 
        \\ \cmidrule(l{2pt}r{2pt}){2-4} \cmidrule(l{2pt}r{2pt}){5-6}
        & Test error & Size (KB) & MC Bound & Test error & Our bound \\ \midrule 
MNIST\!\!\!\! &  0.89$\pm$0.08 & 0.04$\pm$0.00 & 2.45$\pm$0.13 & 0.50$\pm$0.07 & 3.45$\pm$0.11 \\
CIFAR10\!\!\!\!\!\! & 14.95$\pm$0.42 & 0.03$\pm$0.00 & 20.15$\pm$0.35 & 5.84$\pm$0.16 & 35.06$\pm$0.40 \\
         \bottomrule
    \end{tabular}
    \label{tab:model_comp_risk}
    \end{adjustbox}
    \end{minipage}
\hfill
    \begin{minipage}{\columnwidth}
    \centering
    \caption{Generalization bounds on the zero-one loss achieved on MNIST and CIFAR10 using PAC-Bayes (PB) bounds. We present the metrics for the surrogate model used to certify the target model. All metrics presented are in percents (\%), except the $\KL$.}
    \begin{adjustbox}{width=\columnwidth}
    \begin{tabular}{lcccccccc}
    \toprule
        \multirow{2}{3em}{Dataset} & \multicolumn{3}{c}{Surrogate model} & \multicolumn{2}{c}{Target model}  \\ \cmidrule(l{2pt}r{2pt}){2-4} \cmidrule(l{2pt}r{2pt}){5-6}
        & Test error & $\KL$ & PB Bound & Test error & Our bound\\ \midrule 
         MNIST\!\!\!\! & 0.82$\pm$0.07 & 4.68$\pm$0.22 & 2.28$\pm$0.13 &   0.50$\pm$0.07  & 4.83$\pm$0.32 \\
CIFAR10\!\!\!\!\!\!  & 11.33$\pm$0.55 & 1.62$\pm$0.92 & 14.04$\pm$0.91 & 5.84$\pm$0.16 & 28.13$\pm$1.55 \\
         \bottomrule
    \end{tabular}
    \label{tab:pbb_risk}
    \end{adjustbox}
    \end{minipage}
\end{table*}

\ourparagraph{Sample compression}
In our first experiment, we use sample-compression bounds to certify the target models on MNIST and CIFAR10. To do so, we train neural networks with the same architecture as the targets using Pick-To-Learn (P2L) \citep{paccagnan_pick_learn_2023} and coreset methods such as Random Coreset, k-Center Greedy \citep{sener2017active}, Forgetting \citep{Forgetting} and DeepFool \citep{ducoffe2018adversarial}.

\cref{fig:bar_graph_mnist} presents the contribution of each term of the bound on MNIST for the different sample-compression methods. The results for CIFAR10 are presented in \cref{fig:bar_graph_cifar10} in \cref{app:experiments}. 
The complement error is added as complementary information, but the bound is not linear in the complement error.
We observe that each method is able to achieve tight generalization bounds for our target models. The tightest bound is achieved by the Random Coreset approach, which is expected as it enjoys a test set bound, which means that the complexity of the random coreset is never accounted for in the bound. In comparison, the other bounds are train set bounds, meaning the coreset method must find a compromise between accuracy and coreset size. Moreover, the Random Coreset approach was found to be a very competitive baseline by \citet{guo2022deepcore}, which can explain that this method achieves a small train and test error.
For the zero-one loss, P2L achieves the second-tightest bound. For the cross-entropy loss, it actually achieves the worst bound of all methods. The complexity of the model learned using P2L is much higher than the coreset methods, however, the P2L bound (\cref{thm:p2l}) still achieves significantly tighter bounds. In \cref{tab:summary_risk,tab:summary_loss}, we report the bound for P2L, Random Coreset and the best (non-random) coreset method. Of all the methods presented, Random Coreset achieves the tightest disagreement bound for our target models. All of the sample compression bounds are significantly tighter than both the norm-based bound and the partition-based bound.

\ourparagraph{Model compression.}
For the model compression experiments, we use the SubLoRA method of \citet{lotfi2023non}. To showcase a different approach that leads to tight certificates, we start by pretraining a CNN on 20\% of the training set for MNIST and a ResNet18 on 50\% of the training set for CIFAR10. We then add a low-rank adapter \citep{hu2022lora} and use the subspace compression method \citep{li2018measuring} implemented by \citet{lotfi2022pac} on the adapter. After training the adapter on the remaining training data, the subspace vector is quantized using adaptative quantization \citep{han2015deep} and encoded using arithmetic encoding \citep{arithemticcoding}. The bound on the surrogate's loss is computed over the set on which the adapter was trained.

In \cref{tab:model_comp_risk}, we report the bound on the zero-one loss for the SubLoRA approach on MNIST and CIFAR10. We report the bound on the cross-entropy loss in \cref{tab:model_comp_loss}. On MNIST, the model compression bounds are the second-tightest bounds reported in \cref{tab:summary_risk}, whilst they are the third-tightest bounds for CIFAR10. It was expected that these bounds would be much tighter than the sample-compression bounds, as pretraining the models tightens significantly the guarantees \citep{ambroladze2006tighter,parrado2012pac, dziugaite_roy_2018}. However, we note that even after an extensive hyperparameter search on CIFAR10, the bound favors a smaller subspace vector, where the model only marginally improves after the pretraining, if at all.

\ourparagraph{PAC-Bayes.}
For the PAC-Bayes experiments, we train stochastic neural networks with the implementation of \citet{perez2021tighter}. We start by pretraining a deterministic neural network, either a CNN on 20\% of the training set for MNIST or a ResNet18 on 60\% or 70\% on the training set of CIFAR10. We then add Gaussian distributions over the weights of the neural network, with the mean and the standard deviation of the distributions as trainable parameters. To compute the bound of \cref{thm:pbb}, we approximate the expectation via Monte Carlo sampling. The full statement of the disagreement loss with the Monte Carlo sampling is presented in \cref{thm:pbb_disag}.

In \cref{tab:pbb_risk}, we report the bound on the zero-one loss for the stochastic neural networks on MNIST and CIFAR10. The results on the cross-entropy loss are reported in \cref{tab:pbb_loss}. As the models were pretrained, they achieve small test errors even with a small $\KL$ divergence, similarly to the model compression bound. In contrast with the previous approaches, the disagreement bounds are actually larger than the compressed model bounds. The approximation via the Monte Carlo sampling and the variance in the stochastic predictor leads to a larger disagreement between the model and its surrogate. However, this approach provides tight generalization bounds and achieves the second-best disagreement bound on CIFAR10. 

\subsection{Minimizing the Disagreement}\label{ss:exp_distillation}
For the second use case of our disagreement bounds, we provide model distillation experiments on Amazon polarity dataset \citep{zhang2015character}. Our target models are a DistilBERT \citep{sanh2019distilbert} and a GPT2 \citep{radford2019language}. Using the target models, we create pseudo-labels for the unlabeled disagreement set and train the surrogate models on both the labeled set $S$ and the unlabeled set $U$. We freeze the pretrained weights of the model (from the Transformers library \citep{wolf2020huggingface}) and add SubLoRA weights \citep{lotfi2023non} on the top half of the model. As the model is trained on both datasets, the bound holds simultaneously for all possible models, following \cref{eq:conjecture_3}. We use the Huber loss (see \cref{def:huber_loss}) with $\delta_{\textrm{H}}=0.2$ to compute \cref{thm:softmaxloss}, which means that we can train the model with vanilla softmax. 

We present the results for the Huber loss in \cref{tab:amazon_distillation_huber}, where both models achieved non-vacuous generalization bounds. The DistilBERT surrogate achieved a test loss very similar to the one of the target model, while the GPT2 surrogate's loss is much higher. This could be explained by the fact that the model is twice as big, although the SubLoRA adapter that achieved the best disagreement bound is the same size (in KB) as the one of DistilBERT. Indeed, as the distribution $Q$ is present twice in \cref{eq:conjecture_3}, the size of the model becomes much more important in the bound minimization than the loss of the model, leading to higher train and test losses. Although the bound is only available for the Huber loss instead of the cross-entropy loss, the model's softmax doesn't have to be modified, the dataset does not need to be labeled and the bound minimizes the actual disagreement between the two models.
We report the results for the zero-one loss in \cref{tab:amazon_distillation}. For both architectures, we are able to provide non-vacuous and non-trivial (less than 50\% for binary classification) generalization bounds for the targets.

\begin{table}[!t]
    \centering
\caption{Generalization bounds on the Huber loss via model distillation on Amazon polarity using model compression bounds. With $\delta_{\textrm{H}} \,{=}\,0.2$, the loss is less than $\delta_{\textrm{H}} - \tfrac{1}{2}\delta_{\textrm{H}}^2 {\,=\,} 0.18$.
    }
    \begin{adjustbox}{width=\columnwidth}
    \begin{tabular}{lcccccccc}
    \toprule
        \multirow{2}{4em}{Dataset} & \multicolumn{3}{c}{Surrogate model} & \multicolumn{2}{c}{Target model} \\ \cmidrule(l{2pt}r{2pt}){2-4} \cmidrule(l{2pt}r{2pt}){5-6}
        & Test loss & Size (KB) & MC Bound & Test loss & Our bound\\ \midrule 
DistilBERT\!\!\!\! & .015$\pm$.000 & 2.02$\pm$0.01 & .027$\pm$.000 & .012$\pm$.000 & .059$\pm$.001 \\
GPT2 &  .035$\pm$.004 & 2.04$\pm$0.43 & .052$\pm$.004 & .010$\pm$.000 & .152$\pm$.015\\
         \bottomrule
    \end{tabular}
    \label{tab:amazon_distillation_huber}
    \end{adjustbox}
\end{table}

\subsection{Bounding the True Risk Gap}\label{ss:exp_gap}

To demonstrate our third use case, we provide quantization experiments on Amazon polarity, with the same target models as the previous section. We quantize the models and compare their disagreement with the unquantized target models.
In contrast to the previous sections, the surrogates are still too large to enjoy non-vacuous model compression bounds, even after quantization.
We can use our disagreement bounds to provide an upper-bound on the gap between the true risk of two models. This will guarantee that although the surrogate is quantized, the models will perform similarly on unseen data points. We present results for models with and without quantization-aware (QA) training.  For QA training, we quantize to 4 bits using TorchAO \citep{or2025torchao} and 8 bits using AO from PyTorch \citep{Ansel_PyTorch_2_Faster_2024}.
For the other experiments, we use half-quadratic quantization (HQQ) \citep{badri2023hqq} to quantize the models.

We report the results in \cref{fig:quantization}.
Without a surprise, fully quantizing a model to 2 bits leads to large performance loss. The tightest disagreement bounds were obtained by using 8 bits with HQQ for both models. 
The QA training with 4 bits achieves a better test error for both models than 8 bit training, leading, however, the DistilBERT model to overfit on the training set. For both DistilBERT and GPT2, respectively without and with quantization-aware training, the 4-bits models are able to achieve disagreement bound of about 2\%, whilst achieving faster inference and using less memory.

\ourparagraph{Remark.} Finding a good surrogate model is the most important step to use our disagreement framework. We provide some recommendations derived from our experience in training the models presented in this section. In general, choosing the same model architecture for both the target and surrogate models leads to very small disagreement between the models, although choosing a smaller model with model compression theory may yield tighter bounds. For sample compression surrogates, we recommend using the random coreset approach or Pick-To-Learn. The random coreset achieved the tightest bound by far with the smallest computational cost, while Pick-To-Learn gives control on the complement error via early stopping. For model compression surrogates, we found that using SubLoRA \citep{lotfi2023non} with a base model pretrained on 20 to 50\% of the dataset works best. Finally, stochastic neural networks with Gaussian distributions are generally harder to train than the other surrogates, but can yield tight generalization bounds when the base model is pretrained on 50 to 70\% of the data.

\begin{figure}
    \centering
    \begin{subfigure}[b]{0.49\columnwidth}
        \includegraphics[width=1\linewidth]{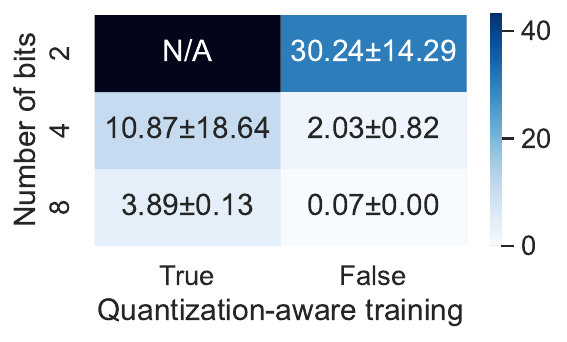}
        \caption{DistilBERT}
    \end{subfigure}
    \begin{subfigure}[b]{0.49\columnwidth}
        \includegraphics[width=1\linewidth]{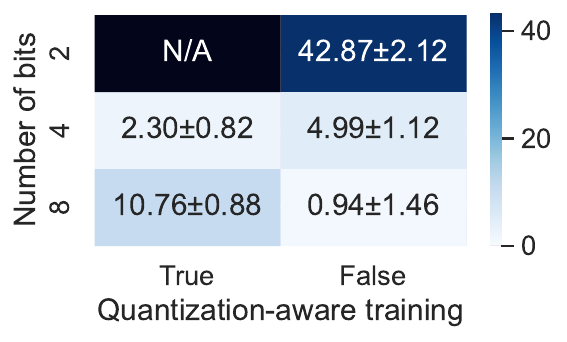}
        \caption{GPT2}
    \end{subfigure}
    \caption{Behavior of the disagreement bound according to the number of bits and the use of QA training.}
    \label{fig:quantization}
\end{figure}

\section{Conclusion}

We proposed new disagreement-based bounds that can be leveraged for any machine learning predictor. These bounds require no assumptions about the model's architecture or the training method. We showed that these bounds can be used with multiple statistical learning theory frameworks to obtain tight generalization bounds, by experimenting on datasets of varying complexity such as MNIST, CIFAR10 and Amazon polarity. 

Despite achieving eloquent improvement over existing bounds in many contexts, our proposed framework also has limitations that deserve to be addressed in future work. Notably, the disagreement-based bounds are only as good as the guarantees used to certify the surrogate models, and finding a surrogate for a very complex model can be challenging. Moreover, the tightest bounds are obtained without minimizing the disagreement on the unlabeled set, giving no control on the disagreement between the target and the surrogate. In particular, the random coreset approach provides little control over the resulting bound, yet yields the tightest bounds of all the approaches. Finally, we believe that this approach may be generalized for regression problems and even possibly to text generation by leveraging the token-level bound of \citet{lotfi2025unlocking}.

\begin{acknowledgements}
        We would like to thank the anonymous reviewers for their helpful comments. We wish to thank Jacob Comeau and Benjamin Leblanc for helping review the paper, Benjamin Guedj, Paul Viallard, Romaric Gaudel and Omar Rivasplata for the insightful discussions that helped elevate this paper and Dario Paccagnan for pointing an error in the P2L bound code before the submission. 
    
        Mathieu Bazinet is supported by a FRQNT B2X scholarship (343192, \url{https://doi.org/10.69777/343192}). Pascal Germain is supported by the NSERC Discovery grant RGPIN-2020-07223. 
\end{acknowledgements}

\bibliography{bibliography}

\newpage
\onecolumn

\title{\mytitle{} \\ (Supplementary Materials)}
\maketitle
\appendix

\longtrue

\setcounter{theorem}{0}
\renewcommand{\thetheorem}{S\arabic{theorem}}
\renewcommand{\thelemma}{S\arabic{theorem}}
\renewcommand{\thecorollary}{S\arabic{theorem}}
\renewcommand{\theproposition}{S\arabic{theorem}}
\renewcommand{\thedefinition}{S\arabic{theorem}}

\setcounter{figure}{0}
\setcounter{table}{0}
\renewcommand{\thefigure}{S\arabic{figure}} 
\renewcommand{\thetable}{S\arabic{table}} 

\section{Definitions and Theoretical Results From the Literature}

\begin{definition}[\cite{perez2021tighter}]\label{def:clamped_softmax} Given a parameter $\alpha \in (0,1)$ and a number of classes $C$, the clamped softmax is defined as
    \begin{equation*}
    \sigma_{\max}(h(\bx),y) = \max\left(\frac{\alpha}{C}, \frac{\exp\left(h(\bx)_y\right)}{\sum_{c=1}^C \exp(h(\bx)_c)}\right),
\end{equation*}
and the cross-entropy loss is bounded with $B_{\ell} = 0$, $T_{\ell} = \ln\frac{C}{\alpha}$ and $\lambda_{\ell} = \ln\frac{C}{\alpha}$.
\end{definition}

\begin{definition}[\cite{lotfi2023non}]\label{def:smooth_softmax}
Given a parameter $\alpha \in (0,1)$ and a number of classes $C$, the smoothed softmax is defined as
    \begin{equation*}
    \sigma_{\emph{smooth}}(h(\bx),y) =  (1-\alpha)\left(\frac{\exp\left(h(\bx)_y\right)}{\sum_{c=1}^C \exp(h(\bx)_c)}\right) + \frac{\alpha}{C}\,,
\end{equation*}
and the cross-entropy loss bounded with $B_{\ell} = \ln \left(1-\alpha + \frac{\alpha}{C}\right)$, $T_{\ell} = \ln\frac{C}{\alpha}$ and $\lambda_{\ell} = \ln \left(1 + (1-\alpha)\frac{C}{\alpha}\right)$.
\end{definition}

\begin{definition}\label{def:huber_loss}
Given $h : \calX \to \R^C$, $y \in \R^C$ and $\delta_{\textrm{H}} > 0$, the Huber loss is defined as 
\begin{align}
    \ell^{\mathrm{Huber}}(h(\bx), y) = \frac{1}{C}\sum_{i=1}^C\begin{cases}
        \frac{1}{2}(\sigma(h'(\bx), i) - y_i)^2 & \text{ if } |\sigma(h'(\bx), i) - y_i| \leq \delta_{\textrm{H}} \\
        \delta_{\textrm{H}}|\sigma(h'(\bx), i) - y_i| - \frac{1}{2}\delta^2_{\textrm{H}} & \text{ otherwise.}
    \end{cases}
\end{align}

As we apply a softmax over the output of the predictor, this loss is bounded by $\delta_{\textrm{H}} - \frac{1}{2}\delta^2_{\textrm{H}}$. This loss is Lipschitz-continuous with $K^{\ell} = \delta_{\textrm{H}}$ \citep{chinot2018statistical}.
\end{definition}

\begin{theorem}[Partition-based bound of \citet{than2025non}]\label{thm:partition}
Let $\Zcal = \calX \times \calY$ and let $\Gamma(\Zcal) = \bigcup_{i=1}^K \Zcal_i$ be a partition of $\Zcal$ into $K$ subsets. Let $n_i = |S \cap \Zcal_i|$ be the number of samples of a dataset $S$ grouped into $\Zcal_i$. Let $\Tcal = \sum_{i=1}^k \indicator[n_i \neq 0]$ be the number of subsets of $\Gamma(\Zcal)$ in which data points from $S$ fall into. For any partition $\Gamma$ into $K$ subsets, for any loss $\ell : \R^C \times \calY \to [0, T_{\ell}]$, for any constants $\gamma \geq 1$ and $\eta \in \left[0, \frac{\gamma n(K + \gamma n)}{K(4n-3)}\right]$, with probability at least $1-\gamma^{-\eta}- \delta$ over the draw of $S \sim \calD^n$, for a given model $h_S$ trained on $S$, we have
    \begin{align*}
        \calL_{\calD}(h_S) \leq \hatL_{S}(h_S) &+ T_{\ell} \sqrt{\eta \ln \gamma}\sqrt{\frac{\gamma}{2n} + \frac{\gamma^2}{2}\sum_{i=1}^K\left(\frac{n_i}{n}\right)^2 + \gamma^2 \sqrt{\frac{2}{n}\ln \frac{2K}{\delta}}} + T_{\ell}(\sqrt{2}+1) \sqrt{\frac{\Tcal\ln\frac{4K}{\delta}}{n}} + \frac{2 T_{\ell}\Tcal\ln\frac{4K}{\delta}}{n}.
    \end{align*}
\end{theorem}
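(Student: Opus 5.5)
This bound is taken from \citet{than2025non}, and the plan below follows the structure of such partition arguments rather than reproducing their proof. The starting point is the standard decomposition used in algorithmic-robustness arguments. Write $p_i = \calD(\Zcal_i)$, $\hat p_i = n_i/n$ and $\mu_i = \E[\ell(h_S(\bx),y)\mid (\bx,y)\in\Zcal_i]$. Let $\hat\mu_i$ be the empirical average of the loss of $h_S$ over the $n_i$ points of $S$ in $\Zcal_i$. Then
\begin{equation*}
\calL_{\calD}(h_S)-\hatL_S(h_S)=\sum_{i=1}^K (p_i-\hat p_i)\,\mu_i+\sum_{i=1}^K \hat p_i\,(\mu_i-\hat\mu_i).
\end{equation*}
The first sum only involves how mass is distributed over the cells. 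The second sum measures how well $h_S$ generalizes \emph{inside} each cell. I will bound the two sums separately and combine them with a union bound, spending $\delta$ on the first and $\gamma^{-\eta}$ on the second.

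\textbf{Step 1: the first (mass) sum.} I use $0\le\mu_i\le T_\ell$ and split the cells into the $\Tcal$ occupied ones and the empty ones.
\begin{itemize}
\item On occupied cells, I apply a Bernstein/Bennett inequality to each $\hat p_i$ and take a union bound over the $K$ cells; this is where $\ln\frac{4K}{\delta}$ comes from. This yields $|p_i-\hat p_i|\le\sqrt{2p_i\ln(4K/\delta)/n}+\ln(4K/\delta)/n$.
\item Summing over the $\Tcal$ occupied cells and applying Cauchy--Schwarz, $\sum_i\sqrt{p_i}\le\sqrt{\Tcal}$, gives the terms $T_\ell(\sqrt2+1)\sqrt{\Tcal\ln(4K/\delta)/n}$ and $2T_\ell\Tcal\ln(4K/\delta)/n$.
\item The empty cells contribute $\sum_{i:n_i=0}p_i\mu_i\le T_\ell\sum_{i:n_i=0}p_i$, which is the missing mass. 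I bound it with the same union bound: for each empty cell, $\hat p_i=0$, so its contribution is absorbed into the occupied-cell count through the $-\hat p_i$ bookkeeping, i.e.\ $\sum_i(p_i-\hat p_i)=0$.
\end{itemize}

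\textbf{Step 2: the second (within-cell) sum.} This is the delicate part, because $h_S$ depends on $S$, so the points inside a cell are not independent of the loss function being averaged. I would first condition on the cell counts $(n_1,\dots,n_K)$. I would then treat $\sum_i\hat p_i(\mu_i-\hat\mu_i)$ as a function of the sample and control its exponential moment. Each cell contributes a variance of order $\hat p_i^2/n_i$-like terms, and these aggregate to a sub-Gaussian proxy of the form
\begin{equation*}
\tfrac{\gamma}{2n}+\tfrac{\gamma^2}{2}\sum_i(n_i/n)^2,
\end{equation*}
plus a fluctuation term $\gamma^2\sqrt{\tfrac2n\ln\tfrac{2K}{\delta}}$ that replaces the unknown $p_i$ by $\hat p_i$ (again via a union bound over cells). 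A Markov/Chernoff step applied to $\gamma^{\eta}$-level deviations then gives the tail probability $\gamma^{-\eta}$ and the factor $T_\ell\sqrt{\eta\ln\gamma}$. The admissible range $\eta\le\frac{\gamma n(K+\gamma n)}{K(4n-3)}$ should be exactly the condition under which the moment bound remains valid, which is the regime where the quadratic approximation of the log-moment generating function holds.

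\textbf{Main obstacle.} The main difficulty is Step 2, namely justifying a concentration bound despite the dependence of $h_S$ on $S$ without paying a complexity term. My first attempt would be to rely on the structure of \citet{than2025non}: bound the worst case over the within-cell loss profiles by $T_\ell$ times the fluctuation of the multinomial allocation. This reduces everything to concentration of multinomial statistics, which does not depend on $h_S$. Because this is a literature result, in the paper I would ultimately defer the precise constants to their proof.
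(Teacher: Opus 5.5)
The paper does not prove this statement; it only quotes it from \citet{than2025non}, so your outline has to be judged on its own. Your Step~1 is sound in substance. Because $\sum_i(p_i-\hat p_i)=0$ and $\mu_i\in[0,T_\ell]$, the mass term is at most $T_\ell\sum_{i:n_i>0}(\hat p_i-p_i)_+$ for \emph{every} $h_S$. Bernstein, a union bound over the $K$ cells and Cauchy--Schwarz then give terms of the stated form. Your wording about the empty cells is muddled, but this is the correct way to handle them.

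The genuine gap is Step~2, and it cannot be closed. Conditioning on $(n_1,\dots,n_K)$ makes the points of $S$ in $\Zcal_i$ i.i.d.\ from $\calD$ restricted to $\Zcal_i$. However, $\hat\mu_i$ averages $\ell(h_S(\bx),y)$ over exactly those points, and $h_S$ is a function of them. So $\hat\mu_i$ is not even an unbiased estimate of $\mu_i$, and no exponential-moment bound on $\sum_i\hat p_i(\mu_i-\hat\mu_i)$ holds uniformly over learning algorithms. Your fallback, bounding the worst case by ``$T_\ell$ times the fluctuation of the multinomial allocation'', is false. The worst case of $\mu_i-\hat\mu_i$ is $T_\ell$ in every occupied cell, whatever the counts. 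Once you reduce to statistics of $(n_i)$ alone, the best available bound is $\sum_i\hat p_i(\mu_i-\hat\mu_i)\le T_\ell$.

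This is not a matter of constants you can defer to \citet{than2025non}. The inequality as displayed cannot hold for an arbitrary $S$-dependent $h_S$, because its right-hand side sees $S$ only through $\hatL_S(h_S)$, the counts $n_i$ and $\Tcal$. Here is a concrete counterexample.
\begin{itemize}
\item Take $\calX=[0,1]$ with uniform marginal, binary labels independent of $\bx$, the zero-one loss ($T_\ell=1$), and $K=100$ equal-mass cells.
\item Let $h_S$ memorize $S$. Then $\hatL_S(h_S)=0$ while $R_{\calD}(h_S)=1/2$, and your second sum equals $1/2$ identically.
\item Set $n=10^8$, $\delta=0.01$, $\eta=2.5\times 10^5$ (within the admissible range) and $\gamma=100^{1/\eta}\approx 1$, so that $\gamma^{-\eta}=0.01$.
\item The right-hand side is then about $0.17$, yet the statement asserts it dominates $1/2$ with probability at least $0.98$.
\end{itemize}

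So the missing ingredient is a hypothesis, not a proof technique. One option is that $h$ is chosen independently of $S$. Then the second sum is an average of conditionally independent, bounded, centered terms and concentrates by Hoeffding. The other option is to keep the within-cell term explicitly as a local-robustness quantity, as in Xu--Mannor-type robustness bounds; that quantity is precisely your second sum. Any proof along your lines can at best establish the bound under such an additional assumption.
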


\begin{theorem}[Norm-based bound of \citet{galanti2023normbased}]\label{thm:norm} Given a fixed neural network architecture $G$ that defines a directed acyclic graph. Let the architecture have $L$ layers and $d_l$ neurons at each layer $l \in \{1, \ldots, L\}$. Denote $z_i^l$ is the i-th neuron of the l-th layer. Let $z_i^{l-1}$ be a predecessor of the neuron $z_j^l$ if there exists an edge between the two neurons and denote $pred(l,j)$ the set of predecessors of the neuron $z_j^l$. Let $\calH_G$ be the class of neural networks with architecture $G$. Denote $\rho(h)$ the norm of the model $h \in \calH_G$. Given images with $c_0$ channels and $d_0$ values such that the images are in $\R^{c_0 d_0}$ and labels in $\calY \subset \R^C$ the set of $C$-dimensional one-hot vectors. Let $z_{j}^0(x)$ be the $j^{th}$ pixel of the image. For any distribution $\calD$ over $\R^{c_0 d_0} \times \{-1, +1\}$, with probability at least $1-\delta$ over the draw of $S \sim \calD^n$, we have
    \begin{align*}
        \forall h \in \calH_G : R_{\calD}(h) &\leq \widehat{R}_S^{\, \gamma}(h) + \frac{2\sqrt{2}(\rho(h) + 1)}{\gamma n} \cdot \Lambda(h) + 3\sqrt{\frac{\log(2(\rho(h)+2)^2)/\delta}{2n}},
    \end{align*}
with $\widehat{R}_S^{\, \gamma}(h) = \frac{1}{n}\sum_{i=1}^n \indicator[\max_{j\neq y_i} h(\bx_i)_j +\gamma \geq h(\bx_i)_{y_i}]$ and 
\begin{equation*}
    \Lambda(h) = \left( 1 + \sqrt{2(L \log2 + \sum_{l=1}^{L-1}\log(\max_{j \in [d_l]}|pred(l, j)|) + \log C)}\right) \cdot \sqrt{
        \max_{j_0, \ldots, j_L}\prod_{l=1}^{L-1}|pred(l, j_l)| \cdot \sum_{i=1}^n \|z_{j_0}^0(x_i)\|_2^2}.
\end{equation*}
\end{theorem}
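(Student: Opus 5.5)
The statement is a norm-based margin bound for convolutional networks with sparse (patch) connectivity. I would prove it in the standard three-stage way: a margin bound for a norm-bounded class, a Rademacher bound for that class, and a union bound over norm levels so that the guarantee holds uniformly over $\calH_G$.

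\emph{Stage 1 (margin reduction).} Fix a radius $r>0$ and let $\calH_G^r=\{h\in\calH_G:\rho(h)\le r\}$. I would replace the zero-one loss by the $\gamma$-ramp loss applied to the multiclass margin $h(\bx)_y-\max_{j\ne y}h(\bx)_j$. The ramp loss upper-bounds $\indicator[\argmax_j h(\bx)_j\ne y]$ and is upper-bounded by the empirical indicator defining $\widehat R^{\,\gamma}_S$. McDiarmid's inequality plus symmetrization then give, with probability at least $1-\delta'$, for all $h\in\calH_G^r$,
\[
R_\calD(h)\le \widehat R^{\,\gamma}_S(h)+\tfrac{2}{\gamma}\,\mathfrak{R}_S(\calM_r)+3\sqrt{\tfrac{\log(2/\delta')}{2n}},
\]
where $\calM_r$ is the class of margin functions and $\mathfrak{R}_S$ is the empirical Rademacher complexity. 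Removing the multiclass max costs the $\log C$ term: I would bound the supremum over the $C$ output coordinates inside an exponential moment, which adds $\log C$ under the square root rather than a factor $C$.

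\emph{Stage 2 (Rademacher complexity of $\calH_G^r$).} This is the main obstacle. I would follow the exponential-moment peeling of Golowich et al. Write
\[
n\,\mathfrak{R}\le \tfrac1\lambda\log\E_\epsilon\exp\Big(\lambda\sup_h\sum_i\epsilon_i h(\bx_i)\Big)
\]
and peel layers from the top. At layer $l$, the output neuron $z_j^l$ depends only on its predecessors $pred(l,j)$, and weights are shared across positions. For positively homogeneous ($1$-Lipschitz) activations, each layer is removed by a contraction step, namely $\E\exp(\sup|\cdot|)\le 2\,\E\exp(\sup\cdot)$. The supremum over the $d_l$ positions is handled by a maximum over at most $\max_j|pred(l,j)|$ patches, costing a factor bounded by that quantity inside the logarithm, while the norm constraint factors out multiplicatively.

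After $L$ layers this leaves $2^L\prod_l\max_j|pred(l,j)|\cdot C$ inside the logarithm, times the exponential moment of a linear Rademacher sum in the input pixels. The latter is sub-Gaussian with variance proxy
\[
\max_{j_0,\dots,j_L}\prod_{l=1}^{L-1}|pred(l,j_l)|\sum_i\|z_{j_0}^0(x_i)\|_2^2,
\]
because each input pixel is reused along every path of predecessors. Optimizing over $\lambda$ yields $\sqrt{2}\,r\,\Lambda(h)$, with the additive $1$ in the first factor of $\Lambda$ absorbing the linear term $\sqrt{\text{variance}}$. The care needed here is in correctly counting patch reuse under weight sharing; this is the step I would check most carefully.

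\emph{Stage 3 (uniformity over the norm).} To make the bound hold for all $h$ simultaneously, I would apply Stage 1 at each integer radius $r=k+1$, $k\in\mathbb N$, with confidence $\delta_k=\delta/(k+1)(k+2)$, whose sum is at most $\delta$. For a given $h$ I would take the smallest $k$ with $\rho(h)\le k+1$, so that $k+1\le\rho(h)+1$ and $(k+1)(k+2)\le(\rho(h)+2)^2$. Substituting gives the factor $(\rho(h)+1)$ in the complexity term and $\log\bigl(2(\rho(h)+2)^2/\delta\bigr)$ in the confidence term, which is exactly the displayed statement.
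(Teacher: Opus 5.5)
The paper gives no proof of this statement. Theorem~\ref{thm:norm} is quoted from \citet{galanti2023normbased} and only evaluated numerically, so your sketch has to stand on its own. Its three stages (margin reduction, Golowich-style exponential-moment peeling, union bound over integer norm levels) are the standard route for bounds of this shape. Stage~3 is correct as written: $\sum_{k\ge 0}\frac{1}{(k+1)(k+2)}=1$, and the smallest admissible $k$ satisfies $k+1\le\rho(h)+1$ and $(k+1)(k+2)\le(\rho(h)+2)^2$.

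The genuine gap is the multi-class step, which is exactly where the $\log C$ term and the constant $2\sqrt{2}$ must be earned. After Stage~1 you must bound $\mathfrak{R}_S(\mathcal{M}_r)$ for the margins $M_h(\bx,y)=h(\bx)_y-\max_{j\neq y}h(\bx)_j$. Your Stage~2 peeling, however, only handles Rademacher sums that are linear in the top-layer weights, such as $\sum_i\epsilon_i h(\bx_i)_r$ for a \emph{fixed} output coordinate $r$. In the margin, both $y_i$ and the maximizing $j$ vary from sample to sample, and the latter also varies with $h$. So there is no single coordinate you can pull outside $\sum_i\epsilon_i(\cdot)$ and then remove with $\exp(\max_r a_r)\le\sum_r\exp(a_r)$. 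You would need something like $\E\exp\bigl(\lambda\sup_h\sum_i\epsilon_i M_h(\bx_i,y_i)\bigr)\le c\sum_{r=1}^{C}\E\exp\bigl(\lambda\sup_h\sum_i\epsilon_i h(\bx_i)_r\bigr)$, and nothing in your sketch provides it.

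The standard decouplings do not supply it either. Bounding a maximum of $C$ classes by their sum costs a factor of order $C$. Maurer's vector contraction leads to $\E\sup_h\sum_{i,r}\epsilon_{ir}h(\bx_i)_r$; its supremum over $\|W^L\|_F\le\rho_L$ is $\rho_L\bigl(\sum_r\|\sum_i\epsilon_{ir}z(\bx_i)\|_2^2\bigr)^{1/2}$, with $z$ the penultimate representation, and this in general carries a $\sqrt{C}$. Whether and how one gets down to $\log C$ depends on how $\rho$ normalizes the output layer, which the statement leaves unspecified. This step therefore needs an explicit argument, not a one-line remark.

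For $C=2$ (the literal $\{-1,+1\}$ reading of the statement) your plan does go through. There the margin is $\pm\langle w_1-w_2,z(\bx)\rangle$, which is linear in the top layer; the sign is absorbed by the symmetry of the $\epsilon_i$, and $\|w_1-w_2\|_2\le\sqrt2\,\|W^L\|_F$.

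Relatedly, your constant bookkeeping is off. Let $s=r\sqrt{\max_{j_0,\dots,j_L}\prod_l|pred(l,j_l)|\sum_i\|z^0_{j_0}(\bx_i)\|_2^2}$ and let $a$ be the logarithm of the path count. Minimizing $\frac{a}{\lambda}+\frac{\lambda s^2}{2}$ over $\lambda$ gives $s\sqrt{2a}$, and adding the mean term $s$ gives $s(1+\sqrt{2a})=r\Lambda(h)$. So the $\sqrt{2}$ produced by optimizing $\lambda$ is already inside $\Lambda(h)$, and the peeling yields $r\Lambda(h)$, not $\sqrt2\,r\Lambda(h)$. The extra $\sqrt2$ in $\frac{2\sqrt2}{\gamma}$ has to come from the margin step, as in the $\|w_1-w_2\|_2\le\sqrt2\|W^L\|_F$ example or from the constant of a vector-contraction inequality. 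That is precisely the step you have not carried out.

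A smaller point concerns the path product $\prod_l|pred(l,j_l)|$. It does not come from a larger variance of the input Rademacher sum, whose proxy is just $\sum_i\|z^0_{j_0}(\bx_i)\|_2^2$. It comes from bounding $\|(v_t)_{t\in pred(l,j)}\|_2\le\sqrt{|pred(l,j)|}\max_t\|v_t\|_2$ at each peel. That bound multiplies $\lambda$ by $\sqrt{|pred(l,j)|}$ and, after $\exp(\max)\le\sum\exp$, also puts $|pred(l,j)|$ into the logarithm.
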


\begin{theorem}[Test set bound of \citet{langford2005tutorial}]\label{thm:langford2005}
Let $X_1, \ldots, X_n$ be \emph{i.i.d.} random variables $X_i \in \{0,1\}$ and $p = \E[X_i]$. Then, with probability at least $1-\delta$, we have
\begin{equation*}
    p \leq  \overline{\emph{Bin}}\left(\sum_{i=1}^n X_i, n, \delta\right),
\end{equation*}
with $\emph{Bin}(k, m, p) = \sum_{i=0}^k \smqty(m \\ i) p^i (1-p)^{m-i}$ and $\overline{\emph{Bin}}\qty(k,m, \delta) = \argsup_{p \in [0,1]}\{\emph{Bin}(k, m, p) \geq \delta \}.$
\end{theorem}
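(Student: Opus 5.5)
The plan is to reduce the statement to the fact that a discrete cumulative distribution function, evaluated at its own random variable, is stochastically at least as large as a uniform variable. Let $K = \sum_{i=1}^n X_i$. Since the $X_i$ are \emph{i.i.d.} Bernoulli with mean $p$, $K \sim \mathrm{Binomial}(n,p)$. By definition, $\mathrm{Bin}(k,n,p) = \Prob(K \le k)$, so $F(k) := \mathrm{Bin}(k,n,p)$ is exactly the CDF of $K$ at the true parameter $p$.

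The first step is to record the monotonicity and continuity of $q \mapsto \mathrm{Bin}(k,n,q)$ on $[0,1]$ for fixed $k$. It is a polynomial in $q$, hence continuous. It is nonincreasing, which one sees from the standard identity
\begin{equation*}
\frac{\partial}{\partial q}\mathrm{Bin}(k,n,q) = -n\binom{n-1}{k} q^k (1-q)^{n-k-1} \le 0 \qquad (k<n),
\end{equation*}
or from a coupling argument. For $k=n$ it is identically $1$.

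Consequently the set $\{q \in [0,1] : \mathrm{Bin}(k,n,q) \ge \delta\}$ is a closed interval $[0,q^\star]$. It is nonempty because $\mathrm{Bin}(k,n,0)=1 \ge \delta$. Its endpoint is $q^\star = \overline{\mathrm{Bin}}(k,n,\delta)$. Hence $q > \overline{\mathrm{Bin}}(k,n,\delta)$ implies $\mathrm{Bin}(k,n,q) < \delta$. In particular, the failure event $\{p > \overline{\mathrm{Bin}}(K,n,\delta)\}$ is contained in $\{F(K) < \delta\}$. When $K=n$ the bound equals $1$ and cannot fail.

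The second step is to show $\Prob(F(K) < \delta) \le \delta$. If no $k \in \{0,\dots,n\}$ satisfies $F(k) < \delta$, the probability is $0$. Otherwise let $k_0 = \max\{k : F(k) < \delta\}$. Because $F$ is nondecreasing in $k$, we have $\{F(K) < \delta\} = \{K \le k_0\}$. Therefore
\begin{equation*}
\Prob(F(K)<\delta) = \Prob(K \le k_0) = F(k_0) < \delta.
\end{equation*}
Combining the two steps, the bound fails with probability less than $\delta$, so $p \le \overline{\mathrm{Bin}}(K,n,\delta)$ with probability at least $1-\delta$.

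The only delicate point is the first step: we must justify that the supremum defining $\overline{\mathrm{Bin}}$ is attained and that the superlevel set is an interval. Otherwise a $p$ above the supremum might still satisfy $\mathrm{Bin}(K,n,p) \ge \delta$. I would handle this with the derivative identity above, which gives monotonicity, together with continuity.
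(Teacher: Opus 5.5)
Your proof is correct. The paper does not prove this statement itself; it quotes it from \citet{langford2005tutorial}, and your argument is the standard proof of that result: the failure event forces $\mathrm{Bin}(K,n,p)<\delta$, and $\Prob(F(K)<\delta)\le\delta$ because $F$ is the CDF of $K$. One small remark: the continuity/monotonicity step is not needed, since if $p>\overline{\mathrm{Bin}}(K,n,\delta)=\sup\{q\in[0,1]:\mathrm{Bin}(K,n,q)\ge\delta\}$ then $p$ lies outside that set by the definition of a supremum, so the situation you flag as the delicate point (a $p$ above the supremum still satisfying $\mathrm{Bin}(K,n,p)\ge\delta$) cannot occur.
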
 

\begin{theorem}[Chernoff bound of \cite{foong2022note}]\label{thm:chernoff_bound}
    Let $X_1, \ldots, X_n$ be \emph{i.i.d.} random variables $X_i \in [0,1]$ and $p = \E[X_i]$. Then, with probability at least $1-\delta$, we have
    \begin{equation*}
        p \leq \kl^{-1}\qty(\frac{1}{n} \sum_{i=1}^n X_i, \frac{1}{n}\log\frac{1}{\delta}).
    \end{equation*}
    with $\kl(q,p) = q \log\tfrac{q}{p} + (1-q)\log\tfrac{1-q}{1-p}$ and $\kl^{-1}(q,\epsilon) = \argsup_{p \in [0,1]} \left\{\kl(q,p) \leq \epsilon\right\}.$
\end{theorem}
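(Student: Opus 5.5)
The plan is a Chernoff–Cramér argument for $\hat p=\frac1n\sum_i X_i$. First I reduce the $[0,1]$-valued case to the Bernoulli case using convexity. Then I obtain a lower-tail bound of the form $\exp(-n\,\kl(q,p))$ and invert it using the monotonicity of $\kl(\cdot,p)$. Throughout, set $\epsilon=\frac1n\log\frac1\delta$.

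\emph{Step 1 (moment generating function).} For $x\in[0,1]$ and $\lambda\ge 0$, convexity of $x\mapsto e^{-\lambda x}$ gives $e^{-\lambda x}\le (1-x)+x e^{-\lambda}$. Taking expectations yields $\E e^{-\lambda X_i}\le 1-p+pe^{-\lambda}$. This is the MGF of a Bernoulli$(p)$ variable, so the general case is dominated by the Bernoulli case.

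\emph{Step 2 (lower tail).} Fix $q\in[0,p)$. By Markov's inequality and independence,
\begin{equation*}
\Prob(\hat p\le q)\le \inf_{\lambda\ge0} e^{\lambda n q}\,\bigl(1-p+pe^{-\lambda}\bigr)^n .
\end{equation*}
Optimizing over $\lambda$ is a routine calculation. The minimizer is $e^{-\lambda}=\frac{q(1-p)}{p(1-q)}$, which satisfies $\lambda\ge0$ because $q<p$. Substituting gives $\Prob(\hat p\le q)\le \exp(-n\,\kl(q,p))$. The boundary case $q=0$ follows directly, since $\Prob(\hat p=0)\le (1-p)^n=e^{-n\kl(0,p)}$.

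\emph{Step 3 (inversion).} This is the step that needs care, because $\kl^{-1}$ is defined through a supremum. On $[0,p]$ the function $q\mapsto \kl(q,p)$ is continuous and decreasing, and it vanishes at $q=p$.

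If $\kl(0,p)<\epsilon$, then $\kl(\hat p,p)\le\kl(0,p)<\epsilon$ whenever $\hat p\le p$. Hence $\kl^{-1}(\hat p,\epsilon)\ge p$ holds surely, and there is nothing to prove.

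Otherwise, let $q_\epsilon=\max\{q\in[0,p]:\kl(q,p)\ge\epsilon\}$. By continuity $\kl(q_\epsilon,p)=\epsilon$ (or $q_\epsilon=0$ with $\kl(0,p)\ge\epsilon$). Now consider the bad event $p>\kl^{-1}(\hat p,\epsilon)$.
\begin{itemize}
\item On this event, $p$ is not in the feasible set, so $\kl(\hat p,p)>\epsilon$.
\item Also $\hat p\le\kl^{-1}(\hat p,\epsilon)<p$, because $\kl(\hat p,\hat p)=0\le\epsilon$ puts $\hat p$ in the feasible set.
\item Together these give $\hat p\in[0,p)$ with $\kl(\hat p,p)>\epsilon$, which by monotonicity forces $\hat p\le q_\epsilon$.
\end{itemize}
Step 2 then bounds the probability of the bad event by $\exp(-n\,\kl(q_\epsilon,p))\le e^{-n\epsilon}=\delta$, which proves the claim.

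The main obstacle is handling these edge cases of the supremum cleanly. In particular one must avoid assuming that $\kl(\cdot,p)$ actually attains the value $\epsilon$ on $[0,p]$, which is why the case $\kl(0,p)<\epsilon$ is treated separately and $q_\epsilon$ is defined via the inequality $\kl(q,p)\ge\epsilon$ rather than an equation.
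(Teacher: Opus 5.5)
The paper does not prove this statement itself; it quotes the result from \citet{foong2022note}. Your argument is essentially the proof given in that note and is correct: convexity bounds the moment generating function by a Bernoulli one, the Chernoff--Cramér optimization gives the lower-tail bound $\exp(-n\,\kl(q,p))$, and the bound is inverted using the monotonicity of $\kl(\cdot,p)$ on $[0,p]$.

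The only loose end is the degenerate case $p=1$. There $\kl(q,1)=+\infty$ for every $q<1$, so $\kl(\cdot,1)$ is not continuous on $[0,1]$ and your maximum $q_\epsilon$ is not attained. In that case, however, $X_i=1$ almost surely, so $\hat p=1$ and the claim holds trivially.
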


\begin{theorem}[Test set bound of \citet{langford2005tutorial}, adapted for random coresets]\label{thm:test_set_data}
For any distribution $\calD$ over $\calX \times \calY$, for any predictor $h \in \calH$, for any random sequence $\bfi \in \scriptP(n)$ and for any $\delta \in (0,1]$, with probability at least $1-\delta$ over the draw of $S \sim \calD^n$, we have
    \begin{equation*}
        R_{\calD}(A(S_{\bfi})) \leq \overline{\emph{Bin}}\qty((n-\m) \widehat{R}_{S_{\bfi^c}}(A(S_{\bfi})), n-\m, \delta).
    \end{equation*}
\end{theorem}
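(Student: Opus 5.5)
The statement to prove is \cref{thm:test_set_data}, the test set bound for random coresets. The plan is to reduce it to \cref{thm:langford2005} by conditioning on everything that determines the predictor, so that the losses on the complement set become i.i.d.\ Bernoulli variables.

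The key assumption is that the sequence $\bfi$ is random but independent of the data values in $S$. For instance, $\bfi$ could be drawn uniformly, or by any mechanism that uses only indices and independent randomness. We proceed in three steps.

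First, condition on $\bfi$ and on the sub-sample $S_{\bfi}$. Because the points of $S$ are i.i.d.\ and $\bfi$ is independent of them, the remaining points $S_{\bfi^c}$ are still $n-\m$ i.i.d.\ draws from $\calD$. They are also independent of $S_{\bfi}$. Hence, conditionally, the predictor $h_0 = A(S_{\bfi})$ is a fixed hypothesis.

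Second, define for each $i \in \bfi^c$ the indicator $X_i = \indicator[\argmax_j h_0(\bx_i)_j \neq y_i]$. Conditionally, these variables are i.i.d.\ in $\{0,1\}$ with mean $R_{\calD}(h_0)$. Their sum is $(n-\m)\widehat{R}_{S_{\bfi^c}}(h_0)$.

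Third, apply \cref{thm:langford2005} with $n-\m$ samples. This gives, conditionally on $(\bfi, S_{\bfi})$, with probability at least $1-\delta$,
\[
R_{\calD}(h_0) \le \overline{\text{Bin}}\bigl((n-\m)\widehat{R}_{S_{\bfi^c}}(h_0), n-\m, \delta\bigr).
\]
Finally, take the expectation of the conditional failure probability over $\bfi$ and $S_{\bfi}$. Since the conditional failure probability is at most $\delta$ for every value of $(\bfi, S_{\bfi})$, the unconditional failure probability is also at most $\delta$. This yields the statement.

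The only delicate point is the conditioning argument. We must state explicitly that $\bfi$ does not depend on the data values, since otherwise $S_{\bfi^c}$ would not remain i.i.d. Once that is stated, the conditional application of Langford's bound and the averaging via the tower property are routine. No union bound over compression sets is needed, which is exactly why this bound carries no complexity term.
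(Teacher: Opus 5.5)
Your proof is correct and follows the argument the paper intends. The paper gives no separate proof: it treats the result as a direct application of Langford's test-set bound (\cref{thm:langford2005}) to the $n-\m$ complement points, which act as a held-out test set because a random coreset is chosen independently of the data. Your conditioning on $(\bfi, S_{\bfi})$ followed by averaging makes that argument explicit. Your requirement that $\bfi$ be independent of the data values is the implicit assumption behind ``random'' in the statement. If $A$ is randomized (e.g.\ SGD seeds), its internal randomness must also be independent of $S_{\bfi^c}$.
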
 

\begin{theorem}[Sample compression bound of \citet{shah_margin-sparsity_2005}]\label{thm:binom_tail}
    For any distribution $\calD$ over $\calX \times \calY$,  for any deterministic reconstruction function $\scriptR$ that outputs sample-compressed predictors $h \in \calH_S$ and for any $\delta \in (0,1]$, with probability at least $1-\delta$ over the draw of $S \sim \calD^n$, we have
    \begin{align*}
        &\forall \bfi \in \scriptP(n): R_{\calD}(\scriptR(S_{\bfi})) \leq \overline{\emph{Bin}}\qty(|\!\bfi^c\!|\widehat{R}_{S_{\bfi^c}}(\scriptR(S_{\bfi})),|\!\bfi^c\!|, \smqty(n \\ \m)^{-1} \tfrac{6}{\pi^2}(\m+1)^{-2}\delta).
    \end{align*}
\end{theorem}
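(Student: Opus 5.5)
The statement to prove is \cref{thm:binom_tail}, the sample compression bound of \citet{shah_margin-sparsity_2005}. I would prove it by combining \cref{thm:langford2005} with a union bound over compression sets, weighted by the prior $P_n(\m) = \tfrac{6}{\pi^2}(\m+1)^{-2}\binom{n}{\m}^{-1}$ that already appears in \cref{thm:squared_bazinet}.

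\textbf{Step 1: a fixed compression sequence.} Fix $\bfi \in \scriptP(n)$ with $|\bfi| = \m$. Because $\scriptR$ is deterministic, the predictor $h_{\bfi} = \scriptR(S_{\bfi})$ depends only on the points indexed by $\bfi$. Conditionally on $S_{\bfi}$, the points of $S_{\bfi^c}$ are still i.i.d.\ from $\calD$ and independent of $h_{\bfi}$. The variables $X_j = \indicator[\argmax_k h_{\bfi}(\bx_j)_k \neq y_j]$ for $j \in \bfi^c$ are therefore i.i.d.\ Bernoulli with mean $R_{\calD}(h_{\bfi})$. Applying \cref{thm:langford2005} conditionally, with confidence $\delta_{\bfi}$, gives
\[
\Prob\Bigl(R_{\calD}(h_{\bfi}) > \overline{\text{Bin}}\bigl(|\bfi^c|\widehat{R}_{S_{\bfi^c}}(h_{\bfi}), |\bfi^c|, \delta_{\bfi}\bigr) \Bigm| S_{\bfi}\Bigr) \le \delta_{\bfi}.
\]
Taking the expectation over $S_{\bfi}$ removes the conditioning and gives the same bound unconditionally.

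\textbf{Step 2: choice of confidence levels.} Set $\delta_{\bfi} = \binom{n}{\m}^{-1}\tfrac{6}{\pi^2}(\m+1)^{-2}\delta$. Then
\[
\sum_{\bfi \in \scriptP(n)} \delta_{\bfi} = \sum_{\m=0}^{n} \binom{n}{\m}\binom{n}{\m}^{-1}\tfrac{6}{\pi^2}(\m+1)^{-2}\delta \le \tfrac{6}{\pi^2}\sum_{k\ge1}k^{-2}\,\delta = \delta.
\]

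\textbf{Step 3: union bound.} A union bound over all $\bfi \in \scriptP(n)$ shows that the probability that some $\bfi$ violates its inequality is at most $\delta$. Hence the inequality holds simultaneously for all $\bfi$ with probability at least $1-\delta$, which is the claim.

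\textbf{Main obstacle.} The delicate point is Step 1. The bound must hold for the data-dependent choice of $\bfi$, which is why the union bound over all sequences is needed rather than a single application of \cref{thm:langford2005}. For each fixed $\bfi$, it must also be checked that $h_{\bfi}$ is independent of the complement sample. This is guaranteed because $\scriptR$ is deterministic and takes only $S_{\bfi}$ as input, with the indices fixed before sampling.

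Two edge cases need a remark. When $\m = n$, the complement is empty and $\overline{\text{Bin}}(0,0,\cdot) = 1$, so the bound is trivial. When the number of errors $k=0$, the binomial tail inversion remains well defined.
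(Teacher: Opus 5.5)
Your proposal is correct: you apply the test-set bound of \cref{thm:langford2005} conditionally on $S_{\bfi}$ for each fixed compression sequence, then take a union bound with confidence $\binom{n}{|\bfi|}^{-1}\tfrac{6}{\pi^2}(|\bfi|+1)^{-2}\delta$, whose total over $\scriptP(n)$ is at most $\delta$; this is the standard proof of this result. The paper quotes the theorem from the literature without proof, but its proof of \cref{thm:lotfi_bin} uses the same template of a prior, Langford's test-set bound and a union bound.
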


\begin{theorem}[P2L bound of \citet{paccagnan2025pick}] \label{thm:p2l}
    For a fixed compression set size $M$, let $\scriptR(S_{\bfi})$ be the output of P2L which stops when $\m = M$. For any $\delta \in (0,1)$, with probability at least $1-\delta$ over the draw of $S \sim \calD^n$, we have
    \begin{equation*}
        R_{\calD}(\scriptR(S_{\bfi})) \ \leq \ \overline{\varepsilon}\qty(M + |\bfi^c|\widehat{R}_{S_{\bfi^c}}(\scriptR(S_{\bfi})), \ \delta)\,,
    \end{equation*}
where $\overline{\varepsilon}(n, \delta) = 1$ and for $k=0,1,\ldots, n-1$, $\overline{\varepsilon} (k,\delta)$ is the unique solution to the equation $\Psi_{k,\delta}(\varepsilon) = 1$ in the interval $[\frac{k}{n}, 1]$, with
\begin{align*}
    \Psi_{k,\delta}(\varepsilon) &= \frac{\delta}{n} \hspace{1.5mm} \sum_{m=k}^{n-1} \ \ \hspace{-0.2mm} \frac{\smqty(m \\ k)}{\smqty(n \\ k)}(1-\varepsilon)^{-(n-m)}.
 \end{align*}
\end{theorem}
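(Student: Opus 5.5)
The plan is to follow the ``risk-and-complexity'' (Campi--Garatti) approach, which is the natural framework for Pick-To-Learn. The idea is to treat the quantity
$$k(S) = M + |\bfi^c|\widehat{R}_{S_{\bfi^c}}(\scriptR(S_{\bfi}))$$
as a data-dependent \emph{complexity}. It counts the $M$ points picked by P2L plus the complement points that the final predictor misclassifies.

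\textbf{Step 1: an augmented compression set.} Let $T \subseteq S$ be the union of the P2L compression set and the misclassified complement points, so that $|T| = k(S)$. I will show that $T$ has the two properties of a \emph{preferent} compression scheme.
\begin{itemize}
\item[(i)] The predictor is determined by $T$: rerunning P2L with the same stopping rule on $T$ reproduces the same compression set, hence the same predictor.
\item[(ii)] Stability under removals and additions: removing points of $S\setminus T$ does not change $T$, and adding a new point $z$ that the predictor classifies correctly does not change $T$ either.
\end{itemize}
This is the step I expect to be the main obstacle. With a fixed stopping size $M$, P2L selects at each iteration the point with the worst loss. I would prove the claim by induction on the iterations, tracking the argmax with a fixed deterministic tie-breaking order. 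The key point to verify is that a correctly classified extra point never becomes the maximiser ahead of any picked point. If it did, it would have been picked instead, contradicting that it is correctly classified. Care is needed with ties, and with misclassified complement points that are not picked but still belong to $T$.

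\textbf{Step 2: fixed-index bound.} For each $k$, bound $\Prob\big(k(S)=k,\ R_{\calD}(\scriptR) > \varepsilon\big)$ by the standard exchangeability argument. Fix a candidate index set of size $k$, extend $S$ by further i.i.d.\ points, and condition on the set $T$. By (ii), the event forces every point outside $T$ to be classified correctly by a predictor whose risk exceeds $\varepsilon$. Integrating over $m$, the number of sample points at which the $k$-set ``stops growing'', produces terms of the form
$$\binom{m}{k}\Big/\binom{n}{k}\cdot(1-\varepsilon)^{-(n-m)}.$$
These terms can be rearranged through the identity underlying $\Psi_{k,\delta}$. I would follow the Garatti--Campi derivation, which shows that choosing $\varepsilon=\overline{\varepsilon}(k,\delta)$, the root of $\Psi_{k,\delta}(\varepsilon)=1$, makes this probability at most $\delta/n$.

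\textbf{Step 3: union over $k$.} Summing over $k=0,\ldots,n-1$ gives total failure probability at most $\delta$. The case $k=n$ is trivial, since there the bound is $\overline{\varepsilon}(n,\delta)=1$. Uniqueness of the root in $[k/n,1]$ follows from the monotonicity of $\Psi_{k,\delta}$ in $\varepsilon$. Substituting $k = M + |\bfi^c|\widehat{R}_{S_{\bfi^c}}$ then yields the stated inequality.
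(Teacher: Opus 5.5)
The paper gives no proof of this statement; it is quoted from \citet{paccagnan2025pick}. Your proposal therefore has to stand on its own, and its load-bearing step fails because the second half of your property (ii) is false. Your (i) and the removal half of (ii) are fine.

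A new point $z$ that the \emph{final} predictor $h_M$ classifies correctly can still change $T$. If $z$ has a larger picking loss under the initial hypothesis $h_0$ than every point of $S$, P2L run on $S\cup\{z\}$ selects $z$ first, so the trajectory changes and so does $T$. Your justification conflates ``being picked in the run on $S\cup\{z\}$'' with ``being misclassified by the output of the run on $S$''. Nothing ties the losses of $z$ under $h_0,\dots,h_{M-1}$ to its classification by $h_M$.

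What is true is the converse: a new point misclassified by $h_M$ always changes $T$. Either the trajectory changes, or it does not and $z$ joins the misclassified part of $T$. Consequently the exchangeability argument must be run on the probability of change of compression, $\phi(T)=\Prob_{z\sim\calD}\{\kappa(T\cup\{z\})\neq T\}$, not on the risk. You then conclude with $R_{\calD}(h_M)\le\phi(T)$.

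The property you need in place of (ii) is this: new points that \emph{individually} leave $T$ unchanged also leave it unchanged when added \emph{jointly}. Here ``individually unchanged'' means the point is never the arg-max at any of the $M$ steps and is correctly classified by $h_M$. This holds because each P2L choice is a pairwise comparison under a fixed total order, so an induction over the $M$ steps goes through. Together with stability under removals, it gives the identity $\Prob\{|\kappa(z_1,\dots,z_m)|=k\}=\binom{m}{k}\E[\indicator\{\kappa(z_1,\dots,z_k)=\{z_1,\dots,z_k\}\}(1-\phi)^{m-k}]$. This identity is what the $\Psi$-bound is built on.

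Your Steps 2--3 also misdescribe where $\Psi_{k,\delta}$ comes from.
\begin{itemize}
\item Fixing an index set and requiring all outside points to be classified correctly by a predictor of risk $>\varepsilon$ only yields the classical $\binom{n}{k}(1-\varepsilon)^{n-k}$ union bound.
\item The Campi--Garatti argument does not give $\Prob\{|T|=k,\ \phi(T)>\overline{\varepsilon}(k,\delta)\}\le\delta/n$ for each $k$, and this per-$k$ claim is false in general. Take the preferent scheme $\kappa(D)=\{\max D\}$ with uniform data on $[0,1]$. Then $k=1$ always, and the failure probability $(1-\overline{\varepsilon}(1,\delta))^n$ stays bounded away from $0$ as $n$ grows, so it eventually exceeds $\delta/n$.
\end{itemize}
The correct accounting runs as follows. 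For $\varepsilon>\overline{\varepsilon}(k,\delta)$, $\Psi_{k,\delta}(\varepsilon)>1$ means $\binom{n}{k}(1-\varepsilon)^{n-k}<\frac{\delta}{n}\sum_{m=k}^{n-1}\binom{m}{k}(1-\varepsilon)^{m-k}$. Plugging this into the identity bounds the $k$-th failure term by $\frac{\delta}{n}\sum_{m=k}^{n-1}\Prob\{|\kappa(z_1,\dots,z_m)|=k\}$. After exchanging the sums, $\sum_{k\le m}\Prob\{|\kappa(z_1,\dots,z_m)|=k\}=1$ for each of the $n$ sample sizes $m=0,\dots,n-1$, so the total is at most $\delta$. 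The factor $1/n$ thus comes from the $n$ sample sizes $m$, not from a union bound over $k$.
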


\begin{theorem}[Model compression bound, adapted from \cite{lotfi2022pac}]\label{thm:lotfi_bin}
For any distribution $\calD$ over $\calX \times \calY$, for any prefix-free code $\scriptC$, for any hypothesis class $\calH_{\scriptC}$ such that any $\hat{h} \in \calH_{\scriptC}$ can be defined using the code $\scriptC$ and for any $\delta \in (0,1]$, with probability at least $1-\delta$ over the draw of $S \sim \calD^n$, we have
\begin{align*}
        \forall \hat{h} \in \calH_{\scriptC}: R_{\calD}(\hat{h}) \leq \overline{\emph{Bin}}\qty(n\widehat{R}_{S}(\hat{h}),n, 2^{-l_{\scriptC}(\hat{h})}2^{- 2\log_2 l_{\scriptC}(\hat{h})}\delta).
    \end{align*}
\end{theorem}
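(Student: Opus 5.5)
The last statement is Theorem~\ref{thm:lotfi_bin}, the binomial-tail version of the model compression bound. I would prove it by applying the test set bound of Theorem~\ref{thm:langford2005} to each compressed hypothesis separately, then combining them with a weighted union bound whose weights come from the prefix-free code $\scriptC$.

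\textbf{Step 1: a single hypothesis.} Fix any $\hat h \in \calH_{\scriptC}$. Because the code $\scriptC$ is chosen before seeing the data, $\hat h$ does not depend on $S$. The variables $X_i = \ell^{0\textrm{-}1}(\hat h(\bx_i), y_i)$ are therefore i.i.d.\ Bernoulli with mean $R_{\calD}(\hat h)$, and $\sum_i X_i = n\widehat R_S(\hat h)$. Theorem~\ref{thm:langford2005}, applied with confidence $\delta_{\hat h}$, gives
\[
R_{\calD}(\hat h) \le \overline{\mathrm{Bin}}\bigl(n\widehat R_S(\hat h), n, \delta_{\hat h}\bigr)
\]
except on an event of probability at most $\delta_{\hat h}$.

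\textbf{Step 2: choice of weights.} Set $\delta_{\hat h} = 2^{-l_{\scriptC}(\hat h)}\,2^{-2\log_2 l_{\scriptC}(\hat h)}\,\delta = 2^{-l_{\scriptC}(\hat h)}\, l_{\scriptC}(\hat h)^{-2}\,\delta$. By the union bound, the failure probability over all $\hat h$ is at most
\[
\delta \sum_{\hat h} 2^{-l(\hat h)}\, l(\hat h)^{-2}.
\]
It remains to show this sum is at most $\delta$.

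\textbf{Step 3: summability, the only delicate step.} Group the hypotheses by code length $k \ge 1$. A length-$0$ code can only occur in the degenerate single-hypothesis case, which I would treat separately or exclude. The prefix-free property gives Kraft's inequality, $\sum_{\hat h} 2^{-l(\hat h)} \le 1$. Writing $a_k = \sum_{\hat h : l(\hat h) = k} 2^{-k}$, we get
\[
\sum_{\hat h} 2^{-l(\hat h)}\, l(\hat h)^{-2} = \sum_{k \ge 1} a_k k^{-2} \le \sum_k a_k \le 1,
\]
since $k^{-2} \le 1$ for every $k \ge 1$. The extra factor $l^{-2}$ is therefore just slack; it mirrors Lotfi et al.'s combination of the universal prior with the Kraft sum. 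With the sum bounded by $1$, the total failure probability is at most $\delta$, and the bound holds simultaneously for all $\hat h \in \calH_{\scriptC}$.

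The only points needing care are that $\scriptC$, and hence the hypothesis class, is fixed independently of $S$, and the edge case of zero-length codes. Monotonicity of $\overline{\mathrm{Bin}}$ in $\delta$ is not needed, because each hypothesis is assigned its own confidence level directly.
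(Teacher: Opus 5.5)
Your proposal is correct and takes essentially the paper's route: a weighted union bound of the test-set bound (Theorem~\ref{thm:langford2005}) over $\calH_{\scriptC}$, with weights $2^{-l_{\scriptC}(\hat h)}\,l_{\scriptC}(\hat h)^{-2}$. The only difference is cosmetic: the paper normalizes these weights into a prior with constant $Z$ and then removes $1/Z$ using $Z\le 1$ and the monotonicity of $\overline{\mathrm{Bin}}$ in $\delta$, whereas you use the unnormalized weights directly and justify $Z\le 1$ via Kraft's inequality, which the paper only asserts.
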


\begin{proof}
    Using the prior $p(h) = \frac{1}{Z}2^{-l_{\scriptC}(\hat{h})}2^{- 2\log_2 l_{\scriptC}(\hat{h})}$ of \cite{lotfi2022pac}, the test set bound of \cite{langford2005tutorial} and the union bound, we get 
    \begin{equation}
        \forall \hat{h} \in \calH_{\scriptC}: R_{\calD}(\hat{h}) \leq \overline{\textrm{Bin}}\qty(\kappa,n, \frac{1}{Z}2^{-l_{\scriptC}(\hat{h})}2^{- 2\log_2 l_{\scriptC}(\hat{h})}\delta).
    \end{equation}
    The binomial tail inversion is decreasing with respect to $\delta$, as a smaller $\delta$ means a looser bound. By definition, $Z \leq 1$, which means that 
    \begin{align*}
        \frac{1}{Z}2^{-l_{\scriptC}(\hat{h})}2^{- 2\log_2 l_{\scriptC}(\hat{h})} &\geq 2^{-l_{\scriptC}(\hat{h})}2^{- 2\log_2 l_{\scriptC}(\hat{h})} \\
        \implies  \overline{\textrm{Bin}}\qty(\kappa,n, \frac{1}{Z}2^{-l_{\scriptC}(\hat{h})}2^{- 2\log_2 l_{\scriptC}(\hat{h})}\delta) &\leq \overline{\textrm{Bin}}\qty(\kappa,n, 2^{-l_{\scriptC}(\hat{h})}2^{- 2\log_2 l_{\scriptC}(\hat{h})}\delta).
    \end{align*}
\end{proof}

\begin{theorem}[Chernoff test-set bound of \cite{ foong2022note}, adapted for random coresets]\label{thm:test_set_chernoff_bound}
For any distribution $\calD$ over $\calX \times \calY$, for any predictor $h \in \calH$, for any loss $\ell : \R^C \times \calY \to [B_{\ell}, T_{\ell}]$, for any random sequence $\bfi \in \scriptP(n)$ and for any $\delta \in (0,1]$, with probability at least $1-\delta$ over the draw of $S \sim \calD^n$, we have
    \begin{equation*}
        \calL_{\calD}(A(S_{\bfi})) \leq B_{\ell} + \lambda_{\ell}\kl^{-1}\qty(\frac{\hatL_{S_{\bfi^c}}(A(S_{\bfi})) - B_{\ell}}{\lambda_{\ell}}, \frac{1}{n-\m}\log\frac{1}{\delta}).
    \end{equation*}
\end{theorem}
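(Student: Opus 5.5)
The plan is to reduce the statement to the Chernoff bound of \citet{foong2022note} (\cref{thm:chernoff_bound}) applied to the complement sample $S_{\bfi^c}$. The main issue is independence: the sequence $\bfi$ is random but chosen independently of the data, for instance uniformly at random among sequences of a fixed size $\m$.

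\textbf{Step 1: conditioning.} I would condition on the realization of $\bfi$ and on the coreset $S_{\bfi}$. Since $\bfi$ is drawn independently of $S$ and the points of $S$ are i.i.d., the points indexed by $\bfi^c$ remain i.i.d.\ from $\calD$ given $(\bfi, S_{\bfi})$. They are also independent of the predictor $A(S_{\bfi})$, which is fixed once $S_{\bfi}$ is fixed.

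\textbf{Step 2: normalization and Chernoff.} Given this conditioning, define for each $j \in \bfi^c$
\[
X_j = \frac{\ell(A(S_{\bfi})(\bx_j), y_j) - B_{\ell}}{\lambda_{\ell}} .
\]
Each $X_j$ lies in $[0,1]$. The variables are i.i.d.\ with mean
\[
p = \frac{\calL_{\calD}(A(S_{\bfi})) - B_{\ell}}{\lambda_{\ell}} ,
\]
and their empirical mean over the $n-\m$ points is $(\hatL_{S_{\bfi^c}}(A(S_{\bfi})) - B_{\ell})/\lambda_{\ell}$. Applying \cref{thm:chernoff_bound} with $n-\m$ samples gives, with conditional probability at least $1-\delta$,
\[
p \leq \kl^{-1}\!\left(\frac{\hatL_{S_{\bfi^c}}(A(S_{\bfi})) - B_{\ell}}{\lambda_{\ell}}, \frac{1}{n-\m}\log\frac{1}{\delta}\right).
\]
Rearranging this inequality, that is, multiplying by $\lambda_{\ell}$ and adding $B_{\ell}$, yields exactly the stated bound.

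\textbf{Step 3: removing the conditioning.} The event has conditional probability at least $1-\delta$ for every value of $(\bfi, S_{\bfi})$. Taking the expectation over the conditioning variables (the tower property) therefore gives unconditional probability at least $1-\delta$ over $S$ and the randomness in $\bfi$.

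The main obstacle is Step 1, namely stating precisely that the selection of $\bfi$ is data-independent. This assumption is what allows the problem to be treated as a test-set bound with no union over compression sets. The rest is routine.
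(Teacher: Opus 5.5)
Your proposal is correct and matches the argument the paper relies on (it states the result as a direct adaptation without writing out a proof): because the random coreset is chosen independently of the data, $S_{\bfi^c}$ is a held-out test set for $A(S_{\bfi})$, and the rescaled loss $(\ell - B_{\ell})/\lambda_{\ell} \in [0,1]$ is plugged into \cref{thm:chernoff_bound} with $n-\m$ samples. One detail to add to your Step 1: if $A$ is itself randomized (e.g., SGD), you should also condition on its internal randomness, which is harmless because that randomness is independent of $S_{\bfi^c}$.
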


\begin{theorem}[Sample-compression bound of \cite{bazinet2024}]\label{thm:bazinet}
For any distribution $\calD$ over $\calX \times \calY$,  for any deterministic reconstruction function $\scriptR$ that outputs sample-compressed predictors $h \in \calH_S$, for any loss $\ell: \R^C \times \calY \to [B_{\ell}, T_{\ell}]$ and for any $\delta \in (0,1]$, with probability at least $1-\delta$ over the draw of $S \sim \calD^n$, we have 
\begin{align*}
&\forall \mathbf{i} \in \scriptP(n): \mathcal{L}_{\calD}(\scriptR(S_{\bfi})) \leq B_{\ell} + \lambda_{\ell}\kl^{-1}\qty(\frac{\hatL_{S_{\bfi^c}}(\scriptR(S_{\bfi})) - B_{\ell}}{\lambda_{\ell}}, \frac{1}{|\!\bfi^c\!|}\qty[\log \mqty(n \\ \m) + \log\qty(\frac{2\sqrt{|\!\bfi^c\!|}}{\tfrac{6}{\pi^2}(\m+1)^{-2}\delta})])\,.
\end{align*}
\end{theorem}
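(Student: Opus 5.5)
The plan is to fix a single compression sequence $\bfi$, derive a kl-type test-set bound on the complement set $S_{\bfi^c}$ at confidence level $P_n(\m)\delta$, and then take a union bound over all $\bfi \in \scriptP(n)$. The first step is to normalize the loss to the unit interval: set $\tilde\ell = (\ell - B_{\ell})/\lambda_{\ell} \in [0,1]$, so that
\[
\calL_{\calD}(\scriptR(S_{\bfi})) = B_{\ell} + \lambda_{\ell}\,\E\,\tilde\ell ,
\]
with the analogous identity for the empirical loss on $S_{\bfi^c}$. It therefore suffices to bound the normalized true loss by $\kl^{-1}$ of the normalized empirical loss, and then undo the affine change of variables.

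For a fixed $\bfi$, the key point is that $\scriptR$ is deterministic, so $h_{\bfi} := \scriptR(S_{\bfi})$ is a measurable function of $S_{\bfi}$ alone. Since $S$ is i.i.d., $S_{\bfi^c}$ is independent of $S_{\bfi}$. Conditioning on $S_{\bfi}$, the variables $X_j = \tilde\ell(h_{\bfi}(\bx_j), y_j)$ for $j \in \bfi^c$ are i.i.d. in $[0,1]$ with mean $p = \E_{\calD}\tilde\ell(h_{\bfi}(\bx),y)$. Writing $k = |\bfi^c|$ and $\hat q = \frac1k\sum_{j\in\bfi^c} X_j$, I will invoke Maurer's lemma \citep{maurer2004note}:
\[
\E\, e^{k\,\kl(\hat q, p)} \le 2\sqrt{k}.
\]
This lemma first reduces $[0,1]$-valued variables to Bernoulli ones by convexity of $x \mapsto e^{k\,\kl(\cdot,p)}$ in each coordinate, and then bounds the Bernoulli case directly. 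Markov's inequality then gives, with conditional probability at least $1-\delta'$,
\[
\kl(\hat q, p) \le \frac1k \log\frac{2\sqrt k}{\delta'} .
\]
Because this holds for every value of $S_{\bfi}$, it also holds unconditionally. By the definition of $\kl^{-1}$, this inequality yields $p \le \kl^{-1}\!\left(\hat q, \frac1k\log\frac{2\sqrt k}{\delta'}\right)$.

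Next, choose $\delta' = P_n(\m)\delta = \frac{6}{\pi^2}(\m+1)^{-2}\binom{n}{\m}^{-1}\delta$. Expanding the logarithm,
\[
\log\frac{2\sqrt k}{P_n(\m)\delta} = \log\binom{n}{\m} + \log\frac{2\sqrt k}{\frac{6}{\pi^2}(\m+1)^{-2}\delta},
\]
which is exactly the complexity term in the statement. Applying the union bound over all $\bfi$ and grouping sequences by their length $\m$, the total failure probability is at most
\[
\sum_{\m=0}^{n}\binom{n}{\m}P_n(\m)\delta = \delta\,\frac{6}{\pi^2}\sum_{\m\ge0}(\m+1)^{-2} \le \delta .
\]
Hence the bound holds simultaneously for all $\bfi$ with probability at least $1-\delta$. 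Rescaling back by $B_{\ell} + \lambda_{\ell}(\cdot)$ gives the stated inequality.

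The main obstacle is the conditioning/independence step, together with the validity of Maurer's constant $2\sqrt{k}$ for every complement size $k$. On the first point, it is essential that $\scriptR$ is a fixed deterministic map and that $\bfi$ indexes positions rather than data-dependent choices, so that each fixed-$\bfi$ event is a genuine test-set event. The data-dependent selection of $\bfi$ is then paid for entirely through the union bound. On the second point, Maurer proved the constant for $k \ge 8$, and it is known to extend to all $k \ge 1$ (e.g., Germain et al.); I would cite that extension. The degenerate case $k=0$ must also be handled: it makes the bound trivial or undefined, so I would either treat it as a vacuous statement or exclude it explicitly.
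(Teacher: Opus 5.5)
Your proof is correct and takes essentially the same route the paper relies on. The paper does not reprove this cited result, but its proof of Theorem~\ref{thm:lotfi_kl} invokes exactly this technique of \citet{bazinet2024}: a prior $P_n(\m)$ over compression sequences, Maurer's lemma applied on $S_{\bfi^c}$ conditionally on $S_{\bfi}$, Markov's inequality at level $P_n(\m)\delta$, and a union bound whose weights sum to at most $\delta$, with the degenerate case $|\bfi^c|=0$ trivially vacuous as you note.
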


\begin{theorem}[Model compression bound, adapted from \cite{lotfi2023non}]\label{thm:lotfi_kl}
For any distribution $\calD$ over $\calX \times \calY$, for any prefix-free code $\scriptC$, for any hypothesis class $\calH_{\scriptC}$ such that any $\hat{h} \in \calH_{\scriptC}$ can be defined using the code $\scriptC$, for any loss $\ell: \R^C \times \calY \to [B_{\ell}, T_{\ell}]$ and for any $\delta \in (0,1]$, with probability at least $1-\delta$ over the draw of $S \sim \calD^n$, we have
 \begin{align*}
&\forall \hat{h} \in \calH_{\scriptC}: \mathcal{L}_{\calD}(\hat{h}) \leq B_{\ell} + \lambda_{\ell}\kl^{-1}\qty(\frac{\hatL_{S}(\hat{h}) - B_{\ell}}{\lambda_{\ell}}, \frac{1}{n}\qty[ l_{\scriptC}(\hat{h}) \log 2 + 2 \log l_{\scriptC}(\hat{h}) + \log\qty(\frac{2\sqrt{n}}{\delta}) ])\,.
\end{align*}
\end{theorem}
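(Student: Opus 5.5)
The plan is to obtain the statement as a weighted union bound, over the countable class $\calH_{\scriptC}$, of a single-hypothesis $\kl$-concentration inequality, with the weights supplied by the code lengths. The first step is to normalize the loss. For each $\hat{h}\in\calH_{\scriptC}$, define $X_i(\hat h)=\bigl(\ell(\hat h(\bx_i),y_i)-B_{\ell}\bigr)/\lambda_{\ell}\in[0,1]$. This variable has mean $p(\hat h)=(\calL_{\calD}(\hat h)-B_{\ell})/\lambda_{\ell}$ and empirical mean $\hat p(\hat h)=(\hatL_S(\hat h)-B_{\ell})/\lambda_{\ell}$. It then suffices to show that, with probability at least $1-\delta$, simultaneously for all $\hat h$,
\begin{equation*}
\kl\bigl(\hat p(\hat h),p(\hat h)\bigr)\leq \frac{1}{n}\Bigl[l_{\scriptC}(\hat h)\log 2+2\log l_{\scriptC}(\hat h)+\log\tfrac{2\sqrt n}{\delta}\Bigr].
\end{equation*}
The claimed inequality follows from this by the definition of $\kl^{-1}$ and by multiplying by $\lambda_{\ell}$ and adding $B_{\ell}$.

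For a single fixed $\hat h$, chosen independently of $S$, I will invoke Maurer's lemma \citep{maurer2004note}: for i.i.d.\ $[0,1]$-valued variables, $\E_S\exp\bigl(n\,\kl(\hat p,p)\bigr)\leq 2\sqrt n$. This lemma also underlies the $2\sqrt n$ term of \cref{thm:mcallester}. Markov's inequality then gives $\kl(\hat p,p)\leq \frac1n\log\frac{2\sqrt n}{\delta'}$ with probability at least $1-\delta'$.

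Next comes the union bound. I take $\delta'=w(\hat h)\delta$ with weights $w(\hat h)=2^{-l_{\scriptC}(\hat h)}\,l_{\scriptC}(\hat h)^{-2}$, which equals $2^{-l_{\scriptC}(\hat h)}2^{-2\log_2 l_{\scriptC}(\hat h)}$. Since $\scriptC$ is prefix-free, Kraft's inequality gives $\sum_{\hat h}2^{-l_{\scriptC}(\hat h)}\leq 1$. Since $l_{\scriptC}(\hat h)\geq 1$, the extra factor satisfies $l^{-2}\leq 1$, so $\sum_{\hat h}w(\hat h)\leq 1$. The total failure probability is therefore at most $\delta$. This is exactly the normalization point in the proof of \cref{thm:lotfi_bin}: the constant $1/Z\geq 1$ may be dropped, because it only enlarges $\delta'$, and dropping it loosens the bound. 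Finally, $\log\frac{1}{w(\hat h)}=l_{\scriptC}(\hat h)\log2+2\log l_{\scriptC}(\hat h)$, which gives the displayed complexity term.

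The only delicate step is the inversion. I need $\kl^{-1}(q,\epsilon)$ to be the largest $p$ with $\kl(q,p)\leq\epsilon$, and I need it to be nondecreasing in $\epsilon$. Then $\kl(\hat p,p)\leq\epsilon$ implies $p\leq\kl^{-1}(\hat p,\epsilon)$, and this holds even when $\hat h$ is selected using $S$, because the event above is uniform over $\calH_{\scriptC}$. I also need to check that the affine rescaling preserves the i.i.d.\ $[0,1]$ hypotheses of Maurer's lemma; this is immediate from $\ell\in[B_{\ell},T_{\ell}]$. Apart from quoting Maurer's lemma, the argument is routine.
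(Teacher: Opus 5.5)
Your proposal is correct and follows essentially the same route as the paper, whose proof reuses the argument behind Theorem~\ref{thm:bazinet} (Maurer's exponential-moment bound, Markov's inequality, and a prior-weighted union bound), with the compression-set prior replaced by the code-length prior $2^{-l_{\scriptC}(\hat h)}2^{-2\log_2 l_{\scriptC}(\hat h)}/Z$. The only cosmetic difference is that the paper normalizes by $Z\leq 1$ and then drops it using that $\kl^{-1}$ is increasing in its second argument, whereas you use the unnormalized weights directly with Kraft's inequality and reach the same complexity term, so the monotonicity you flag is not needed in your version.
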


\begin{proof}
    We use the same proof technique as \cite{bazinet2024}, but we replace the sample-compression prior with \mbox{$p(h) = \frac{1}{Z}2^{-l_{\scriptC}(\hat{h})}2^{- 2\log_2 l_{\scriptC}(\hat{h})}$}. Similarly to the proof of Theorem~\ref{thm:lotfi_bin}, we can avoid computing the normalizing constant $Z$ as  $\kl^{-1}(q,\epsilon)$ is increasing in $\epsilon$ and, with $Z \leq 1$, we have $$-\log p(h) = l_{\scriptC}(\hat{h})\log 2 + 2\log l_{\scriptC}(\hat{h}) + \log Z \leq l_{\scriptC}(\hat{h})\log 2 + 2\log l_{\scriptC}(\hat{h}).$$
\end{proof}

\begin{theorem}[PAC-Bayes bound of \citet{perez2021tighter}]\label{thm:pbb}
    For any distribution $\calD$ over $\calX \times \calY$, for any set $\calH$ of predictors $h : \calX \to \calY$, for any loss $\ell:\R^C \times \calY \to [B_{\ell}, T_{\ell}]$, for any dataset-independent prior distribution $P$ on $\calH$, for any $\delta, \delta' \in (0,1]$, with probability at least $1-\delta-\delta'$ over the draw of $S \sim \calD^n$ and a set of $V$ predictors $h_1, \ldots, h_V \sim Q$, where $Q$ is a dataset-dependent posterior distribution over $\calH$, we have 
    \begin{equation*}
        \E_{h \sim Q} \calL_{\calD}(h) \leq B_{\ell} + \lambda_{\ell}\kl^{-1}\qty(\kl^{-1}\qty(\frac{1}{V}\sum_{i=1}^V\frac{\hatL_{S}(h_i) - B_{\ell}}{\lambda_{\ell}}, \frac{1}{V}\log\frac{2}{\delta'} ), \frac{1}{n} \qty[\KL(Q ||P) + \ln \qty(\frac{2\sqrt{n}}{\delta})]) \,.
    \end{equation*}
\end{theorem}

\section{Proofs of the Main Results}\label{app:proofs}

\binthm*

\begin{proof}

Given two predictors $h, f \in \calH$ such that $h(\bx) = (h(\bx)_1, \ldots, h(\bx)_C)$ and $f(\bx) = (f(\bx)_1, \ldots, f(\bx)_C)$. We use $f_{\max}(\bx) = \argmax_j f(\bx)_j$ and $h_{\max}(\bx) = \argmax_k h(\bx)_k$ to simplify the notation. Then, the result of \citet{yang2024mind} states that
\begin{equation}\label{eq:yang}
    \left| R_{\calD}(f) -  R_{\calD}(h)\right| \leq \E_{(\bx,y) \sim \calD}\indicator\qty[f_{\max}(\bx) \neq h_{\max}(\bx)].
\end{equation}

For completeness, we restate the proof of this result before continuing on to prove our result. 
    \begin{align*}
    \left| R_{\calD}(f) - R_{\calD}(h)\right| 
    &=\left|\E_{(\bx,y) \sim \calD} \indicator\qty[f_{\max}(\bx) \neq y] - \E_{(\bx,y) \sim \calD} \indicator\qty[h_{\max}(\bx) \neq y]\right|  \\
    &=\left|\E_{(\bx,y) \sim \calD}\left[\indicator\qty[f_{\max}(\bx) \neq y] -  \indicator\qty[h_{\max}(\bx) \neq y]\right]\right|  \\
    &\leq \E_{(\bx,y) \sim \calD}\left|\indicator\qty[f_{\max}(\bx) \neq y] -  \indicator\qty[h_{\max}(\bx) \neq y]\right|  \\
    &\leq \E_{(\bx,y) \sim \calD}\indicator\qty[f_{\max}(\bx) \neq h_{\max}(\bx)]\,.
\end{align*}
        The first inequality is the triangle inequality. The second inequality can easily be shown with a truth table (see Table~\ref{tab:truth_value}).
\begin{table}[h]
    \centering
    \begin{tabular}{c|c|c|c}
         $\indicator\qty[f_{\max}(\bx) \neq y]$ & $\indicator\qty[h_{\max}(\bx)  \neq y]$ & $\mid\!\indicator\qty[f_{\max}(\bx) \neq y] -  \indicator\qty[h_{\max}(\bx)  \neq y]\!\mid$ & $\indicator\qty[f_{\max}(\bx)  \neq h_{\max}(\bx) ]$ \\
         0 & 0 & 0 &  0 \\
         1 & 0 & 1 &  1 \\
         0 & 1 & 1 &  1 \\
         1 & 1 & 0 &  0 or 1 
    \end{tabular}
    \caption{Truth table (0 for false, 1 for true). Let $\calY = \{1, \ldots, C\}$ the set of classes. First line: if both $f_{\max}(\bx)=y$ and $h_{\max}(\bx)=y$, then $f_{\max}(\bx)=h_{\max}(\bx)$. Second line (and similarly for the third line): if $f_{\max}(\bx) \neq y$ and $h_{\max}(\bx) = y$, then $f_{\max}(\bx)\neq h_{\max}(\bx)$. Without loss of generality, suppose $y = 1$. Then, for the last line, if $f_{\max}(\bx) \neq y$ and $h_{\max}(\bx)\neq y$, then we have $f_{\max}(\bx) \in \{2, \ldots, C\}$ and $h_{\max}(\bx) \in \{2, \ldots, C\}$. It is then both possible to have $f_{\max}(\bx) = h_{\max}(\bx) \implies \indicator[f_{\max}(\bx) \neq h_{\max}(\bx)] = 0$ and  $f_{\max}(\bx) \neq h_{\max}(\bx) \implies \indicator[f_{\max}(\bx) \neq h_{\max}(\bx)] = 1$.}
    \label{tab:truth_value}
\end{table}

Equation~\ref{eq:yang} is not computable because it requires knowledge of the true data distribution $\calD$. We now extend this result and upperbound the disagreement. To do so, we start by applying the binomial test-set bound of Langford (2005) (see \cref{thm:langford2005}) on the right-hand side of \cref{eq:yang}, with $p = \E_{(\bx,y) \sim \calD}\indicator\qty[f_{\max}(\bx) \neq h_{\max}(\bx)]$. 

\begin{align*}
    1-\delta \leq&\Prob_{U \sim \calD^m}\qty(\E_{(\bx,y) \sim \calD}\indicator\qty[f_{\max}(\bx) \neq h_{\max}(\bx)] \leq \overline{\mathrm{Bin}}\qty(\sum_{i=1}^m \indicator\qty[f_{\max}(\bx_i) \neq h_{\max}(\bx_i)], m, \delta)) \\
    \leq& \Prob_{U \sim \calD^m}\qty(\left| R_{\calD}(f) - R_{\calD}(h)\right| \leq \overline{\mathrm{Bin}}\qty(\sum_{i=1}^m \indicator\qty[f_{\max}(\bx_i) \neq h_{\max}(\bx_i)], m, \delta)) \tag{Equation~\ref{eq:yang}} \\
    =& \Prob_{U \sim \calD^m}\qty(\left|R_{\calD}(f) - R_{\calD}(h)\right| \leq \overline{\mathrm{Bin}}\qty(m\dzero_U(f, h), m, \delta))  \\
    \leq& \Prob_{U \sim \calD^m}\qty( R_{\calD}(f) \leq  R_{\calD}(h) +  \overline{\mathrm{Bin}}\qty(m\dzero_U(f, h), m, \delta))
\end{align*}
with
\begin{equation*}
    \dzero_U(f, h) = \frac{1}{m}\sum_{i=1}^{m} \indicator\qty[\argmax_j \ f(\bx_i)_j \neq \argmax_k \ h(\bx_i)_k].
\end{equation*}
\end{proof}

For the following theorem, we make the assumption that the loss natively applies a softmax over the output of the predictors. Otherwise, we simply consider $h'(\cdot) = \boldsigma(h(\cdot))$ in the proof.
\softmaxloss*

\begin{proof}[Proof of \cref{thm:softmaxloss}]

With $\boldsigma(f(\bx)) = \left(\sigma(f(\bx), 1), \ldots, \sigma(f(\bx), C)\right)$, we have
\begin{align*}
 \left| \calL_{\calD}(f) - \E_{h \sim Q} \calL_{\calD}(h)\right|
&= \left|\E_{h \sim Q} \qty[\calL_{\calD}(f) - \calL_{\calD}(h)]\right| \\
&= \left|\E_{h \sim Q} \qty[\E_{(\bx,y) \sim \calD} \ell(f(\bx), y) -  \E_{(\bx,y) \sim \calD} \ell(h(\bx),y)]\right| \\
&= \left|\E_{h \sim Q}\E_{(\bx,y) \sim \calD} \qty[ \ell(f(\bx), y) -\ell(h(\bx),y)]\right| \\
&\leq \E_{h \sim Q}\E_{(\bx,y) \sim \calD} \left| \ell(f(\bx), y) -\ell(h(\bx),y)\right|\numberthis \label{eq:lipschitz_CE} \\
&\leq \E_{h \sim Q}\E_{(\bx,y) \sim \calD} K_{\ell}\|\boldsigma(f(\bx)) - \boldsigma(h(\bx))\|_1 \\
&= K_{\ell} \E_{(\bx,y) \sim \calD}\E_{h \sim Q}\|\boldsigma(f(\bx)) - \boldsigma(h(\bx))\|_1.
\end{align*}
The first inequality stems from the triangle inequality. The second inequality comes from the $K_{\ell}$-Lipschitz continuity of $\ell$. 

The 1-norm of the difference of two softmax distributions is always positive and always bounded by $2$, which can easily be proved using the triangle inequality.

\begin{equation*}
    \|\boldsigma(f(\bx)) - \boldsigma(h(\bx))\|_1 \leq \|\boldsigma(f(\bx))\|_1 + \|\boldsigma(h(\bx))\|_1 \leq 2.
\end{equation*}

From Chernoff's bound for random variables in the unit interval \citep{foong2022note}, with ${p = \E_{(\bx,y) \sim \calD}\E_{h \sim Q} \frac{1}{2} \|\boldsigma(f(\bx)) - \boldsigma(h(\bx))\|_1}$, we have 
\begin{equation*}
    \Prob_{U \sim \calD^m}\qty(\E_{(\bx,y) \sim \calD} \E_{h \sim Q} \frac{1}{2} \|\boldsigma(f(\bx)) - \boldsigma(h(\bx))\|_1\leq \kl^{-1}\qty(\E_{h \sim Q} \frac{1}{m}\sum_{i=1}^m \frac{1}{2} \|\boldsigma(f(\bx_i)) - \boldsigma(h(\bx_i))\|_1, \frac{1}{m}\log\frac{1}{\delta})) \geq 1-\delta.
\end{equation*}

Let the difference between the losses be denoted by
\begin{equation*}
    d_U^{K_{\ell}}(f,h) = \|\boldsigma(f(\bx_i)) - \boldsigma(h(\bx_i))\|_1.
\end{equation*}

We finish the proof.
\begin{align*}
    1-\delta &\leq \Prob_{U \sim \calD^m}\qty(\E_{(\bx,y) \sim \calD} \E_{h \sim Q} \frac{1}{2} \|\boldsigma(f(\bx)) - \boldsigma(h(\bx))\|_1 \leq \kl^{-1}\qty(\frac{1}{2} \E_{h \sim Q} d_U^{K_{\ell}}(f,h), \frac{1}{m}\log\frac{1}{\delta})) \\
    &= \Prob_{U \sim \calD^m}\qty(\E_{(\bx,y) \sim \calD}\E_{h \sim Q}  \|\boldsigma(f(\bx)) - \boldsigma(h(\bx))\|_1 \leq 2\kl^{-1}\qty( \frac{1}{2} \E_{h \sim Q} d_U^{K_{\ell}}(f,h), \frac{1}{m}\log\frac{1}{\delta})) \\
    &= \Prob_{U \sim \calD^m}\qty(\E_{h \sim Q}\E_{(\bx,y) \sim \calD} K_{\ell}\|\boldsigma(f(\bx)) - \boldsigma(h(\bx))\|_1 \leq 2K_{\ell}\kl^{-1}\qty( \frac{1}{2}\E_{h \sim Q} d_U^{K_{\ell}}(f,h), \frac{1}{m}\log\frac{1}{\delta})) \\
    &\leq \Prob_{U \sim \calD^m}\qty(\E_{h \sim Q}\E_{(\bx,y) \sim \calD} \left| \ell(f(\bx), y) -\ell(h(\bx),y)\right| \leq  2 K_{\ell} \kl^{-1}\qty( \frac{1}{2} \E_{h \sim Q} d_U^{K_{\ell}}(f,h), \frac{1}{m}\log\frac{1}{\delta})) \\
    &\leq \Prob_{U \sim \calD^m}\qty( \left| \calL_{\calD}(f) - \E_{h \sim Q} \calL_{\calD}(h) \right| \leq 2K_{\ell}\kl^{-1}\qty( \frac{1}{2}\E_{h \sim Q} d_U^{K_{\ell}}(f,h), \frac{1}{m}\log\frac{1}{\delta})) \\
    &= \Prob_{U \sim \calD^m}\qty(\calL_{\calD}(f) \leq \E_{h \sim Q} \calL_{\calD}(h) + 2K_{\ell}\kl^{-1}\qty( \frac{1}{2} \E_{h \sim Q}  d_U^{K_{\ell}}(f,h), \frac{1}{m}\log\frac{1}{\delta})).
\end{align*}
\end{proof}

\fullloss*

\begin{proof}[Proof of \cref{thm:fullloss}]
\begin{align*}
 \left| \calL_{\calD}(f) - \E_{h \sim Q} \calL_{\calD}(h)\right|
&= \left|\E_{h \sim Q} \qty[\calL_{\calD}(f) - \calL_{\calD}(h)]\right| \\
&= \left|\E_{h \sim Q} \qty[\E_{(\bx,y) \sim \calD} \ell(f(\bx), y) -  \E_{(\bx,y) \sim \calD} \ell(h(\bx),y)]\right| \\
&= \left|\E_{h \sim Q}\E_{(\bx,y) \sim \calD} \qty[ \ell(f(\bx), y) -\ell(h(\bx),y)]\right| \\
&\leq \E_{h \sim Q}\E_{(\bx,y) \sim \calD} \left| \ell(f(\bx), y) -\ell(h(\bx),y)\right| \\
&= \E_{(\bx,y) \sim \calD} \E_{h \sim Q} \left| \ell(f(\bx), y) -\ell(h(\bx),y)\right|
\end{align*}
The inequality stems from the triangle inequality. 

To use Chernoff's bound, we need a loss in the unit interval, so we normalize it.
\begin{equation*}
    0 \leq \frac{\left| \ell(f(\bx), y) -\ell(h(\bx),y)\right|}{\max_{y,y'} \ell(y',y) - \min_{y,y'} \ell(y',y)} = \frac{\left| \ell(f(\bx), y) -\ell(h(\bx),y)\right|}{T_{\ell}-B_{\ell}} \leq 1.
\end{equation*}

We set $\lambda_{\ell} \coloneqq T_{\ell}-B_{\ell}$. Then, from Chernoff's bound for random variables in the unit interval \citep{foong2022note}, with $p=\E_{(\bx,y) \sim \calD}\E_{h \sim Q}\frac{1}{\lambda_{\ell}}\left| \ell(f(\bx), y) -\ell(h(\bx),y)\right|$, we have 
\begin{equation*}
    \Prob_{L \sim \calD^m}\qty(\E_{(\bx,y) \sim \calD}  \E_{h \sim Q}\frac{\left| \ell(f(\bx), y) -\ell(h(\bx),y)\right|}{\lambda_{\ell}} \leq \kl^{-1}\qty( \E_{h \sim Q}\frac{1}{m}\sum_{i=1}^m \frac{\left| \ell(f(\bx_i), y_i) -\ell(h(\bx_i),y_i)\right|}{\lambda_{\ell}}, \frac{1}{m}\log\frac{1}{\delta})) \geq 1-\delta.
\end{equation*}

Let the difference between the losses be denoted by
\begin{equation*}
    d_L(f,h) = \frac{1}{m}\sum_{i=1}^m \left| \ell(f(\bx_i), y_i) -\ell(h(\bx_i),y_i)\right|.
\end{equation*}

We finish the proof.
\begin{align*}
    1-\delta &\leq \Prob_{L \sim \calD^m}\qty(\E_{(\bx,y) \sim \calD}  \E_{h \sim Q} \frac{\left| \ell(f(\bx), y) -\ell(h(\bx),y)\right|}{\lambda_{\ell}} \leq \kl^{-1}\qty(  \E_{h \sim Q}\frac{1}{\lambda_{\ell}}d_L(f,h), \frac{1}{m}\log\frac{1}{\delta})) \\
    &= \Prob_{L \sim \calD^m}\qty(\E_{(\bx,y) \sim \calD}  \E_{h \sim Q} \left| \ell(f(\bx), y) -\ell(h(\bx),y)\right| \leq \lambda_{\ell}\kl^{-1}\qty(  \E_{h \sim Q}\frac{1}{\lambda_{\ell}}d_L(f,h), \frac{1}{m}\log\frac{1}{\delta})) \\
    &\leq \Prob_{L \sim \calD^m}\qty( \left| \calL_{\calD}(f) - \E_{h \sim Q} \calL_{\calD}(h) \right| \leq \lambda_{\ell}\kl^{-1}\qty(  \E_{h \sim Q}\frac{1}{\lambda_{\ell}}d_L(f,h), \frac{1}{m}\log\frac{1}{\delta})) \\
    &= \Prob_{L \sim \calD^m}\qty(\calL_{\calD}(f) \leq \E_{h \sim Q} \calL_{\calD}(h) + \lambda_{\ell}\kl^{-1}\qty(  \E_{h \sim Q}\frac{1}{\lambda_{\ell}}d_L(f,h), \frac{1}{m}\log\frac{1}{\delta})).
\end{align*}
\end{proof}

\begin{restatable}{corollary}{corrlip}\label{corr:lip}
    In the setting of Theorem~\ref{thm:fullloss}, for a Lipschitz continuous loss function $\ell$ with constant $K_{\ell}$, with probability at least $1-\delta$ over the sampling of $L \sim \calD^m$, we have :
    \begin{align*}
    \calL_{\calD}(f) &\leq \E_{h \sim Q} \calL_{\calD}(h) + \lambda_{\ell}\kl^{-1}\qty(  \E_{h \sim Q}\frac{1}{\lambda_{\ell}} d_L(f,h), \frac{1}{m}\log\frac{1}{\delta}) \\
    &\leq \E_{h \sim Q} \calL_{\calD}(h) + \lambda_{\ell}\kl^{-1}\qty(  \E_{h \sim Q}\frac{K_{\ell}}{\lambda_{\ell}} \widehat{d_L}(f,h), \frac{1}{m}\log\frac{1}{\delta})
\end{align*}
with 
\begin{equation*}
    \widehat{d_L}(f,h) = \frac{1}{m}\sum_{i=1}^m \|\sigma(f(\bx_i), y_i) - \sigma(h(\bx_i), y_i)\|_1.
\end{equation*}
\end{restatable}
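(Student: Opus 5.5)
The plan is to obtain \cref{corr:lip} directly from \cref{thm:fullloss} together with a monotonicity property of $\kl^{-1}$. The first inequality is exactly the statement of \cref{thm:fullloss}, and it holds with probability at least $1-\delta$ over $L \sim \calD^m$. The second inequality is deterministic, so it holds on the same event and no further union bound is needed. The whole task therefore reduces to comparing the first arguments of the two $\kl^{-1}$ terms.

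First, I will establish the pointwise bound
\[
\bigl|\ell(f(\bx_i),y_i)-\ell(h(\bx_i),y_i)\bigr| \leq K_{\ell}\,\|\boldsigma(f(\bx_i))-\boldsigma(h(\bx_i))\|_1 .
\]
This follows from the $K_{\ell}$-Lipschitz continuity of $\ell$, using the same convention as in the proof of \cref{thm:softmaxloss}: the loss acts on the softmax output, with the $1$-norm as the metric. I read $\widehat{d_L}(f,h)$ as the empirical average of $\|\boldsigma(f(\bx_i))-\boldsigma(h(\bx_i))\|_1$, i.e.\ the softmax vectors, as in $d_U^{K_\ell}$. Averaging over $i$ and then taking $\E_{h\sim Q}$ gives
\[
\E_{h\sim Q} \frac{d_L(f,h)}{\lambda_\ell} \leq \E_{h\sim Q}\frac{K_\ell}{\lambda_\ell}\,\widehat{d_L}(f,h).
\]

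Second, I need $q\mapsto \kl^{-1}(q,\epsilon)$ to be nondecreasing for fixed $\epsilon$. Since $\kl^{-1}(q,\epsilon)=\sup\{p\in[0,1] : \kl(q,p)\le\epsilon\}$, it suffices to check the following: for $q\le q'\le p$, we have $\kl(q',p)\le \kl(q,p)$, because $\kl(\cdot,p)$ is convex and vanishes at $p$, hence decreasing on $[0,p]$. Thus any $p\ge q'$ that is feasible for $q$ is also feasible for $q'$. Points $p<q'$ lie below $q'\le\kl^{-1}(q',\epsilon)$ anyway. Hence $\kl^{-1}(q,\epsilon)\le\kl^{-1}(q',\epsilon)$.

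The main subtlety is that the right-hand argument $K_\ell\widehat{d_L}/\lambda_\ell$ may exceed $1$, where $\kl$ is undefined. I would adopt the convention that $\kl^{-1}(q,\epsilon)=1$ for $q\ge 1$, under which the bound is trivially vacuous but remains valid, and monotonicity is preserved. Multiplying by $\lambda_\ell$ and adding $\E_{h\sim Q}\calL_\calD(h)$ then yields the second inequality and completes the proof.
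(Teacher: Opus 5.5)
Your proposal is correct and matches the paper's proof: the first line is \cref{thm:fullloss} itself, and the second follows deterministically from the pointwise Lipschitz bound $d_L(f,h)\le K_\ell\,\widehat{d_L}(f,h)$ combined with monotonicity of $\kl^{-1}$ in its first argument (this also works if, as in the paper's clamped and smoothed special cases, $\sigma(\cdot,y_i)$ is read as the scalar true-label probability rather than the softmax vector). The paper only asserts that monotonicity and does not address that $K_\ell\widehat{d_L}/\lambda_\ell$ can exceed $1$, so your short proof of monotonicity and your convention $\kl^{-1}(q,\epsilon)=1$ for $q\ge 1$ are useful additions rather than deviations.
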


\begin{proof}
Following from the Lipschitz-continuity of $\ell$, we have
    \begin{equation*}
        d_L(f,h) = \frac{1}{m}\sum_{i=1}^m \left| \ell(f(\bx_i), y_i) -\ell(h(\bx_i),y_i)\right| \leq \frac{K_{\ell}}{m}\sum_{i=1}^m \|\sigma(f(\bx_i), y_i) - \sigma(h(\bx_i), y_i)\|_1.
    \end{equation*}
    Moreover, as $\kl^{-1}(q,\epsilon)$ is monotonically increasing in $q$, we have :
    \begin{equation*}
        \kl^{-1}\qty(  \E_{h \sim Q}\frac{1}{\lambda_{\ell}}d_L(f,h), \frac{1}{m}\log\frac{1}{\delta}) \leq \kl^{-1}\qty(  \E_{h \sim Q}\frac{K_{\ell}}{\lambda_{\ell}}\frac{1}{m}\sum_{i=1}^m \|\sigma(f(\bx_i), y_i) - \sigma(h(\bx_i), y_i)\|_1, \frac{1}{m}\log\frac{1}{\delta}).
    \end{equation*}
\end{proof}

\section{Additional Results}

\subsection{Special Cases of the Main Results}\label{ss:lipschitz}
\begin{corollary}
In the setting of Theorem~\ref{thm:fullloss}, with $\alpha\in(0,1)$, $\beta = \frac{C}{\alpha}$, the clamped softmax (see Definition~\ref{def:clamped_softmax})
and the clamped cross-entropy loss $\ell(g(\bx),y) = -\ln(\sigma_{\max}(g(\bx),y))$, with probability at least $1-\delta$ over the sampling of $L \sim \calD^m$, we have
\begin{align*}
    \calL_{\calD}(f) &\leq \E_{h \sim Q} \calL_{\calD}(h) + \ln\left(\beta\right)\kl^{-1}\qty( \E_{h \sim Q} \frac{1}{\ln\left(\beta\right)}d_L(f,h), \frac{1}{m}\log\frac{1}{\delta}) \\
    &\leq \E_{h \sim Q} \calL_{\calD}(h) + \ln\left(\beta\right)\kl^{-1}\qty(  \E_{h \sim Q} \frac{\beta}{\ln\left(\beta\right)}\widehat{d_L}(f,h), \frac{1}{m}\log\frac{1}{\delta})
\end{align*}
with 
\begin{equation*}
    \widehat{d_L}(f,h) = \frac{1}{m}\sum_{i=1}^m \| \sigma_{\max}(f(\bx_i),y_i) - \sigma_{\max}(h(\bx_i),y_i)\|_1.
\end{equation*}
\end{corollary}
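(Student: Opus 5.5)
The statement is a direct specialization of \cref{thm:fullloss} and \cref{corr:lip}, so the plan is to check their hypotheses for the clamped cross-entropy and then identify the constants. The first inequality is just \cref{thm:fullloss}. By \cref{def:clamped_softmax}, the clamped cross-entropy $\ell(g(\bx),y) = -\ln(\sigma_{\max}(g(\bx),y))$ takes values in $[0,\ln\frac{C}{\alpha}]$. Indeed, $\sigma_{\max}\in[\frac{\alpha}{C},1]$, so $B_\ell=0$, $T_\ell=\ln\beta$ and $\lambda_\ell=\ln\beta$. Substituting $\lambda_\ell=\ln\beta$ into \cref{thm:fullloss} gives the first line verbatim.

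For the second inequality I will follow \cref{corr:lip}, so I need the loss to be Lipschitz in its argument $\sigma_{\max}(g(\bx),y)$ with constant $K_\ell=\beta$. I will view the loss as the scalar function $u\mapsto -\ln u$ on $[\frac{\alpha}{C},1]$. Its derivative has magnitude $1/u\le C/\alpha=\beta$, so the mean value theorem gives
$$|\ell(f(\bx),y)-\ell(h(\bx),y)| \le \beta\,|\sigma_{\max}(f(\bx),y)-\sigma_{\max}(h(\bx),y)|.$$
Here the $\|\cdot\|_1$ in $\widehat{d_L}$ is just the absolute value of this scalar difference. Averaging over $L$ then gives $d_L(f,h)\le \beta\,\widehat{d_L}(f,h)$ pointwise in $h$, and taking $\E_{h\sim Q}$ preserves the inequality.

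Finally, I will use that $\kl^{-1}(q,\epsilon)$ is nondecreasing in $q$, as noted in the proof of \cref{corr:lip}. Replacing $\E_{h\sim Q}\frac{1}{\ln\beta}d_L(f,h)$ by the larger $\E_{h\sim Q}\frac{\beta}{\ln\beta}\widehat{d_L}(f,h)$ therefore only increases the bound, which yields the second line on the same high-probability event. The argument may exceed $1$; in that case I take $\kl^{-1}$ to return $1$, which keeps monotonicity and makes the bound trivially valid.

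The main subtlety is the Lipschitz step. The function $\max(\frac{\alpha}{C},\cdot)$ is only piecewise smooth, but this causes no trouble because the Lipschitz property is applied to $-\ln$ restricted to the range $[\frac{\alpha}{C},1]$ of $\sigma_{\max}$, not to the softmax logits. Everything else is bookkeeping of constants.
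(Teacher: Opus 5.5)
Your proposal is correct and follows the paper's own proof. You instantiate \cref{thm:fullloss} with $B_\ell=0$ and $\lambda_\ell=\ln\beta$, then apply \cref{corr:lip} by treating the loss as $u\mapsto-\ln u$ on the range of $\sigma_{\max}$, which gives $K_\ell=C/\alpha=\beta$, and finish with monotonicity of $\kl^{-1}$ in its first argument. Two details you add are left implicit in the paper: using the closed interval $[\frac{\alpha}{C},1]$, since the clamp value is attained, and fixing a convention for $\kl^{-1}$ when its first argument exceeds $1$.
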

\begin{proof}
This corollary is an application of Theorem~\ref{thm:fullloss} and Corollary~\ref{corr:lip} to the clamped cross-entropy loss. In Corollary~\ref{corr:lip}, we want to highlight the difference between the clamped softmax of $f$ and $h$. Thus, we consider the loss $\ell$ as simply $-\ln(u)$, with $u \in (\frac{\alpha}{C},1)$.

We now find the Lipschitz constant $K_{\ell}$ by computing the gradient of $\ell$.
\begin{equation*}
    K_{\ell} = \sup_{u \in (\frac{\alpha}{C},1)} \left| \dv{(-\ln(u))}{u}\right| = \sup_{u \in (\frac{\alpha}{C},1)} \left| \dv{\ln(u)}{u}\right| = \sup_{u \in (\frac{\alpha}{C},1)}\left|\frac{1}{u}\right| = \frac{C}{\alpha}.
\end{equation*}
To simplify the notation in the theorem, we choose $\beta = \frac{C}{\alpha}$.
\end{proof}

\begin{corollary}
In the setting of Theorem~\ref{thm:fullloss}, with $\alpha\in(0,1)$, $\beta = \frac{C}{\alpha}$, the smoothed softmax (see Definition~\ref{def:smooth_softmax})
and the smoothed cross-entropy loss $\ell(g(\bx),y) = -\ln(\sigma_{\emph{smooth}}(g(\bx),y))$, with probability at least $1-\delta$ over the sampling of $L \sim \calD^m$, we have
\begin{align*}
     \calL_{\calD}(f) &\leq \E_{h \sim Q} \calL_{\calD}(h) + \ln \left(1 + (1-\alpha)\beta\right)\kl^{-1}\qty(  \E_{h \sim Q}\frac{1}{\ln \left(1 + (1-\alpha)\beta\right)}d_L(f,h), \frac{1}{m}\log\frac{1}{\delta}) \\
    &\leq \E_{h \sim Q} \calL_{\calD}(h) + \ln \left(1 + (1-\alpha)\beta\right)\kl^{-1}\qty( \E_{h \sim Q} \frac{\beta}{\ln \left(1 + (1-\alpha)\beta\right)} \widehat{d_L}(f,h), \frac{1}{m}\log\frac{1}{\delta})
\end{align*}
with 
\begin{equation*}
    \widehat{d_L}(f,h) = \frac{1}{m}\sum_{i=1}^m \| \sigma_{\emph{smooth}}(f(\bx_i),y_i) - \sigma_{\emph{smooth}}(h(\bx_i),y_i)\|_1.
\end{equation*}
\end{corollary}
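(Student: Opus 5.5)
This corollary follows from Theorem~\ref{thm:fullloss} and Corollary~\ref{corr:lip}; the only new work is computing the range $\lambda_\ell$ and the Lipschitz constant $K_\ell$ for the smoothed cross-entropy. The proof mirrors the one for the clamped softmax.

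\textbf{Step 1 (the first inequality).} By Definition~\ref{def:smooth_softmax}, the smoothed softmax takes values in $[\frac{\alpha}{C}, 1-\alpha+\frac{\alpha}{C}]$. Hence the loss $-\ln(\sigma_{\text{smooth}})$ lies in $[B_\ell,T_\ell]$ with
\begin{equation*}
B_\ell = -\ln\left(1-\alpha+\tfrac{\alpha}{C}\right), \qquad T_\ell=\ln\tfrac{C}{\alpha}.
\end{equation*}
Its range is therefore
\begin{equation*}
\lambda_\ell=\ln\frac{1-\alpha+\alpha/C}{\alpha/C}=\ln\bigl(1+(1-\alpha)\tfrac{C}{\alpha}\bigr)=\ln(1+(1-\alpha)\beta).
\end{equation*}
This matches the stated range up to the sign convention on $B_\ell$ in the definition; only $\lambda_\ell$ enters the bound. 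Substituting this $\lambda_\ell$ into Theorem~\ref{thm:fullloss} gives the first inequality verbatim.

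\textbf{Step 2 (the second inequality).} As in the clamped case, I regard the loss as the scalar map $u\mapsto -\ln u$, applied to $u=\sigma_{\text{smooth}}(g(\bx),y)$. This quantity is what appears in $\widehat{d_L}$, and it ranges over $[\frac{\alpha}{C},1-\alpha+\frac{\alpha}{C}]\subseteq(\frac{\alpha}{C},1]$ up to the endpoint. The Lipschitz constant is the supremum of $|1/u|$ over this interval, which is attained at the lower endpoint, so
\begin{equation*}
K_\ell=\frac{C}{\alpha}=\beta.
\end{equation*}
By the mean value theorem, $|\ell(f(\bx_i),y_i)-\ell(h(\bx_i),y_i)|\le\beta\,|\sigma_{\text{smooth}}(f(\bx_i),y_i)-\sigma_{\text{smooth}}(h(\bx_i),y_i)|$. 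Averaging over $L$ gives $d_L(f,h)\le\beta\,\widehat{d_L}(f,h)$. Since $\kl^{-1}(q,\epsilon)$ is nondecreasing in $q$, Corollary~\ref{corr:lip} then yields the second inequality with $\frac{\beta}{\ln(1+(1-\alpha)\beta)}\widehat{d_L}$ as the first argument.

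\textbf{Main obstacle.} Nothing here is deep. The step needing care is the Lipschitz computation: it must be done on the correct domain $[\alpha/C,\,1-\alpha+\alpha/C]$ of $u$ so that the supremum of $1/u$ is finite and equals $\beta$. Note also that the argument of $\kl^{-1}$ may exceed $1$; in that regime the bound is vacuous but still valid, taking $\kl^{-1}$ to be $1$.
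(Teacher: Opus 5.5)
Your proof is correct and follows the paper's argument. You apply Theorem~\ref{thm:fullloss} with $\lambda_\ell=\ln(1+(1-\alpha)\beta)$, bound $d_L\le\beta\,\widehat{d_L}$ by treating the loss as $u\mapsto-\ln u$ on the range of the smoothed softmax (where $\sup|1/u|=C/\alpha$), and conclude by monotonicity of $\kl^{-1}$ as in Corollary~\ref{corr:lip}. Your explicit derivation of $\lambda_\ell$ goes beyond what the paper writes out: you correctly note that the lower end of the loss is $-\ln(1-\alpha+\alpha/C)$, not $\ln(1-\alpha+\alpha/C)$ as in the definition. So does your remark that the first argument of $\kl^{-1}$ in the second inequality can exceed $1$; the paper leaves both points implicit.
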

\begin{proof}
This corollary is an application of Theorem~\ref{thm:fullloss} and Corollary~\ref{corr:lip} to the clamped cross-entropy loss. In Corollary~\ref{corr:lip},
we want to highlight the difference between the smoothed softmax of $f$ and $h$. Thus, we consider the loss $\ell$ as simply $-\ln(u)$, with $u \in \left(\frac{\alpha}{C},1-\alpha+\frac{\alpha}{C}\right)$.

We now find the Lipschitz constant $K_{\ell}$ by computing the gradient of $\ell$.
\begin{equation*}
    K_{\ell} = \sup_{u \in \left(\frac{\alpha}{C},1-\alpha+\frac{\alpha}{C}\right)} \left| \dv{(-\ln(u))}{u}\right| = \sup_{u \in \left(\frac{\alpha}{C},1-\alpha+\frac{\alpha}{C}\right)} \left| \dv{\ln(u)}{u}\right| = \sup_{u \in \left(\frac{\alpha}{C},1-\alpha+\frac{\alpha}{C}\right)}\left|\frac{1}{u}\right| = \frac{C}{\alpha}.
\end{equation*}
To simplify the notation in the theorem, we choose $\beta = \frac{C}{\alpha}$.
\end{proof}

\subsection{Computable PAC-Bayesian Disagreement Bounds}\label{app:ss:pb_disag}

To compute the bounds exactly, we need to use the same trick as in \cref{thm:pbb}, that is we approximate the sampling of the hypothesis from the posterior $Q$ via Monte Carlo Sampling. This result can be applied to the zero-one loss, as we have $\lambda_{\ell} = 1$ and $d_L(\cdot, \cdot) \leq \dzero_U(\cdot, \cdot)$ by \cref{eq:yang}, which doesn't require a labeled set $L$.
 
\begin{lemma}\label{lemma:disagreement_pbb}
 For any distribution $\calD$ over $\calX \times \calY$, for any predictor $f \in \calH$, for any loss $\ell:\R^C \times \calY \to [0,1]$, for any $\delta \in (0,1]$, with probability at least $1-2\delta$ over the draw of $L \sim \calD^m$ and a set of $V$ predictors $h_1, \ldots, h_V \sim Q$, where $Q$ is a dataset-dependent posterior distribution over $\calH$, we have 
\begin{equation*}
     \calL_{\calD}(f) \leq \E_{h \sim Q}\calL_{\calD}(h) + \lambda_{\ell} \kl^{-1}\qty(\kl^{-1}\qty( \frac{1}{V}\sum_{i=1}^V\frac{1}{\lambda_{\ell}}d_L(f,h_i), \frac{1}{V}\log\frac{1}{\delta}), \frac{1}{m}\log\frac{1}{\delta}).
\end{equation*}
\end{lemma}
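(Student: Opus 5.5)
The plan chains two applications of the Chernoff bound of \cref{thm:chernoff_bound} and then combines them with the deterministic step already used in the proof of \cref{thm:fullloss}. First, exactly as in that proof, the triangle inequality gives
\begin{equation*}
\calL_{\calD}(f) - \E_{h \sim Q}\calL_{\calD}(h) \leq \E_{(\bx,y)\sim\calD}\E_{h\sim Q}\left|\ell(f(\bx),y)-\ell(h(\bx),y)\right| \eqqcolon \lambda_{\ell}\, p .
\end{equation*}
Here $p$ is the expectation of the normalized quantity $Z(h,\bx,y) = |\ell(f(\bx),y)-\ell(h(\bx),y)|/\lambda_{\ell} \in [0,1]$, since the loss takes values in $[B_\ell,T_\ell]$. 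It therefore suffices to upper-bound $p$ with probability $1-2\delta$ by the nested $\kl^{-1}$ expression.

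The first step handles the data. Define $q = \E_{h\sim Q} \frac{1}{\lambda_\ell} d_L(f,h)$. By Fubini, $q = \frac{1}{m}\sum_{i=1}^m \E_{h\sim Q} Z(h,\bx_i,y_i)$. The variables $W_i = \E_{h\sim Q}Z(h,\bx_i,y_i)$ are i.i.d.\ in $[0,1]$ over $L\sim\calD^m$ with mean $p$. Applying \cref{thm:chernoff_bound} then gives, with probability $1-\delta$ over $L$,
\begin{equation*}
p \leq \kl^{-1}\qty(q, \tfrac{1}{m}\log\tfrac{1}{\delta}).
\end{equation*}
This is the same step as in \cref{thm:fullloss}.

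The second step handles the Monte Carlo sampling. Conditionally on $L$, the variables $\frac{1}{\lambda_\ell}d_L(f,h_j)$ for $j=1,\ldots,V$ are i.i.d.\ in $[0,1]$ with mean $q$, because $h_j\sim Q$ independently. A second application of \cref{thm:chernoff_bound} gives, with probability $1-\delta$ over the draw of $h_1,\ldots,h_V$,
\begin{equation*}
q \leq \kl^{-1}\qty(\tfrac{1}{V}\sum_{j=1}^V \tfrac{1}{\lambda_\ell}d_L(f,h_j), \tfrac{1}{V}\log\tfrac{1}{\delta}).
\end{equation*}
Since this holds for every fixed $L$, it also holds with probability $1-\delta$ jointly over $(L,h_{1:V})$. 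A union bound makes both events hold simultaneously with probability $1-2\delta$. Because $\kl^{-1}(\cdot,\epsilon)$ is monotonically increasing in its first argument, as already used in \cref{corr:lip}, we can substitute the bound on $q$ into the bound on $p$. Multiplying by $\lambda_\ell$ and combining with the triangle-inequality step then yields the statement.

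The main obstacle is justifying the i.i.d.\ structure in the conditioning argument. $Q$ is dataset-dependent, so it must be independent of $L$: it may depend on $S$ but not on $L$. Likewise $f$ must not depend on $L$, since $L$ is held out. Under these conditions, the first application conditions on $(f,Q)$, and the second conditions on $L$ as well. The normalization needs no extra step: the statement assumes $\ell\in[0,1]$, so $\lambda_\ell=1$, but the argument goes through verbatim for a general range $\lambda_\ell$.
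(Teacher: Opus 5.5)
Your proposal is correct and follows the same route as the paper. It combines the bound of \cref{thm:fullloss} (triangle inequality plus a one-sided Chernoff bound over $L$) with a second one-sided Chernoff bound over the Monte Carlo draws $h_1,\ldots,h_V$ to control $\E_{h\sim Q}\frac{1}{\lambda_{\ell}}d_L(f,h)$, then uses monotonicity of $\kl^{-1}$ in its first argument and a union bound. Your explicit conditioning argument also makes precise an independence requirement the paper leaves implicit: $f$ and $Q$ must not depend on $L$, and the second Chernoff bound is applied conditionally on $L$ and then integrated.
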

\begin{proof}
When computing Theorem~\ref{thm:fullloss} for a Gaussian distribution $Q$ over the predictors, we cannot compute exactly the term 
\begin{equation*}
    \E_{h \sim Q} \frac{1}{\lambda_{\ell}}d_L(f,h)
\end{equation*}
as the expectation doesn't have a closed form. However, similarly to Theorem~\ref{thm:pbb}, we can upper-bound it. Instead of using the two-sided Chernoff bound as in \citet{langford2001not, perez2021tighter}, we use the one-sided Chernoff bound \citep{foong2022note}, as we are explicitly interested in an upper-bound.

With $X_i = \frac{1}{\lambda_{\ell}}d_L(f,h_i)$ and $p = \E[X_i] = \E_{h \sim Q} \frac{1}{\lambda_{\ell}}d_L(f,h)$, with probability at least $1-\delta$ over the sampling of $h_1, \ldots, h_V \sim Q$, we have
\begin{equation*}
    \E_{h \sim Q} \frac{1}{\lambda}d_L(f,h) \leq \kl^{-1}\qty( \frac{1}{V}\sum_{i=1}^V\frac{1}{\lambda_{\ell}}d_L(f,h_i), \frac{1}{V}\log\frac{1}{\delta}).
\end{equation*}

Using the fact that the inverse of the $\kl$ is increasing in its first argument, we have 
\begin{equation}\label{eq:double_kl_pb}
    \kl^{-1}\qty( \E_{h \sim Q}\frac{1}{\lambda_{\ell}}d_L(f,h), \frac{1}{m}\log\frac{1}{\delta}) \leq \kl^{-1}\qty(\kl^{-1}\qty( \frac{1}{V}\sum_{i=1}^V\frac{1}{\lambda_{\ell}}d_L(f,h_i), \frac{1}{V}\log\frac{1}{\delta}), \frac{1}{m}\log\frac{1}{\delta}).
\end{equation}
We finish the proof with a union bound with Theorem~\ref{thm:fullloss} and Equation~\ref{eq:double_kl_pb}.
\end{proof}

\begin{theorem}\label{thm:pbb_disag}
 For any distribution $\calD$ over $\calX \times \calY$, for any predictor $f \in \calH$, for any set $\calH$ of predictors $h : \calX \to \calY$, for any loss $\ell:\R^C \times \calY \to [0,1]$, for any dataset-independent prior distribution $P$ on $\calH$, for any $\delta \in (0,1]$, with probability at least $1-4\delta$ over the draw of $S \sim \calD^n$, $ L \sim\calD^m$ and a set of $2V$ predictors $h_1, \ldots, h_{2V} \sim Q$, where $Q$ is a dataset-dependent posterior distribution over $\calH$, we have 
    \begin{align*}
         \calL_{\calD}(f) &\leq  B_{\ell} + \lambda_{\ell}\kl^{-1}\qty(\kl^{-1}\qty(\frac{1}{V}\sum_{i=1}^V\frac{\hatL_{S}(h_i) - B_{\ell}}{\lambda_{\ell}}, \frac{1}{V}\log\frac{1}{\delta} ), \frac{1}{n} \qty[\KL(\posterior_S ||\prior) + \ln \qty(\frac{2\sqrt{n}}{\delta})]) \\ & + \lambda_{\ell} \kl^{-1}\qty(\kl^{-1}\qty( \frac{1}{V}\sum_{i=V+1}^{2V}\frac{1}{\lambda_{\ell}}d_L(f,h_i), \frac{1}{V}\log\frac{1}{\delta}), \frac{1}{m}\log\frac{1}{\delta}).
    \end{align*}
\end{theorem}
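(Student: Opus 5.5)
The plan is to assemble Theorem~\ref{thm:pbb_disag} from two independent pieces by a union bound. The first piece is a Monte Carlo PAC-Bayes bound on $\E_{h\sim Q}\calL_{\calD}(h)$ built from the first $V$ samples $h_1,\ldots,h_V$. The second piece is Lemma~\ref{lemma:disagreement_pbb}, applied with the remaining samples $h_{V+1},\ldots,h_{2V}$. Splitting the $2V$ posterior samples into two disjoint batches makes each Monte Carlo estimate an average of i.i.d.\ draws from $Q$. Within each batch these draws are independent of whatever the other piece uses, so neither Chernoff step has to account for the other estimate.

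\textbf{Surrogate term.} I would invoke Theorem~\ref{thm:pbb} with $\delta'=\delta$ on $S\sim\calD^n$ and $h_1,\ldots,h_V$. The displayed expression uses $\log\frac{1}{\delta}$ in the inner $\kl^{-1}$ rather than the $\log\frac{2}{\delta'}$ of Theorem~\ref{thm:pbb}. The reason is that only an upper bound on $\E_{h\sim Q}\hatL_S(h)$ is needed, so I would redo the inner step with the one-sided Chernoff bound of Theorem~\ref{thm:chernoff_bound}, as in the proof of Lemma~\ref{lemma:disagreement_pbb}. I would apply it to the i.i.d.\ variables $(\hatL_S(h_i)-B_\ell)/\lambda_\ell\in[0,1]$, conditionally on $S$. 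This gives, with probability at least $1-\delta$,
\[
\E_{h\sim Q}\frac{\hatL_S(h)-B_\ell}{\lambda_\ell}\le \kl^{-1}\Bigl(\frac{1}{V}\sum_{i=1}^V\frac{\hatL_S(h_i)-B_\ell}{\lambda_\ell},\frac{1}{V}\log\frac{1}{\delta}\Bigr).
\]
Next I need the $\kl$ form of the PAC-Bayes bound, namely $\kl\bigl(\E_Q\hatL_S,\E_Q\calL_\calD\bigr)\le\frac{1}{n}\bigl[\KL(Q\|P)+\ln\frac{2\sqrt n}{\delta}\bigr]$ after normalisation, which holds with probability at least $1-\delta$. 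This is the Maurer form underlying Theorem~\ref{thm:pbb}. Combining the two requires chaining through the monotonicity of $\kl^{-1}$ in its first argument. Together they give the first line of the bound with failure probability $2\delta$.

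\textbf{Disagreement term.} Lemma~\ref{lemma:disagreement_pbb}, applied with $L\sim\calD^m$ and the samples $h_{V+1},\ldots,h_{2V}$, gives
\[
\calL_\calD(f)\le\E_{h\sim Q}\calL_\calD(h)+\lambda_\ell\kl^{-1}\Bigl(\kl^{-1}\Bigl(\tfrac{1}{V}\textstyle\sum_{i=V+1}^{2V}\tfrac{1}{\lambda_\ell}d_L(f,h_i),\tfrac{1}{V}\log\tfrac1\delta\Bigr),\tfrac1m\log\tfrac1\delta\Bigr)
\]
with failure probability $2\delta$. Since $Q$ may depend on $S$, I would apply the lemma conditionally on $S$. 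This is legitimate because $L$ and the second batch are independent of $S$ given $Q$, and $f$ is fixed relative to $L$.

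\textbf{Combining and the main obstacle.} A union bound over the four events gives probability at least $1-4\delta$. Substituting the upper bound on $\E_{h\sim Q}\calL_\calD(h)$ from the first piece into the second gives the claimed inequality. The main obstacle is the inner one-sided step of the surrogate term. Theorem~\ref{thm:pbb} is stated with $\log\frac{2}{\delta'}$, so I have to check that its proof goes through with the one-sided Chernoff bound, which needs only $\log\frac1\delta$. Since $\kl^{-1}(q,\epsilon)$ is increasing in both arguments, the chaining itself is routine.
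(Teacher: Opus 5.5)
Your proposal is correct and matches the paper's proof: a union bound of Theorem~\ref{thm:pbb} (with $\delta'=\delta$, and the inner two-sided Monte Carlo step replaced by the one-sided bound of Theorem~\ref{thm:chernoff_bound}, which turns $\log\frac{2}{\delta}$ into $\log\frac{1}{\delta}$) and Lemma~\ref{lemma:disagreement_pbb} applied to the second batch of $V$ samples, giving four failure events of probability at most $\delta$ each. Your conditioning-on-$S$ argument makes explicit what the paper leaves implicit. Note that the disjoint split of the $2V$ samples is not needed for validity, since a union bound requires no independence between the events.
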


\begin{proof}
    This result simply stems from the union bound of Theorem~\ref{thm:pbb} and Lemma~\ref{lemma:disagreement_pbb}. We remove the factor of $2$ in the $\frac{1}{m}\log\frac{2}{\delta}$ by using the one-sided Chernoff bound, similarly to the proof of Lemma~\ref{lemma:disagreement_pbb}.
\end{proof}

\subsection{Using Other Comparator Functions}

Up until now, all the results are presented with the inverse of the $\kl$. This function gives the tightest bounds, but the bounds obtained with this function are not intuitive. For example, it might be beneficial to use a bound with an analytical upper-bound such as the quadratic loss $\Delta_2(q,p) = 2(q-p)^2$ or Catoni's distance $\Delta_C(q,p) = -\ln (1-p(1-e^{-C})) - Cq$. Both are upper-bounded by $\kl(q,p)$, which can be proven by Pinsker's inequality for the quadratic loss and Proposition 2.1 of \citep{germain2009pac} for Catoni's distance. We present the following result, which is very intuitive and has probably been used before. However, we were not able to find it in the literature, which is why we prove it formally here. 

\begin{lemma}\label{lem:1}
    If $\Delta(q,p) \leq \kl(q,p) \forall q,p\in [0,1]$, for any $\epsilon > 0$, we have $\kl^{-1}(q,\epsilon) \leq \Delta^{-1}(q,\epsilon)$. 
\end{lemma}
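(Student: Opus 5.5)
The plan is to work straight from the definitions of the two inverses, $\kl^{-1}(q,\epsilon)=\sup\{p\in[0,1]:\kl(q,p)\le\epsilon\}$ and, analogously, $\Delta^{-1}(q,\epsilon)=\sup\{p\in[0,1]:\Delta(q,p)\le\epsilon\}$. The key observation is that pointwise domination $\Delta\le\kl$ reverses into an inclusion of sublevel sets. Fix $q\in[0,1]$ and $\epsilon>0$, and set
\begin{equation*}
A_{\kl}=\{p\in[0,1]:\kl(q,p)\le\epsilon\},\qquad A_{\Delta}=\{p\in[0,1]:\Delta(q,p)\le\epsilon\}.
\end{equation*}
If $p\in A_{\kl}$, then the hypothesis gives $\Delta(q,p)\le\kl(q,p)\le\epsilon$, so $p\in A_{\Delta}$. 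Hence $A_{\kl}\subseteq A_{\Delta}$.

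Since the supremum is monotone under inclusion, it follows that $\sup A_{\kl}\le\sup A_{\Delta}$, which is exactly $\kl^{-1}(q,\epsilon)\le\Delta^{-1}(q,\epsilon)$.

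The only step that needs care is well-definedness, namely that both sets are nonempty so the suprema are meaningful. This holds because $\kl(q,q)=0\le\epsilon$, so $q\in A_{\kl}\subseteq A_{\Delta}$. Both sets are also contained in $[0,1]$, so both suprema exist in $[0,1]$. The argument is then independent of any attainment issue: it does not matter whether the $\argsup$ is actually attained, since only the supremum is compared. This is also why there is no real obstacle in the proof. If some comparator $\Delta$ (for instance Catoni's) is defined with an inverse taken over a larger range or with a different convention, I would restrict it to $p\in[0,1]$ so that the two sets live in the same ambient set before comparing suprema.

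As a remark, I would note the immediate consequence for the paper's bounds. Because every bound above has the form $\lambda_\ell\kl^{-1}(\cdot,\cdot)$ with $\lambda_\ell>0$, replacing $\kl^{-1}$ with $\Delta^{-1}$ yields a valid, though looser, bound. Examples are $\Delta_2$ via Pinsker's inequality, which gives $\Delta_2^{-1}(q,\epsilon)=\min(1,q+\sqrt{\epsilon/2})$, and Catoni's distance.
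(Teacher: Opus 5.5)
Your proof is correct, but it takes a different and cleaner route than the paper's.

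The paper restricts to $p\in[q,1]$ and names $p_1=\kl^{-1}(q,\epsilon)$ and $p_2=\Delta^{-1}(q,\epsilon)$ as the points where $\kl(q,p_1)=\epsilon$ and $\Delta(q,p_2)=\epsilon$. It then derives $\Delta(q,p_1)\le\kl(q,p_1)=\Delta(q,p_2)$. Finally, it uses that $\Delta(q,\cdot)$ is increasing on $[q,1]$ to conclude $p_1\le p_2$.

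That argument relies on structure the hypothesis does not provide:
\begin{itemize}
\item Both suprema must be attained with equality. This fails for $\kl$ at $q=1$, where $\kl^{-1}(1,\epsilon)=1$ but $\kl(1,1)=0$. It also fails for $\Delta_2$ when $q+\sqrt{\epsilon/2}>1$, since the inverse is then clamped at $1$ with $\Delta_2(q,1)<\epsilon$.
\item $\Delta(q,\cdot)$ must be strictly increasing beyond $q$. This is needed to turn $\Delta(q,p_1)\le\Delta(q,p_2)$ into $p_1\le p_2$.
\end{itemize}

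Your sublevel-set inclusion $A_{\kl}\subseteq A_{\Delta}$ uses only the pointwise domination and the definition of the supremum. It therefore applies to any comparator dominated by $\kl$, monotone or not, and handles the boundary cases automatically. You also never need the paper's reduction to the branch $p\ge q$.

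Indeed, the paper's last step is best justified by your observation: $\Delta(q,p_1)\le\kl(q,p_1)\le\epsilon$ places $p_1$ in $A_{\Delta}$, so $p_1\le\sup A_{\Delta}$. The paper's version adds a geometric picture of two increasing curves crossing the level $\epsilon$. Mathematically, yours is the more general and more rigorous argument.
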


\begin{proof}
    For any comparator function, we have 
    \begin{align*}
        \Delta^{-1}(q,\epsilon) &= {\arg\sup}_{0\leq p \leq 1} \left\{\Delta(q,p) \leq \epsilon\right\} \\
        &= {\arg\sup}_{q\leq p \leq 1} \left\{\Delta(q,p) \leq \epsilon\right\}.
    \end{align*}
Indeed, for most $\Delta$, there exists two solutions, one where $p \leq q$ and one where $q \leq p$. Obviously, as we take the arg-supremum, the second solution is always chosen. Thus, we consider only $q \leq p \leq 1$.

Let $p_1 = \kl^{-1}(q,\epsilon)$ be the solution such that $\kl(q,p_1) = \epsilon$. Let $p_2 = \Delta^{-1}(q,\epsilon)$ be the solution such that $\Delta(q,p_2) = \epsilon$. From the assumption that $\Delta(q,p) \leq \kl(q,p) \forall q,p\in [0,1]$, we have
\begin{align*}
    \kl(q,p_1) = \Delta(q,p_2) =\epsilon  \land \Delta(q,p_1) \leq \kl(q,p_1) \implies \Delta(q,p_1) \leq \Delta(q,p_2).
\end{align*}

As $\Delta(q,p)$ is increasing for $p \geq q$, we have that : 
\begin{equation*}
    \Delta(q,p_1) \leq \Delta(q,p_2) \implies p_1 \leq p_2 \implies \kl^{-1}(q,\epsilon) \leq \Delta^{-1}(q,\epsilon).
\end{equation*}
\end{proof}

\section{Experiments} \label{app:experiments}

\paragraph{Devices.}
The experiments were run on three different devices. The target neural networks, the experiments with Pick-To-Learn and the quantization experiments were computed on a computer with Python 3.12.3 and a NVIDIA GeForce RTX 4090. The coreset experiments and the model compression experiments were computed with Python 3.12.4 and a NVIDIA H100 SXM5. Finally, the PAC-Bayesian experiments and the model distillation experiments were computed with Python 3.12.4 and a NVIDIA A100 SXM4.

\paragraph{Libraries.}

All libraries used can be found within the code here :  \url{https://github.com/GRAAL-Research/Bound-to-Disagree}. Notably, we use DeepCore \citep{guo2022deepcore} (MIT license), PACTL \citep{lotfi2022pac} (Apache 2.0 License), PAC-Bayes with Backprop \citep{perez2021tighter} (CC-BY 4.0 License), PyTorch \citep{Ansel_PyTorch_2_Faster_2024} (BSD 3-Clause License), Lightning \citep{Falcon_PyTorch_Lightning_2019} (Apache 2.0 license), Loralib \citep{hu2022lora} (MIT License), Transformers \citep{wolf2020huggingface} (Apache 2.0 License), Scikit-Learn \citep{scikit-learn} (BSD 3-Clause License), NumPy \citep{harris2020numpy} (NumPy license), Weight and Biases \citep{wandb} (MIT License), ScheduleFree \citep{defazio2024road} (Apache 2.0 License), TorchAO \citep{or2025torchao} (BSD 3-Clause License). We also use the KL inversion function of \citep{viallard2021self}, which is distributed under MIT License.

\paragraph{Datasets.}

The datasets used are the MNIST dataset \citep{lecun1998_mnist} (MIT License), the CIFAR10 dataset \citep{cifar10} and the Amazon polarity dataset \citep{zhang2015character} (Apache 2.0 License). There is no explicit license for CIFAR10, but the authors simply ask the user to cite this technical report : \citet{cifar10}.

\subsection{Target Model}

For MNIST, the models were trained using data augmentation for 200 epochs or until the validation error hasn't decreased for 10 epochs. For CIFAR10, the model was trained for 200 epochs. The models on Amazon polarity are trained for 10 epochs or until the validation error hasn't decreased for 2 epochs. The models for MNIST and CIFAR10 are randomly initialized, but for Amazon polarity the models are initialized with the pretrained weights from the Transformers library \citep{wolf2020huggingface}. In some experiments, to remove the need for learning rate scheduling, we try the optimizers SGDFree and AdamFree from the ScheduleFree library \citep{defazio2024road} or the COCOB optimizer from \citet{orabona2017training}. Other times, we use the OneCycle learning rate scheduler \citep{smith2019super} implemented in PyTorch.  

We present the hyperparameter grids used for each problem.

\textbf{Hyperparameters for MNIST}

\begin{multicols}{2}
\begin{itemize}
    \item Dropout probability : $\{0.2, 0.5\}$
    \item Optimizer : $\{$ Adam, AdamFree, COCOB $\}$
    \item Training learning rate : $\{10^{-3}, 10^{-4}\}$
    \item Weight decay :  $\{0.1, 0.01, 0.001\}$
\end{itemize}
\end{multicols}

\textbf{Hyperparameters for CIFAR10} 

\begin{multicols}{2}
\begin{itemize}
    \item Model type : $\{$ CNN, ResNet18 $\}$
    \item Dropout probability : 0.0
    \item Optimizer : SGD
    \item Learning rate scheduler : OneCycle
    \item Training learning rate : $\{0.05, 0.01, 0.005\}$
    \item Weight decay :  $\{10^{-3}, 10^{-4}\}$
\end{itemize}
\end{multicols}

\textbf{Hyperparameters for Amazon polarity} 

\begin{multicols}{2}
\begin{itemize}
    \item Model type : $\{$ DistilBERT, GPT2$\}$
    \item Max epochs : 2
    \item Dropout probability : 0.0
    \item Optimizer : SGD
    \item Learning rate scheduler : OneCycle
    \item Training learning rate : $2\times 10^{-5}$
\end{itemize}
\end{multicols}

\subsection{Partition-Based Bounds}
 We compute the bound both for the bounded cross-entropy loss and the zero-one loss for all of our target models. For each model and for each loss, we do a grid search over the following hyperparameters : 
\begin{multicols}{2}
\begin{itemize}
    \item Number of subsets $K$ : [5, 10, 20, 50, 100, 200]
    \item $\alpha$ = $[20, 50, 100, \frac{n (K + n)}{K(4n-3)}]$
\end{itemize}
\end{multicols}

Given one hyperparameter combination, we choose $\gamma = (\frac{\delta}{2})^{\frac{-1}{\alpha}}$ such that the bound holds in $1-\gamma^{-\alpha} - \frac{\delta}{2} = 1 - ((\frac{\delta}{2})^{\frac{-1}{\alpha}})^{-\alpha} - \frac{\delta}{2} = 1-\delta$ with $\delta = 0.01$. We use a union bound over the 24 combinations of the grid search to get a bound that holds simultaneously for all possibilities. 

\subsubsection{Ablation Study on the Partitioning Method}\label{app:partition}

To compute their bound, \citet{than2025non} partitions the space using a K-means clustering applied to the training set. However, to the best of our knowledge, this choice of data-dependent partition seems to violate the statement of their theorem. To compute the bound, we try two techniques : computing the partition on the unlabeled disagreement set and using random clusters. As their bound doesn't consider unlabeled data points, the disagreement approach is completely valid. However, this is not the setting of the original paper. For the random clusters, we sample $K$ centroids from a uniform distribution and assign the data points to the cluster of the closest centroid. We repeat this experiments five times and consider the best cluster.

\begin{figure}[!h]
    \centering
    \begin{subfigure}[b]{0.49\columnwidth}
            \includegraphics[width=\linewidth]{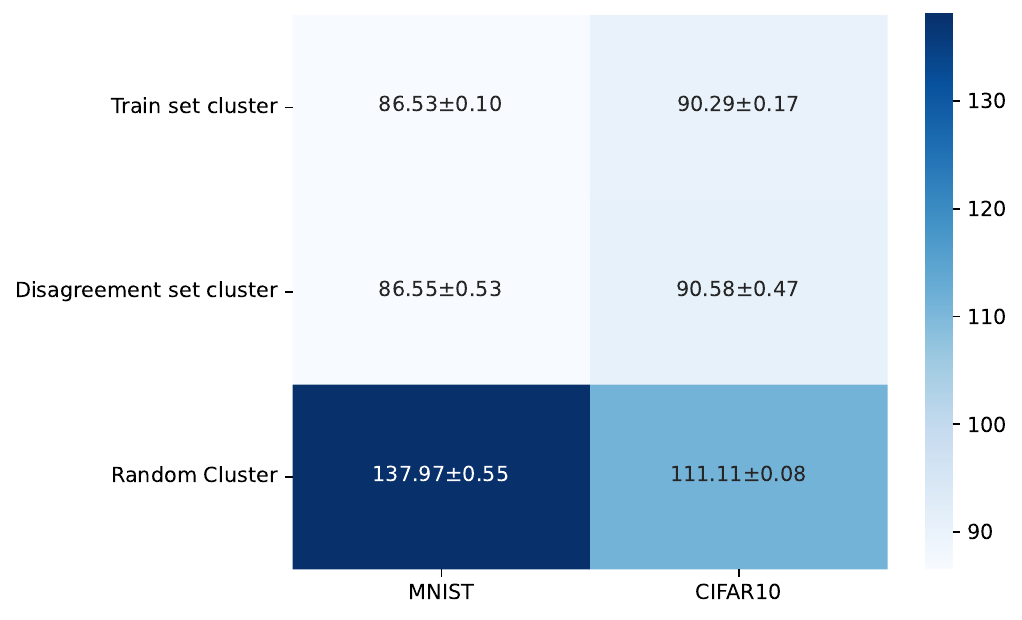}
            \caption{Zero-one loss}
    \end{subfigure}
    \begin{subfigure}[b]{0.49\columnwidth}
            \includegraphics[width=\linewidth]{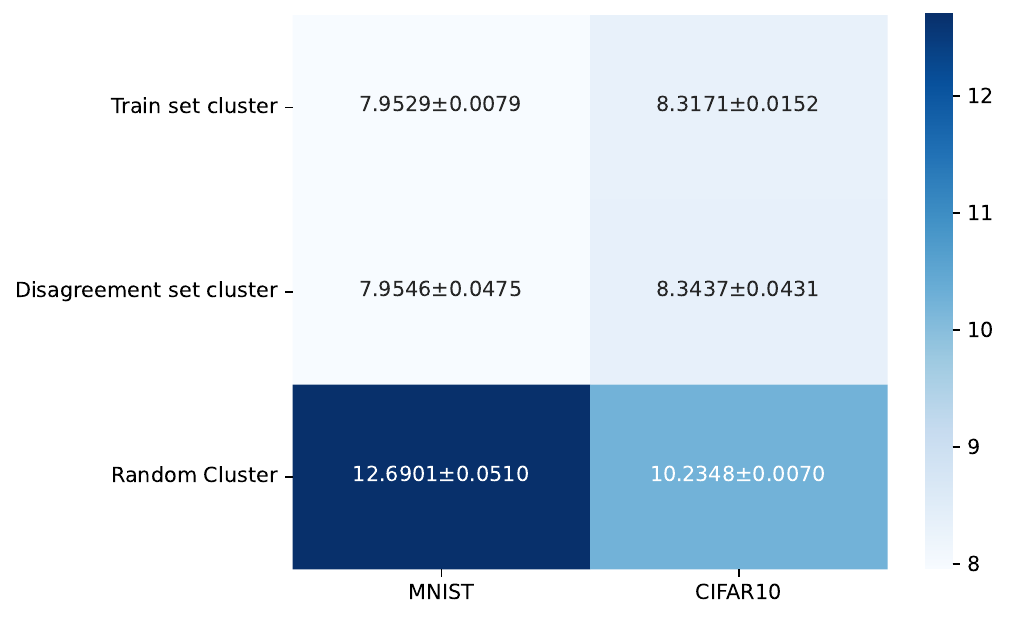}
            \caption{Smoothed cross-entropy loss}
    \end{subfigure}
    \caption{Comparison of the different approaches to compute the partition-based bounds. We consider 5 different seeds for the random clusters. }
    \label{fig:partition_based}
\end{figure}

Although the train set cluster and the disagreement set cluster were obtained using different datasets, they are almost identical. Moreover, we can see that in their setting (without a disagreement set), the only valid option is to use random clusters, which leads to vacuous generalization bounds. Surprisingly enough, the bounds are actually worse on MNIST than CIFAR10.

\subsection{Norm-Based Bounds}

We compute the bound of \cref{thm:norm} for each target model by adapting the code provided by \citep{galanti2023normbased}. As suggested in the paper, we use $\gamma = 1$.

\subsection{Sample Compression Experiments}\label{ss:appendix_sample_compression}

We use the Pick-To-Learn with Early Stopping \citep{marks2025pick} implementation of \citet{bazinet2024}. We use the coreset methods implemented by the DeepCore library \citep{guo2022deepcore}. All approach use \cref{thm:binom_tail} for the zero-one loss and \cref{thm:bazinet} for the cross-entropy loss. The exceptions include Pick-To-Learn with the zero-one loss, as Pick-To-Learn enjoys the tight generalization bounds of \cref{thm:p2l}, for which it was designed. The other exception is the Random Coreset approach, which can be used with the following test set bounds : \cref{thm:test_set_data} for the zero-one loss and \cref{thm:test_set_chernoff_bound} for the cross-entropy loss.

We now prove that coreset methods are sample-compression algorithms. Let $\Ccal : \cup_{1\leq n \leq \infty} (\calX \times \calY)^n \to \cup_{m \leq n} (\calX \times \calY)^m$ denote some coreset method that takes a dataset $S$ and returns a compression set $S_{\bfi}$. Let $\mathrm{SGD}$ denote stochastic gradient descent. Then, we can apply the sample compression bounds for the predictors outputted by $A : \mathrm{SGD} \circ \Ccal$ with the compression set $S_{\bfi} = \Ccal(S)$ and the reconstruction function $\scriptR = \mathrm{SGD}$. Then, we have proven that $A(S) = \scriptR(S_{\bfi}) = \mathrm{SGD}(\Ccal(S))$.

We present the hyperparameter grids. 

\textbf{Hyperparameters for P2L on MNIST}
\begin{multicols}{2}
\begin{itemize}
    \item Dropout probability : $\{0.2, 0.5\}$
    \item Optimizer : $\{$ Adam, AdamFree $\}$
    \item Training learning rate : $\{10^{-3}, 10^{-4}\}$
    \item Weight decay : $\{0.01, 0.1\}$
    \item Smoothing parameter $\alpha$ : $\{10^{-3}, 10^{-4}, 10^{-5}\}$
    \item Softmax : $\{$ Smoothed, Clamped$\}$
\end{itemize}
\end{multicols}

\textbf{Hyperparameters for P2L on CIFAR10}
\begin{multicols}{2}
\begin{itemize}
    \item Model type : $\{$ CNN, ResNet18 $\}$
    \item Dropout probability : 0.5
    \item Optimizer : $\{$ SGD, SGDFree $\}$
    \item Training learning rate : $\{0.05, 0.01, 0.005\}$
    \item Weight decay : $\{10^{-3}, 10^{-4}\}$
    \item Smoothing parameter $\alpha$ : $\{10^{-3}, 10^{-4}, 10^{-5}\}$
    \item Softmax : $\{$ Smoothed, Clamped$\}$
\end{itemize}
\end{multicols}

\textbf{Hyperparameters for coresets on MNIST}
\begin{multicols}{2}
\begin{itemize}
    \item Dropout probability : $\{0.2, 0.5\}$
    \item Coreset method : $\{$ Random Coreset, k-Center Greedy, Forgetting, DeepFool $\}$
    \item Percentage of dataset used as coreset : $\{0.1\%, 0.5\%, 1\%, 5\%, 10\%, 15\%, 20\%\}$
    \item Optimizer : $\{$ Adam, AdamFree $\}$
    \item Training learning rate : $\{10^{-3}, 10^{-4}\}$
    \item Weight decay : $\{0.01, 0.1\}$
    \item Smoothing parameter $\alpha$ : $\{10^{-3}, 10^{-4}, 10^{-5}\}$
    \item Softmax : $\{$ Smoothed, Clamped$\}$
\end{itemize}
\end{multicols}

\textbf{Hyperparameters for coresets on CIFAR10}
\begin{multicols}{2}
\begin{itemize}
    \item Model type : ResNet18
    \item Dropout probability : 0.5
    \item Coreset method : $\{$ Random Coreset, k-Center Greedy, Forgetting, DeepFool $\}$
    \item Percentage of dataset used as coreset~: $\{0.1\%, 0.5\%, 1\%, 5\%, 10\%, 15\%, 20\%, 30\%, 40\%, 50\%\}$
    \item Optimizer : $\{$ SGD, SGDFree $\}$
    \item Training learning rate : $\{0.01, 0.05\}$
    \item Weight decay : $5\times 10^{-4}$
    \item Smoothing parameter $\alpha$ : $\{10^{-3}, 10^{-4} \}$
    \item Softmax : $\{$ Smoothed, Clamped$\}$
\end{itemize}
\end{multicols}

We report the results for the sample compression experiments on CIFAR10 in the following \cref{fig:bar_graph_cifar10}.
\begin{figure*}[!h]
    \centering
    \includegraphics[width=0.49\linewidth]{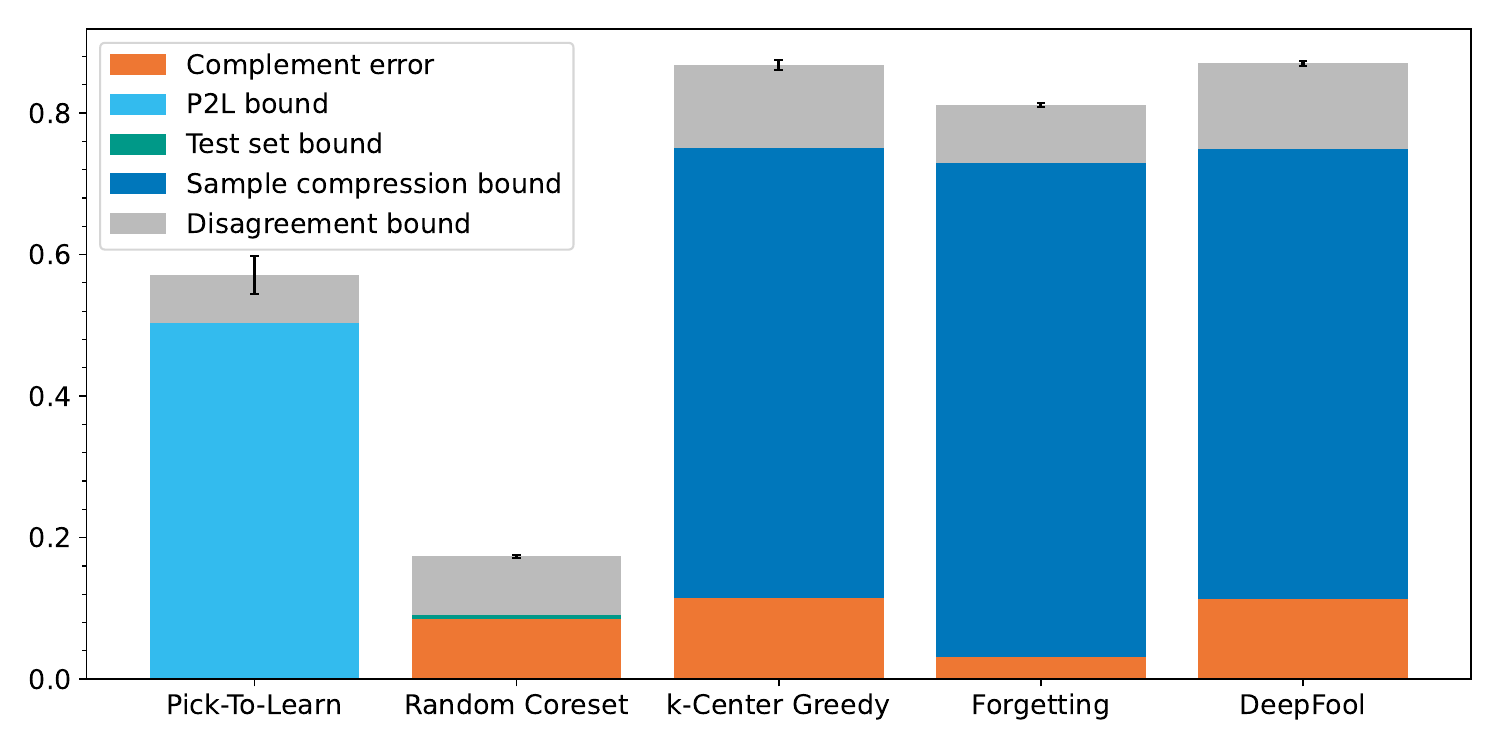}
    \includegraphics[width=0.49\linewidth]{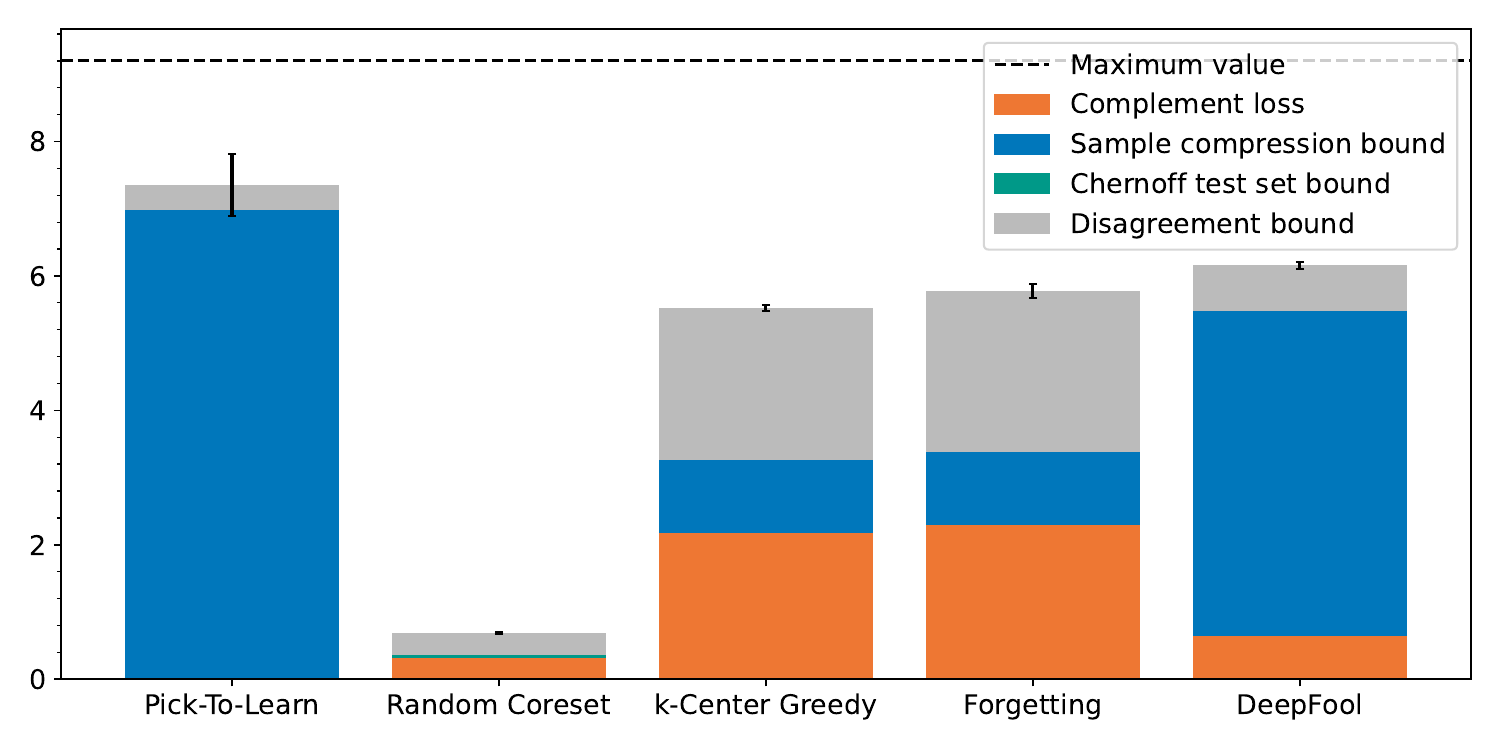}
    \caption{Illustration of the behavior of our disagreement bounds on CIFAR10 using sample-compression methods.}
    \label{fig:bar_graph_cifar10}
\end{figure*}

\subsubsection{Ablation Study for the Bounded Cross-Entropy Loss}\label{sss:ablation}

To choose which hyperparameter to use for the bounded cross-entropy loss, we compare the average cross-entropy loss obtained over all the combinations of hyperparameters for the coreset methods. After computing all possible values, we take the mean over all dimensions except the smoothing parameter $\alpha$ and the softmax type : clamped or smoothed. We report the results in \cref{fig:ablation_softmax}. 

For both experiments, the optimal parameter seems to be $\alpha = 10^{-3}$. The gap between all values of $\alpha$ are not significant, as their standard deviation overlaps. However, on the 448 combinations of hyperparameters for MNIST (with 5 seeds per combinations) and 320 combinations for CIFAR10 (with 5 seeds per combinations), it seems like the parameter $\alpha = 10^{-3}$ leads to the best result. As for the type of softmax, the smoothed softmax seems to lead to slightly better results. Thus, we choose the smoothed softmax with $\alpha = 10^{-3}$ in all other experiments.

\begin{figure}[!h]
    \begin{subfigure}[b]{0.49\columnwidth}
        \centering
            \includegraphics[width=\columnwidth]{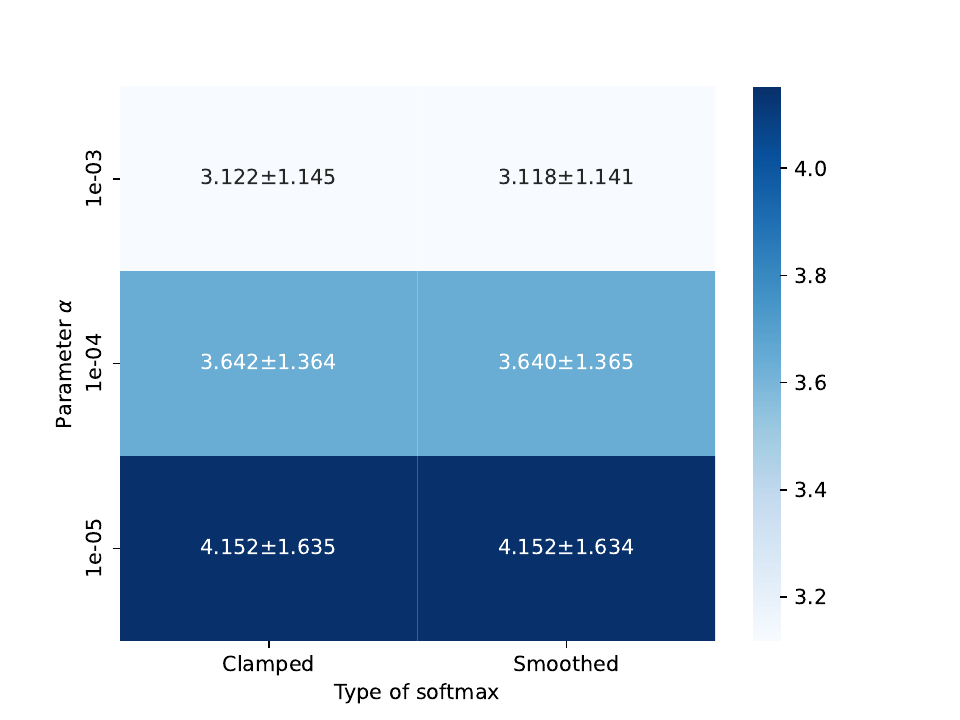}
    \caption{MNIST}
    \label{fig:ablation_mnist}
    \end{subfigure}
    \hfill
    \begin{subfigure}[b]{0.49\columnwidth}
        \centering
            \centering
    \includegraphics[width=\linewidth]{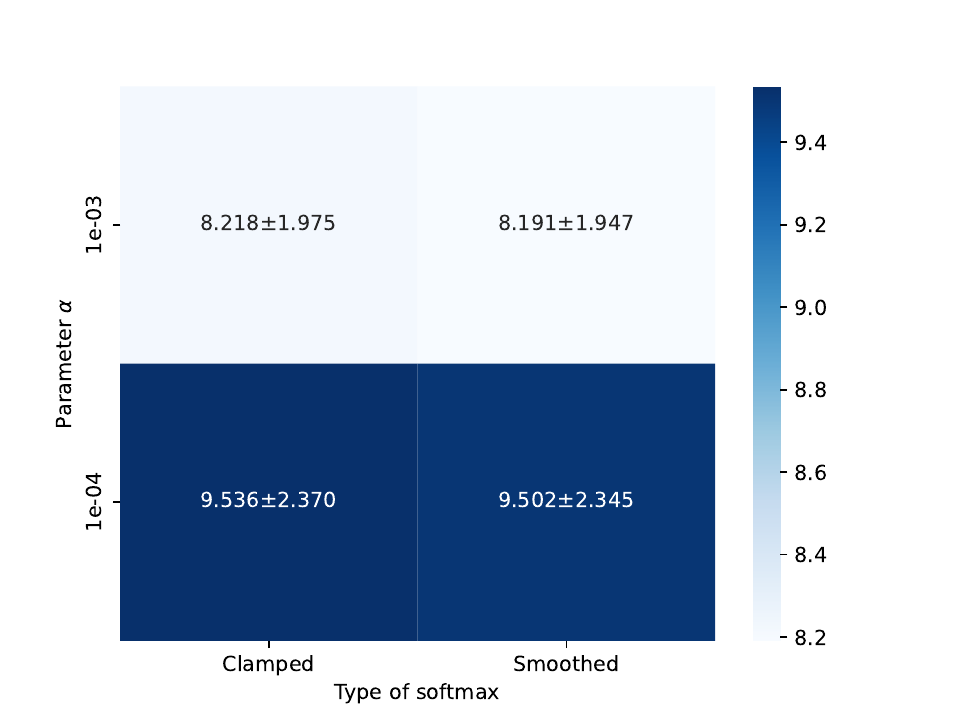}
    \caption{CIFAR10}
    \label{fig:ablation_cifar10}
    \end{subfigure}
    \caption{Ablation study on the hyperparameters of the bounded cross-entropy loss. We report the mean of the cross-entropy loss over all coreset methods experiments. }
    \label{fig:ablation_softmax}
\end{figure}

\subsubsection{Ablation Study for the Size of the Disagreement Set}

Although acquiring unlabeled data is generally easier and cheaper than acquiring labeled data, the cost of generating a large unlabeled set of data can still be prohibitive. Thus, in \cref{fig:ablation_disagreement}, we study the effect of the size of the disagreement set on the disagreement bound of \cref{thm:bin_disag} across 20 subset sizes and five seeds. To do so, we train CNNs on MNIST with random coresets. The original disagreement set contains 10800 data points. The smallest subset contains 1\% of the disagreement set, comprising 108 data points.

We observe that when the disagreement set is smaller than 1000 data points, the bound has high variance. However, when the number of data points becomes greater than a thousand, the improvement in the tightness of the bound becomes marginal. From this small experiment, we argue that the minimum number of data points needed to obtain tight bounds would be between a thousand and two thousand data points.

\begin{figure}[!t]
    \centering
    \includegraphics[width=0.7\linewidth]{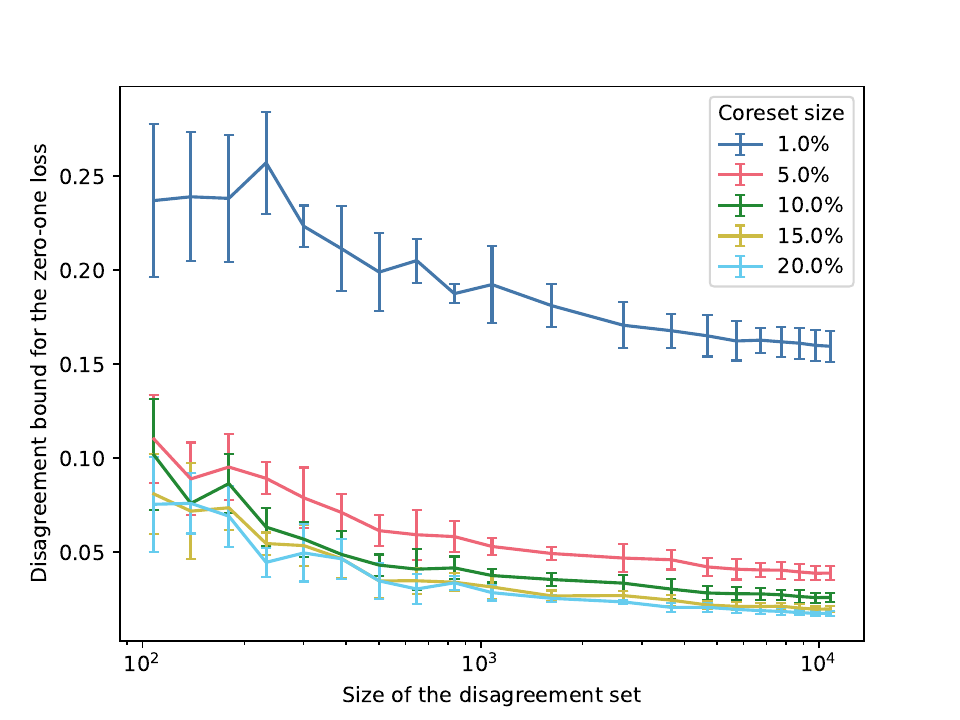}
    \caption{Illustration of the behavior of the disagreement bound with respect to the size of the disagreement set.}
    \label{fig:ablation_disagreement}
\end{figure}

\subsection{Model Compression Experiments}

For the model compression experiments, we pretrain randomly initialized models on a subset of the training set, over which we then add LoRA adapters implemented by \citet{hu2022lora} and the subspace compression approach implemented by \citet{lotfi2022pac}. This SubLoRA model is then trained on the remaining data points, where the bound is also computed.

\textbf{Hyperparameters for MNIST}
\begin{multicols}{2}
\begin{itemize}
    \item Portion of training set for pretraining : 20\%
    \item Number of pretraining epochs : 100
    \item Pretraining learning rate : $10^{-3}$
    \item Dropout probability : 0.0
    \item Optimizer : AdamFree
    \item Training learning rate : $\{10^{-2},  10^{-4}\}$
    \item Weight decay : 0.1
    \item Random subspace projector : Dense
    \item Dimension of the subspace : $\{100, 500, 1000 \}$
    \item Level of quantization : $\{5, 20\}$
    \item Rank of LoRA : $\{0, 2, 4\}$
\end{itemize}
\end{multicols}

\textbf{Hyperparameters for CIFAR10}
\begin{multicols}{2}
\begin{itemize}
    \item Portion of pretraining set : 50\%
    \item Number of pretraining epochs : 150
    \item Pretraining learning rate : 0.05
    \item Dropout probability : 0.2
    \item Optimizer : SGD
    \item Learning rate scheduler : OneCycle
    \item Training learning rate : $\{0.01, 0.005\}$
    \item Max epochs : $\{500, 1000\}$
    \item Weight decay : $10^{-4}$
    \item Random subspace projector : $\{$ Dense, Rounded Double Kronecker $\}$
    \item Dimension of the subspace : $\{100, 500 \}$
    \item Level of quantization : $\{5, 10, 20\}$
    \item Rank of LoRA : $\{0, 2, 4\}$
\end{itemize}
\end{multicols}

We report the results for the cross-entropy loss on both MNIST and CIFAR10 in \cref{tab:model_comp_loss}.
\begin{table*}[!h]
    \centering
    \caption{Generalization bounds on the smoothed cross-entropy loss achieved on MNIST and CIFAR10 using model compression bounds.}
    \begin{tabular}{lccccccc}
    \toprule
        \multirow{2}{4em}{Dataset} & \multicolumn{3}{c}{Surrogate model} & \multicolumn{2}{c}{Target model}  \\ \cmidrule(l{2pt}r{2pt}){2-4} \cmidrule(l{2pt}r{2pt}){5-6}
         & Test loss & Size (KB) & MC Bound & Test loss & Our bound\\ \midrule 
    MNIST & 0.0273$\pm$0.0015 & 0.0356$\pm$0.0008 & 0.1338$\pm$0.0040 &  0.0150$\pm$0.0006& 0.1756$\pm$0.0041 \\
CIFAR10 & 0.7005$\pm$0.0243 & 0.0336$\pm$0.0014 & 1.0978$\pm$0.0195 & 0.2247$\pm$0.0060 & 1.7851$\pm$0.0223 \\
         \bottomrule
    \end{tabular}
    \label{tab:model_comp_loss}
\end{table*}

\subsection{PAC-Bayes Experiments}

For the PAC-Bayes experiments, we pretrain randomly initialized models on a subset of the training set. We then add Gaussian distributions over each weight, with learnable mean and standard deviation parameters. The stochastic model is then trained on the remaining data points, where the bound is also computed.

\begin{table*}[!h]
    \centering
    \caption{Generalization bounds on the smoothed cross-entropy loss achieved on MNIST and CIFAR10 using PAC-Bayes bounds.}
    \begin{tabular}{lccccccc}
    \toprule
        \multirow{2}{4em}{Dataset} & \multicolumn{3}{c}{Surrogate model} &  \multicolumn{2}{c}{Target model}  \\ \cmidrule(l{2pt}r{2pt}){2-4}  \cmidrule(l{2pt}r{2pt}){5-6}
        & Test loss & $\KL$ & PB Bound & Test loss & Our bound\\ \midrule 
MNIST & 0.0286$\pm$0.0030 & 0.7425$\pm$0.1188 & 0.1235$\pm$0.0059 &  0.0150$\pm$0.0006 & 0.2592$\pm$0.0087 \\
CIFAR10 & 0.5440$\pm$0.0568 & 1.6239$\pm$0.9202 & 0.7404$\pm$0.0744 & 0.2247$\pm$0.0060 & 1.4204$\pm$0.1377 \\
         \bottomrule
    \end{tabular}
    \label{tab:pbb_loss}
\end{table*}

\textbf{Hyperparameters for MNIST}
\begin{multicols}{2}
\begin{itemize}
    \item Portion of pretraining set : 20\%
    \item Number of pretraining epochs : 100
    \item Pretraining learning rate : $10^{-3}$
    \item Dropout probability : 0.0
    \item Optimizer : Adam 
    \item Training learning rate : $\{10^{-3}, 5\times 10^{-3},  10^{-4}\}$
    \item Weight decay : $\{0.01, 0.1\}$
    \item Smoothing parameter $\alpha$ : $10^{-3}$
    \item Softmax : $\{$ Smoothed, Clamped$\}$
    \item Standard deviation of prior : $\{10^{-2}, {5\times 10^{-2}}, 10^{-3}, 10^{-4}\}$
    \item Number of Monte Carlo sampling : 2000
\end{itemize}
\end{multicols}

\textbf{Hyperparameters for CIFAR10}
\begin{multicols}{2}
\begin{itemize}
    \item Portion of pretraining set : $\{60\%, 70\%\}$
    \item Number of pretraining epochs : 150
    \item Pretraining learning rate : $0.05$
    \item Dropout probability : 0.2
    \item Optimizer : SGD
    \item Training learning rate : $\{10^{-3}, 5\times 10^{-3},  10^{-4}\}$
    \item Weight decay : $10^{-4}$
    \item Smoothing parameter $\alpha$ : $10^{-3}$
    \item Softmax : $\{$ Smoothed, Clamped$\}$
    \item Standard deviation of prior : $\{10^{-2}, {5\times 10^{-2}}, 10^{-3}, 10^{-4}\}$
    \item Number of Monte Carlo sampling : 5000
\end{itemize}
\end{multicols}

\begin{table}[!t]
    \centering
    \caption{Generalization bounds on the zero-one loss achieved via model distillation on Amazon polarity using model compression (MC) bounds. All metrics presented are in percents (\%), except the size.}
    \begin{tabular}{lcccccccc}
    \toprule
        \multirow{2}{4em}{Model} & \multicolumn{3}{c}{Surrogate model} & \multicolumn{2}{c}{Target model}  \\ \cmidrule(l{2pt}r{2pt}){2-4} \cmidrule(l{2pt}r{2pt}){5-6}
        & Test error  & Size (KB) & MC Bound & Test error & Our bound\\ \midrule 
        DistilBERT & 7.61$\pm$0.07 & 2.02$\pm$0.01 &  14.37$\pm$0.08 & 6.19$\pm$0.15 & 21.35$\pm$0.32 \\
        GPT2 & 16.37$\pm$0.94 & 2.04$\pm$0.43 &  25.05$\pm$0.91 & 5.51$\pm$0.11 & 43.34$\pm$1.52 \\
         \bottomrule
    \end{tabular}
    \label{tab:amazon_distillation}
\end{table}

\subsection{Model Distillation on Amazon Polarity}

We start with DistilBERT and GPT2 with the pretrained weights of the Transformers library. We freeze the model and apply a SubLoRA model on the top half of the model. For the adaptative quantization, we use the implementation of \citet{lotfi2022pac}, a learning rate of $10^{-2}$ for DistilBERT and a learning rate of $10^{-4}$ for GPT2. We produce pseudo-labels for the unlabeled using the target model and train the model to classify correctly both the labeled set $S$ and the unlabeled set $U$ with the pseudo-labels. 

\begin{multicols}{2}
\begin{itemize}
    \item Max epochs : 5
    \item Dropout probability : 0.0
    \item Optimizer : Adam
    \item Training learning rate : $10^{-6}$
    \item Weight decay : $0.01$
    \item Dimension of the subspace : $\{5000, 10000\}$
    \item Level of quantization : 250
    \item Rank of LoRA : 4
\end{itemize}
\end{multicols}

\subsection{Quantization Experiments on Amazon Polarity}

We use the target models and quantize them following this hyperparameter grid. We do not consider the quantization-aware training with 2 bits, as both TorchAO and AO did not consider this setting.  
\begin{multicols}{2}
\begin{itemize}
    \item Max epochs : 2
    \item Dropout probability : 0.0
    \item Optimizer : Adam
    \item Training learning rate : $10^{-6}$
    \item Weight decay : $0.01$
    \item Number of bits : $\{2, 4, 8\}$
    \item Quantization-aware training : $\{$ Yes, No $\}$
    \item Delta for the Huber loss : 0.2
\end{itemize}
\end{multicols}

\end{document}